\newtheorem{theorem}{Theorem}
\newtheorem{proposition}{Proposition}
\newtheorem{lemma}{Lemma}
\newtheorem{assumption}{Assumption}
\newtheorem{claim}{Claim}
\title{In-Trajectory Inverse Reinforcement Learning: Learn Incrementally Before an Ongoing Trajectory Terminates}
\author{%
  Shicheng Liu \& Minghui Zhu\\%\thanks{Use footnote for providing further information
  %  about author (webpage, alternative address)---\emph{not} for acknowledging
  %  funding agencies.} \\
  School of Electrical Engineering and Computer Science\\
  Pennsylvania State University\\
  University Park, PA 16802, USA \\
  \texttt{\{sfl5539,muz16\}@psu.edu} \\
  % examples of more authors
  % \And
  % Coauthor \\
  % Affiliation \\
  % Address \\
  % \texttt{email} \\
  % \AND
  % Coauthor \\
  % Affiliation \\
  % Address \\
  % \texttt{email} \\
  % \And
  % Coauthor \\
  % Affiliation \\
  % Address \\
  % \texttt{email} \\
  % \And
  % Coauthor \\
  % Affiliation \\
  % Address \\
  % \texttt{email} \\
}

\begin{document}

\maketitle

\begin{abstract}
Inverse reinforcement learning (IRL) aims to learn a reward function and a corresponding policy that best fit the demonstrated trajectories of an expert. However, current IRL works cannot learn incrementally from an ongoing trajectory because they have to wait to collect at least one complete trajectory to learn. To bridge the gap, this paper considers the problem of learning a reward function and a corresponding policy while observing the initial state-action pair of an ongoing trajectory and keeping updating the learned reward and policy when new state-action pairs of the ongoing trajectory are observed. We formulate this problem as an online bi-level optimization problem where the upper level dynamically adjusts the learned reward according to the newly observed state-action pairs with the help of a meta-regularization term, and the lower level learns the corresponding policy. We propose a novel algorithm to solve this problem and guarantee that the algorithm achieves sub-linear local regret $O(\sqrt{T}+\log T+\sqrt{T}\log T)$. If the reward function is linear, we prove that the proposed algorithm achieves sub-linear regret $O(\log T)$. Experiments are used to validate the proposed algorithm. % and its performance is comparable to algorithms that require large amounts of complete-trajectory demonstrations. % which (\romannumeral 1) learns a reward function and a corresponding policy while observing an initial portion of a trajectory and keeps updating the learned reward and policy when extended portions (i.e., new state-action pairs) of the trajectory are observed; (\romannumeral 2) uses meta-regularization to embed prior knowledge and avoid overfitting so as to better infer the reward function. We formulate an online non-convex bi-level optimization problem where the upper level aims to learn a reward function incrementally from and guarantee that MERIT-IRL achieves sub-linear \textbf{local regret}. If the reward function is linear, we prove that MERIT-IRL achieves sub-linear \textbf{regret}. Experiments show that MERIT-IRL can learn the reward function and policy that fit the expert before the ongoing trajectory terminates.% and its performance is comparable to algorithms that require large amounts of complete-trajectory demonstrations.
\end{abstract}

\section{Introduction}
Inverse reinforcement learning (IRL) aims to learn a reward function and a corresponding policy that are consistent with the demonstrated trajectories of an expert. In recent years, several IRL methods are proposed to help learn the reward and policy, including maximum margin methods \cite{abbeel2004apprenticeship,ratliff2006maximum}, maximum entropy methods \cite{ziebart2008maximum,ziebart2010modeling}, maximum likelihood methods \cite{zengmaximum,liu2022distributed}, and Bayesian methods \cite{ramachandran2007bayesian,chan2021scalable}. %Inverse reinforcement learning (IRL) aims to learn a reward function and a corresponding policy that can best fit the demonstrations of an expert. \textcolor{blue}{While there is reward ambiguity issue \cite{ng2000algorithms,kim2021reward} faced by the IRL community, current IRL works have shown success in various applications \cite{abbeel2006application,kuderer2015learning}.} 

The aforementioned IRL works learn from pre-collected demonstration sets and do not improve the learned model during deployment. Online IRL \cite{rhinehart2017first,arora2019online,liu2023learning} instead can learn from sequentially arrived demonstrated trajectories and continuously improve the learned reward and policy from the newly observed complete trajectories. However, recent applications of IRL motivate the need to learn incrementally from an ongoing trajectory before it terminates. For example, inferring a moving shooter's intention from its ongoing movement in order to evacuate the hiding victims \cite{aghalari2021inverse} before the shooter finds them. In this case, we need to quickly update the inference about the shooter's intention once a new movement of the shooter is observed, so that we can use the latest inference to plan a rescue strategy as soon as possible. We cannot wait until the shooter trajectory ends, in case the shooter has found the victims. Another example is learning a target customer's investment preference from its daily updated investment trajectory in a stock market \cite{sun2023transaction} in order to recommend appropriate stocks \cite{chang2018stock,hu2023investor} before other competitors get this customer. However, current IRL works cannot learn from an ongoing trajectory because they have to wait to collect at least one complete trajectory to learn from. To bridge the gap, this paper proposes in-trajectory IRL, a new type of IRL that learns a reward function and a corresponding policy at the initial state-action pair of an ongoing trajectory, and keeps updating the learned reward and policy once a new state-action pair of the ongoing trajectory is observed. We summarize our contributions as follows:  %in practice, the demonstrated trajectory is observed sequentially, i.e., we  While these IRL methods perform on pre-collected demonstration sets, recent applications of IRL motivate the need of learning incrementally from an ongoing trajectory. For example, learning an investor's investment preference and strategy from its daily/monthly updated investment trajectory in a stock market \cite{sun2023transaction} and inferring a pedestrian's intent and future path by continuously observing its ongoing movement \cite{ziebart2009human,monfort2015intent,gaurav2019discriminatively}. However, current IRL works cannot learn from an ongoing trajectory because they have to wait to collect at least one complete trajectory to learn. Some recent works \cite{ziebart2009human,monfort2015intent,gaurav2019discriminatively} use IRL to predict the pedestrian's goal from ongoing trajectories, however, they do not learn a reward function and a policy from an ongoing trajectory. In specific, they first learn the reward function and policy corresponding to each potential goal candidate from pre-collected complete trajectories in the training phase and then use Bayesian methods to pick the most likely goal candidate of ongoing trajectories in the testing phase. However, these methods are not valid when the goal in the testing phase is not included in the training phase. In practice, we cannot guarantee that the reward function and policy corresponding to a specific ongoing trajectory have been learned before. Therefore, a new IRL framework is desired to learn a reward function and a policy from an ongoing trajectory.}%This means that they already know all the potential goal candidates in the pre-collected training set %However, these works do not learn incrementally from an ongoing trajectory.  %However, their learning stages have to wait until at least one complete trajectory is observed. As a result, IRL cannot be applied to scenarios where the trajectory is ongoing and we need to learn a reward function and a policy incrementally before the ongoing trajectory terminates. Here, we list two examples:

\textbf{Contribution statement}. This paper proposes the first in-trajectory IRL framework, termed \textbf{me}ta-\textbf{r}egularized \textbf{i}n-\textbf{t}rajectory inverse reinforcement learning" (MERIT-IRL), to learn a reward function and a corresponding policy from an ongoing trajectory. Our contributions are twofold.

First, we formulate this in-trajectory learning problem as an online bi-level optimization problem where the upper level continuously updates the learned reward according to the newly observed state-action pairs and the lower level computes the corresponding policy. We develop a novel online learning algorithm (MERIT-IRL) to solve this problem. The major novelty of the proposed algorithm is that we propose a novel reward update mechanism specially designed for the in-trajectory learning setting. This special reward update not only aims to explain the expert trajectory observed so far, but also aims to consider for the future. Moreover, since the data is lacking as there is only one ongoing trajectory, we introduce a meta-regularization term to embed prior knowledge and avoid overfitting.% so as to better learn the reward and policy.}

Second, we theoretically guarantee that MERIT-IRL achieves sub-linear local regret $O(\sqrt{T}+\log T+\sqrt{T}\log T)$. If the reward function is linear, we prove that MERIT-IRL achieves sub-linear regret $O(\log T)$. The major novelty of the theoretical analysis is to address the difficulty that the input data is not identically independent distributed (i.i.d.) but temporally correlated, i.e., the data $(s_t,a_t)$ at each time $t$ is affected by the data $(s_{t-1},a_{t-1})$ at last time.% We propose a novel three-step strategy to solve this difficulty. Experiments are used to validate the proposed framework.}% and its learning performance is comparable to algorithms that require large amounts of complete-trajectory demonstrations.

\section{Related works}
Due to the space limit, we only discuss the related works on learning from incomplete trajectories here, and we include the discussion on more related works in Appendix \ref{sec: related works}.

Papers \cite{xu2022receding,swamy2023inverse} use partial trajectories to update the learned reward function by comparing the expert trajectories after the expert states and the trajectories starting from those expert states rolled out by the learned policy. However, given the current expert state, they use expert trajectory suffix (i.e., future trajectory) to compare while we can only access expert trajectory prefix (i.e., previous trajectory) from an ongoing trajectory. Papers \cite{yan2024simple,sun2019adversarial,xu2021arail} use imitation learning to learn from incomplete demonstrations. In specific, paper \cite{yan2024simple} uses a discounted sum along the future trajectory as the weight for weighted behavior cloning and works effectively even if only portions of trajectories are observed. Papers \cite{sun2019adversarial,xu2021arail} extend GAIL \cite{ho2016generative} to solve for the case where the action sequences are not complete.  However, these works all require a pre-collected set of demonstrations so that they are not in-trajectory learning since the trajectory in their cases is not ongoing.

\section{Problem Formulation}\label{sec: problem formulation}
In this section, we formulate the problem of in-trajectory IRL. In in-trajectory IRL, there is an expert whose decision-making is based on a Markov decision process (MDP). An MDP is a tuple $(\mathcal{S},\mathcal{A},\gamma,r_E,P_0,P)$ which consists of a state set $\mathcal{S}$, an action set $\mathcal{A}$, a discount factor $\gamma\in [0,1)$, a reward function $r_E:\mathcal{S}\times\mathcal{A}\rightarrow \mathbb{R}$, and the initial state distribution $P_0(\cdot)$. The state transition probability (density) function is denoted by $P(\cdot|\cdot,\cdot)$ such that $P(s'|s,a)$ denotes the probability (density) of state transition to $s'$ from $s$ by taking action $a$. The expert is using its policy $\pi_E$ to demonstrate an ongoing trajectory $\zeta^E=S_0^E,A_0^E,S_1^E,A_1^E,\cdots$ and at each time $t$, only the state-action pair $(S_t^E,A_t^E)$ is observed. We want to learn a reward function and a corresponding policy from the ongoing trajectory and update the learned reward function and policy at each time $t$.

Many IRL algorithms \cite{zengmaximum,liu2022distributed,liu2023learning,qiao2024multi,liu2023meta,qiaomodelling} employ a bi-level learning structure. In this structure, the upper level learns a reward function while the lower level aims to find an associated policy by solving an RL problem under the current learned reward function. Inspired by their bi-level learning structure, we formulate the in-trajectory IRL problem as an online non-convex bi-level optimization problem. In specific, we aim to learn a reward function $r_{\theta}$ (parameterized by $\theta$) in the upper level and a policy corresponding to $r_{\theta}$ in the lower level. The loss function at time $t$ is defined as:
\begin{align}
    L_t(\theta;(S_t^E,A_t^E))&\triangleq -\gamma^t\log\pi_{\theta}(A_t^E|S_t^E)+\frac{\lambda\gamma^t}{2}||\theta-\bar{\theta}||^2, \quad \pi_{\theta}=\mathop{\arg\max}_{\pi} J_{\theta}(\pi)+H(\pi). \label{eq: loss function}
\end{align}
Note that $L_t(\theta;(S_t^E,A_t^E))$ is defined using $\pi_{\theta}$ and $\theta$ is the parameter of the reward function $r_{\theta}$, here the policy $\pi_{\theta}$ is also parameterized by $\theta$ because it is computed by solving an RL problem (in the lower level) under the reward function $r_{\theta}$, and thus is indirectly parameterized by $\theta$. Maximum likelihood IRL (ML-IRL) \cite{zengmaximum} has a similar bi-level formulation with \eqref{eq: loss function}, however, ML-IRL only solves an offline optimization problem and its analysis does not hold for non i.i.d. input data and continuous state-action space. We discuss our distinctions from ML-IRL in Appendix \ref{subsec: distinctions from ML-IRL}.

The upper-level loss function $L_t$ has two terms. The first term $-\gamma^t\log\pi_{\theta}(A_t^E|S_t^E)$ is the discounted negative log-likelihood of the state-action pair $(S_t^E,A_t^E)$ at time $t$ and the second term $\frac{\lambda\gamma^t}{2}||\theta-\bar{\theta}||^2$ is the discounted meta-regularization term \cite{rajeswaran2019meta} where $\lambda$ is a hyper-parameter. The likelihood function is commonly used in IRL \cite{zengmaximum,liu2022distributed} to learn a reward function. Basically, the upper-level loss function at time $t$ encourages to find a reward function $r_{\theta}$ that makes the observed state-action pair $(S_t^E,A_t^E)$ most likely and meanwhile, the reward parameter $\theta$ should not be too far from the prior experience, i.e., the meta-prior $\bar{\theta}$. Note that $\bar{\theta}$ is a pre-trained meta-prior that embeds the information of ``relevant experience". We will introduce the training of $\bar{\theta}$ in Subsection \ref{sec: meta-regularization} and Appendix \ref{sec: meta-regularization convergence guarantee}.

The lower-level problem is used to compute $\pi_{\theta}$ using the current reward function $r_{\theta}$. It proposes to find a policy $\pi_{\theta}$ that maximizes the entropy-regularized cumulative reward $J_{\theta}(\pi)+H(\pi)$. The cumulative reward of a policy $\pi$ under the reward function $r_{\theta}$ is $J_{\theta}(\pi)\triangleq E_{S,A}^{\pi}[\sum_{t=0}^{\infty}\gamma^t r_{\theta}(S_t,A_t)]$ where the initial state is drawn from $P_0$. The causal entropy of a policy $\pi$ is defined as $H(\pi)\triangleq E_{S,A}^{\pi}[-\sum_{t=0}^{\infty}\gamma^t\log\pi(A_t|S_t)]$.

Since the expert demonstrates $\{(S_t^E,A_t^E)\}_{t\geq 0}$ sequentially, we have a sequence of loss functions $\{L_t(\theta;(S_t^E,A_t^E))\}_{t\geq 0}$. We use this sequence of loss functions \eqref{eq: loss function} to formulate an online learning problem. A typical online learning problem is to minimize the regret: $\sum_{t=0}^{T-1} L_t(\theta_t;(S_t^E,A_t^E))-\min_{\theta}\sum_{t=0}^{T-1} L_t(\theta;(S_t^E,A_t^E))$. However, it is too challenging to minimize the regret in our case because the loss function $L_t$ could be non-convex. Therefore, we aim to minimize the local regret which is widely adopted in online non-convex optimization \cite{hazan2017efficient,hallak2021regret} and online IRL \cite{liu2023learning}. The local regret quantifies the general stationarity of a sequence of loss functions under the learned parameters. In specific, given a sequence of loss functions $\{f_t(x)\}_{t\geq0}$, the local regret \cite{liu2023learning,hazan2017efficient,hallak2021regret} at time $t$ is defined as $||\frac{1}{t+1}\sum_{i=0}^t\nabla f_i(x_t)||^2$ which quantifies the gradient norms of the average of all the previous loss functions under the current learned parameter $x_t$. The total local regret is defined as the sum of the local regret at each time $t$, i.e., $\sum_{t=0}^{T-1}||\frac{1}{t+1}\sum_{i=0}^t\nabla f_i(x_t)||^2$. In our case, we replace $\{f_t\}_{t\geq 0}$ with the loss function $\{L_t\}_{t\geq 0}$ defined in \eqref{eq: loss function} and thus formulate the local regret \eqref{eq: upper problem}-\eqref{eq: lower problem} which has a bi-level formulation. We aim to minimize the following local regret:
\begin{align}
    & E_{\{(S_t^E,A_t^E)\sim \mathbb{P}_t^{\pi_E}(\cdot,\cdot)\}_{t\geq 0}}\biggl[\sum_{t=0}^{T-1}||\frac{1}{t+1}\sum_{i=0}^t\nabla L_i(\theta_t;(S_i^E,A_i^E))||^2\biggr],\label{eq: upper problem}\\
     &\text{s.t.} \quad \pi_{\theta_t}=\mathop{\arg\max}_{\pi} J_{\theta_t}(\pi)+H(\pi), \label{eq: lower problem}
\end{align}
where $(S_t^E,A_t^E)\sim \mathbb{P}_t^{\pi_E}(\cdot,\cdot)$ means that $(S_t^E,A_t^E)$ is drawn from the state-action distribution $\mathbb{P}_t^{\pi_E}(\cdot,\cdot)$, and $\mathbb{P}_t^{\pi}(\cdot,\cdot)$ is the state-action distribution induced by $\pi$ at time $t$ in the MDP. 

\textbf{Difficulties of solving problem \eqref{eq: upper problem}-\eqref{eq: lower problem}}.
We want to design a fast algorithm to solve problem \eqref{eq: upper problem}-\eqref{eq: lower problem} since we need to finish the update of $\theta$ and $\pi$ before the next state-action pair is observed and the time between two consecutive state-action pairs can be short. However, designing and analyzing such a fast algorithm is difficult due to the following challenges: 

(\romannumeral 1) First and foremost, current state-of-the-arts \cite{hazan2017efficient,hallak2021regret} on online non-convex optimization use follow-the-leader-based algorithms which solve $\min_{\theta}\sum_{i=0}^tL_i(\theta;(S_i^E,A_i^E))$ to near stationarity at each time $t$. This is time-consuming because they require multiple gradient descent updates of $\theta$. One way to alleviate this problem is to use online gradient descent (OGD) which only updates $\theta$ by one gradient descent step at each time $t$. However, since OGD does not solve the problem to near stationarity at any time $t$, it is extremely difficult to quantify the overall stationarity after $T$ iterations. While OGD has been well studied in online convex optimization, it is rarely studied in online non-convex optimization. The recent work \cite{liu2023learning} uses OGD to quantify the local regret, however, its analysis can only hold when the input data is i.i.d. In contrast, the input data in our problem is not i.i.d. In specific, the input data at time $t$ (i.e., $(S_t^E,A_t^E)$) is actually affected by the input data at last step (i.e., $(S_{t-1}^E,A_{t-1}^E)$). This correlation between any two consecutive input data makes it difficult to analyze the growth rate of the local regret. 

(\romannumeral 2) Second, it is time-consuming if we fully solve the lower-level problem \eqref{eq: lower problem} to get $\pi_{\theta}$ because this requires multiple policy updates to solve an RL problem. Therefore, we use a ``single-loop" method which only requires one-step policy update for a given $r_{\theta}$. However, since the policy is only updated once, the updated policy can be far from $\pi_{\theta}$ and thus making the analysis difficult. Single-loop methods are widely adopted to solve hierarchical problems, including bi-level optimization \cite{hong2020two,chen2022single}, game theory \cite{schafer2019competitive,varma2021distributed}, min-max problems \cite{boyd2004convex,zhang2020single}, etc. Recently, single-loop methods are applied to IRL \cite{zengmaximum}, however, the paper \cite{zengmaximum} only solves an offline optimization problem and its analysis does not hold for non i.i.d. input data and continuous state-action space. We include a section in Appendix \ref{subsec: distinctions from ML-IRL} to discuss our distinctions from \cite{zengmaximum}.

\section{Algorithm and Theoretical Analysis}
This section has three parts. The first part presents a novel online learning algorithm that solves the problem \eqref{eq: upper problem}-\eqref{eq: lower problem} and tackles the aforementioned two difficulties. The second part proves that Algorithm \ref{alg: MERIT} achieves sub-linear local regret. If the reward function is linear, we prove that Algorithm \ref{alg: MERIT} achieves sub-linear regret. The third part introduces a meta-learning method to get the meta-prior $\bar{\theta}$.

\subsection{The proposed algorithm}\label{subsec: proposed algorithm}
In practice, the expert will demonstrate a specific trajectory $s_0^E,a_0^E,s_1^E,a_1^E,\cdots$. For distinction, we use the capital letters (e.g., $S$) to represent random variables and the lower-case letters (e.g., $s$) to represent specific values. To design a fast algorithm, we propose an online-gradient-descent-based single-loop algorithm. In specific, at each time $t$, the algorithm updates both policy $\pi$ and reward parameter $\theta$ only once. The policy update is to solve the lower-level problem \eqref{eq: lower problem} and the reward update is to solve the upper-level problem \eqref{eq: upper problem}. In the following, we elaborate the procedure of policy update and reward update.

\begin{algorithm}[H]
\caption{Meta-regularized In-trajectory Inverse Reinforcement Learning (MERIT-IRL)}\label{alg: MERIT}
\textbf{Input}: Initialized policy $\pi_0$, the streaming input data $\{(s_t^E,a_t^E)\}_{t\geq 0}$\\
\textbf{Output}: Learned reward parameter $\theta_T$ and policy $\pi_T$
\begin{algorithmic}[1]
\STATE Compute $\bar{\theta}$ using the meta-regularization in Section \ref{sec: meta-regularization} and Appendix \ref{sec: meta-regularization convergence guarantee}, and set $\theta_0=\bar{\theta}$
\FOR{$t=0,1,\cdots,T-1$}
  \STATE Compute the soft Q-function $Q_{\theta_t,\pi_t}^{\text{soft}}$ (defined in Appendix \ref{subsec: soft Q and soft policy}) under the current reward function $r_{\theta_t}$ and policy $\pi_t$ \label{algline: policy evaluation11}
  \STATE Update $\pi_{t+1}(a|s) \propto \exp(Q_{\theta_t,\pi_t}^{\text{soft}}(s,a))$ for any $(s,a)\in\mathcal{S}\times\mathcal{A}$ \label{algline: policy improvement}
  \STATE Roll out policy $\pi_{t+1}$ twice: one starting from $s_0^E$ to get $s_0^E,a_0',s_1',a_1',\cdots$, and the other starting from $(s_t^E,a_t^E)$ to get $s_{t+1}'',a_{t+1}'',\cdots$ \label{algline: policy rollout}
  \STATE Compute $g_t=\sum_{i=0}^{\infty}\gamma^i\nabla_{\theta}r_{\theta_t}(s_i',a_i')-\sum_{i=0}^{\infty}\gamma^i\nabla_{\theta}r_{\theta_t}(s_i'',a_i'')+\frac{\lambda(1-\gamma^{t+1})}{1-\gamma}(\theta_t-\bar{\theta})$ where $s_0'=s_0^E$ and $(s_i'',a_i'')=(s_i^E,a_i^E)$ for $0\leq i\leq t$\label{algline: gradient calculation}
  \STATE Update $\theta_{t+1}=\theta_t-\alpha_tg_t$ \label{algline: reward parameter update}
\ENDFOR
\end{algorithmic}
\end{algorithm}

\textbf{Policy update} (lines 3-4 of Algorithm \ref{alg: MERIT}). At each time $t$, we only partially solve the lower-level problem \eqref{eq: lower problem} via one-step soft policy iteration \cite{zengmaximum,haarnoja2017reinforcement}. In specific, the soft policy iteration contains two steps: policy evaluation and policy improvement. Policy evaluation aims to compute the soft $Q$-function $Q_{\theta_t,\pi_t}^{\text{soft}}$ (see the expression in Appendix \ref{subsec: soft Q and soft policy}) under the current learned reward function $r_{\theta_t}$ and learned policy $\pi_{t}$. Policy improvement aims to update policy according to $\pi_{t+1}(s,a)\propto \exp(Q_{\theta_t,\pi_t}^{\text{soft}}(s,a))$ for any $(s,a)\in\mathcal{S}\times\mathcal{A}$. In practical implementations, $\pi_{t+1}$ can be obtained by one-step policy update in soft Q-learning \cite{haarnoja2017reinforcement} or one-step actor update in soft actor-critic \cite{haarnoja2018soft}.

\textbf{Reward update} (lines 5-7 of Algorithm \ref{alg: MERIT}). At each time $t$, the algorithm observes $(s_t^E,a_t^E)$ and aims to leverage all the previously observed data to update the reward parameter. In specific, as $L_t(\theta;(s_t^E,a_t^E))=-\gamma^t\log\pi_{\theta}(a_t^E|s_t^E)+\frac{\lambda\gamma^t}{2}||\theta-\bar{\theta}||^2$ is the meta-regularized negative log-likelihood of $(s_t^E,a_t^E)$, the algorithm can formulate $\sum_{i=0}^tL_i(\theta;(s_i^E,a_i^E))$ at time $t$ using all the previously collected data (i.e., $\{(s_i^E,a_i^E)\}_{i=0,\cdots,t}$). To update the reward parameter, the algorithm partially minimizes $\sum_{i=0}^t L_i(\theta;(s_i^E,a_i^E))$ via one-step gradient descent.

\begin{lemma}\label{lemma: in-trajectory gradient}
The gradient of $\sum_{i=0}^t L_i(\theta;(s_i^E,a_i^E))$ can be calculated as follows:
\begin{align}
    &\nabla \sum_{i=0}^t  L_i(\theta;(s_i^E,a_i^E))=E_{S,A}^{\pi_{\theta}}\biggl[\sum_{i=0}^{\infty}\gamma^i\nabla_{\theta}r_{\theta}(S_i,A_i)\biggl|S_0=s_0^E\biggr]+\frac{\lambda(1-\gamma^{t+1})}{1-\gamma}(\theta-\bar{\theta})\notag\\
    &-\sum_{i=0}^t\gamma^i\nabla_{\theta}r_{\theta}(s_i^E,a_i^E)-E_{S,A}^{\pi_{\theta}}\biggl[\sum_{i=t+1}^{\infty}\gamma^i\nabla_{\theta}r_{\theta}(S_i,A_i)\biggl|S_t=s_t^E,A_t=a_t^E\biggr]. \label{eq: gradient}
\end{align}
\end{lemma}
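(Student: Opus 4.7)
The plan is to split $L_i = -\gamma^i \log\pi_\theta(a_i^E|s_i^E) + \tfrac{\lambda\gamma^i}{2}\|\theta-\bar\theta\|^2$ and differentiate each summand. The meta-regularization piece sums geometrically to $\tfrac{\lambda(1-\gamma^{t+1})}{1-\gamma}(\theta-\bar\theta)$, reproducing the second term of the claim, so all remaining work goes into $-\sum_{i=0}^t\gamma^i\nabla_\theta\log\pi_\theta(a_i^E|s_i^E)$.

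For this log-likelihood part, I would first invoke the soft-Bellman characterization of the lower level: because $\pi_\theta$ maximizes $J_\theta(\pi)+H(\pi)$, it takes the form $\pi_\theta(a|s)\propto\exp(Q^{\text{soft}}_\theta(s,a))$, so $\log\pi_\theta(a|s) = Q^{\text{soft}}_\theta(s,a) - V^{\text{soft}}_\theta(s)$ with $Q^{\text{soft}}_\theta(s,a)=r_\theta(s,a)+\gamma E_{s'|s,a}[V^{\text{soft}}_\theta(s')]$. Applying the envelope theorem at the soft-optimal policy (so that first-order effects of perturbing $\pi$ vanish) gives the identities $\nabla_\theta V^{\text{soft}}_\theta(s) = E^{\pi_\theta}[\sum_{k=0}^\infty \gamma^k \nabla_\theta r_\theta(S_k,A_k)|S_0=s]$ and $\nabla_\theta Q^{\text{soft}}_\theta(s,a) = \nabla_\theta r_\theta(s,a) + \gamma E_{s'|s,a}[\nabla_\theta V^{\text{soft}}_\theta(s')]$. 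Substituting into $\nabla_\theta\log\pi_\theta = \nabla_\theta Q^{\text{soft}}_\theta - \nabla_\theta V^{\text{soft}}_\theta$ gives, for each $i$, $-\gamma^i\nabla_\theta\log\pi_\theta(a_i^E|s_i^E) = \gamma^i\nabla V^{\text{soft}}_\theta(s_i^E) - \gamma^i\nabla r_\theta(s_i^E,a_i^E) - \gamma^{i+1}E_{s'|s_i^E,a_i^E}[\nabla V^{\text{soft}}_\theta(s')]$.

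Summing $i=0,\ldots,t$, the middle term directly produces $-\sum_{i=0}^t\gamma^i\nabla r_\theta(s_i^E,a_i^E)$, matching the third term of the lemma. The remaining sums $\sum_{i=0}^t\gamma^i\nabla V^{\text{soft}}_\theta(s_i^E)$ and $-\sum_{i=0}^t\gamma^{i+1}E_{s'|s_i^E,a_i^E}[\nabla V^{\text{soft}}_\theta(s')]$ are meant to telescope along the expert trajectory, collapsing to the endpoints $\nabla V^{\text{soft}}_\theta(s_0^E) - \gamma^{t+1}E_{s'|s_t^E,a_t^E}[\nabla V^{\text{soft}}_\theta(s')]$; unrolling these endpoints via the soft-Bellman recursion rewrites them as the infinite discounted sums of $\nabla_\theta r_\theta$ stated as the first and fourth terms. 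The hard part will be the telescoping step: pairing $\gamma^{i+1}E_{s'|s_i^E,a_i^E}[\nabla V^{\text{soft}}_\theta(s')]$ with $\gamma^{i+1}\nabla V^{\text{soft}}_\theta(s_{i+1}^E)$ uses the Markov transition $S_{i+1}^E\sim P(\cdot|s_i^E,a_i^E)$ governing the expert rollout, so the identification goes through under the expectation over the expert trajectory that appears in the local-regret definition \eqref{eq: upper problem}.
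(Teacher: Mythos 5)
Your proposal is correct and follows essentially the same route as the paper's proof: split off the geometric sum of the regularizer, write $-\gamma^i\nabla_\theta\log\pi_\theta=-\gamma^i(\nabla_\theta Q^{\text{soft}}_\theta-\nabla_\theta V^{\text{soft}}_\theta)$, telescope along the expert trajectory via the soft Bellman recursion, and unroll the two endpoint value-gradients as expected discounted sums of $\nabla_\theta r_\theta$. The only cosmetic difference is that the paper establishes the identity $\nabla_\theta V^{\text{soft}}_\theta(s)=E^{\pi_\theta}_{S,A}[\sum_{k\ge 0}\gamma^k\nabla_\theta r_\theta(S_k,A_k)\,|\,S_0=s]$ by directly differentiating the log-partition function (Lemma \ref{lemma: gradient of log pi}) rather than by citing the envelope theorem, and it likewise treats the deterministic case exactly and the stochastic case as an unbiased estimate under the trajectory expectation, just as you indicate.
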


Note that the gradient \eqref{eq: gradient} holds for continuous state-action space. Since the gradient \eqref{eq: gradient} has expectation terms under the policy $\pi_{\theta}$, we can only approximate it. In specific, we roll out the policy $\pi_{t+1}$ twice: one starting from $s_0^E$ to get a trajectory $\{(s_i',a_i')\}_{i\geq 0}$ where $s_0'=s_0^E$, and the other one starting from $(s_t^E,a_t^E)$ to get a trajectory $\{(s_i'',a_i'')\}_{i\geq 0}$ where $(s_i'',a_i'')=(s_i^E,a_i^E)$ for $0\leq i\leq t$. Then we use the empirical estimate $g_t=\sum_{i=0}^{\infty}\gamma^i\nabla_{\theta}r_{\theta_t}(s_i',a_i')-\sum_{i=0}^{\infty}\gamma^i\nabla_{\theta}r_{\theta_t}(s_i'',a_i'')+\frac{\lambda(1-\gamma^{t+1})}{1-\gamma}(\theta_t-\bar{\theta})$ to approximate $\nabla \sum_{i=0}^t L_i(\theta_t;(s_i^E,a_i^E))$. With the gradient approximation $g_t$, we utilize stochastic online gradient descent $\theta_{t+1}=\theta_t-\alpha_tg_t$ to update the reward parameter.

\textbf{Discussion on our special design of the reward update}. The right subfigure in Figure \ref{fig:algorithm comparison} visualizes our reward update (modulo the meta-regularization term). The green trajectory (i.e., $\{(s_t^E,a_t^E)\}_{t\geq 0}$) is the expert trajectory, and the red trajectories (i.e., $\{(s_t',a_t')\}_{t\geq 0}$ and $\{(s_i'',a_i'')\}_{i> t}$) are the trajectories generated by the learned policy. Given the expert trajectory prefix (i.e., the incomplete trajectory $\{(s_i^E,a_i^E)\}_{i=0}^t$ observed so far), our method completes the expert trajectory by rolling out the learned policy starting from $(s_t^E,a_t^E)$ and filling the trajectory suffix $\{(s_i'',a_i'')\}_{i> t}$. The combined complete trajectory includes the expert trajectory prefix $\{(s_i^E,a_i^E)\}_{i=0}^t$ and the learner-filled trajectory suffix $\{(s_i'',a_i'')\}_{i\geq t}$. We update the reward function by comparing this combined trajectory to a complete trajectory $\{(s_t',a_i')\}_{t\geq 0}$ generated by the learned policy starting from the expert's initial state $s_0^E$.

A more straightforward way for the reward update is to directly compare the trajectory prefixes (visualized in the middle of Figure \ref{fig:algorithm comparison}) at each time $t$. However, this naive method can be problematic. We explain the issue of this naive method and the advantage of our method in the following context.

\begin{wrapfigure}[15]{l}{8.5cm}
\vspace{-3mm}
    \centering
    \includegraphics[width=0.6\textwidth]{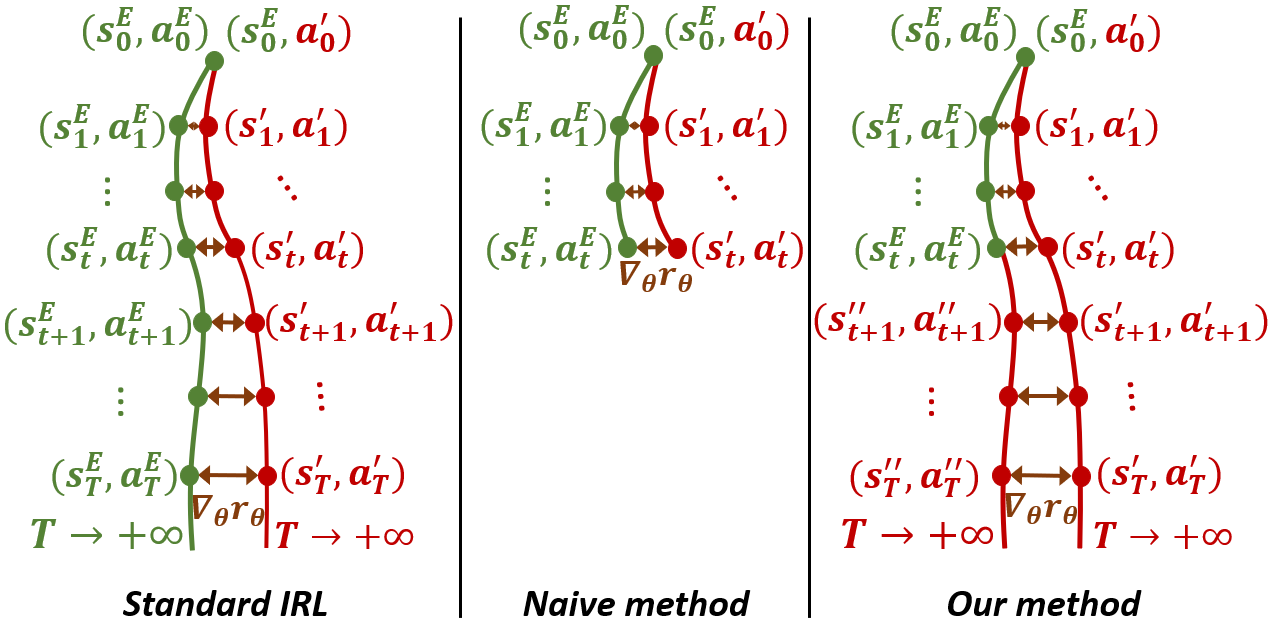}
    \caption{Standard IRL (left), the naive method for in-trajectory learning (middle), and our method (right).}
    \label{fig:algorithm comparison}
\end{wrapfigure}

Figure \ref{fig:algorithm comparison} visualizes the reward update for standard IRL (left), the naive method (i.e., directly run standard IRL algorithms on the expert trajectory prefix) (middle), and our method (right). The standard IRL (left) updates the reward function by comparing the complete expert trajectory and the complete trajectory generated by the learned policy. This case is ideal, however, it is infeasible when the trajectory is ongoing and we can only observe an incomplete expert trajectory $\{(s_i^E,a_i^E)\}_{i=0}^t$ at each time $t$. The naive method (middle) updates the reward function by simply comparing the expert trajectory prefix $\{(s_i^E,a_i^E)\}_{i=0}^t$ and the incomplete trajectory $\{(s_i',a_i')\}_{i=0}^t$ with the same length generated by the learned policy starting from the expert's initial state $s_0^E$. This kind of reward update is myopic as it does not consider for the future. Since it runs standard IRL on the trajectory prefix observed so far, it will always regard the current state-action pair as the terminal state-action pair and thus has no ability to consider the state-action pairs in the future. In contrast, our special design gives the algorithm the ability to consider for the future. Instead of only comparing incomplete trajectory prefixes, our method compares complete trajectories just as the standard IRL. In specific, we compare the combined complete trajectory $\{\{(s_i^E,a_i^E)\}_{i=0}^t,\{(s_i'',a_i'')\}_{i>t}\}$ and the complete trajectory $\{(s_i',a_i')\}_{i\geq 0}$ solely generated by the learned policy. The comparison between the learner prefix $\{s_i',a_i'\}_{i=0}^t$ and expert prefix $\{s_i^E,a_i^E\}_{i=0}^t$ encourages the learned reward function to explain the expert's demonstrated behaviors so far, and the comparison between the suffixes ($\{s_i',a_i'\}_{i>t}$ and $\{s_i'',a_i''\}_{i>t}$) encourages that we are learning a reward function that is useful for predicting the future. Note that as $t$ increases, the expert trajectory prefix weights more and more in the combined complete trajectory, and eventually we will recover the standard IRL reward update when $t$ goes to infinity. In Theorems \ref{thm: local regret under correlated distribution} and \ref{thm: regret under linear reward}, we theoretically guarantee that the proposed reward update can achieve sub-linear (local) regret. This shows the perfect consistency between the intuition and theory.

\subsection{Theoretical analysis}\label{subsec: theoretical analysis}
To quantify the local regret of Algorithm \ref{alg: MERIT}, we have two challenges: (\romannumeral 1) Since we only update $\pi$ by one step at each time $t$, we have $\pi_{t+1}$ instead of the optimal solution $\pi_{\theta_t}$ of the lower-level problem \eqref{eq: lower problem}. The policy $\pi_{t+1}$ can be far away from $\pi_{\theta_t}$ and thus the empirical gradient estimate $g_t$ can be a bad approximation of the gradient \eqref{eq: gradient}. (\romannumeral 2) Since we only update $\theta$ once at each time $t$ instead of finding a near-stationary point $\theta'$ such that $||\sum_{i=0}^t L_i(\theta';(s_i^E,a_i^E))||\leq\epsilon$ as in \cite{hazan2017efficient,hallak2021regret}, the gradient norm $||\sum_{i=0}^{t}\nabla L_i(\theta_t;(s_i^E,a_i^E))||$ is not stabilized under the threshold $\epsilon$ at every time $t$. Therefore the local regret (i.e., $\sum_{t=0}^{T-1}||\frac{1}{t+1}\sum_{i=0}^{t}\nabla L_i(\theta_t;(s_i^E,a_i^E))||^2$) is hard to quantify and may not be sub-linear in $T$. What's worse, the input data is not i.i.d. but correlated, i.e., the input data $(s_t^E,a_t^E)$ at time $t$ is affected by the input data $(s_{t-1}^E,a_{t-1}^E)$ at last step. This correlation makes it even more difficult to quantify the local regret.

To solve the first challenge, we adopt the idea of two-timescale stochastic approximation \cite{hong2020two} where the lower level updates in a faster timescale and the upper level updates in a slower timescale. The policy update is faster because it converges linearly under a fixed reward function \cite{cen2022fast} while the reward update is slower given that we choose $\alpha_t\propto (t+1)^{-1/2}$. Intuitively, since the policy update is faster than the reward update, the reward parameter is ``relatively fixed" compared to the policy. It is expected that $\pi_{t+1}$ shall stay close to $\pi_{\theta_t}$ and at last converges to $\pi_{\theta_t}$ when $t$ increases. %ML-IRL \cite{zengmaximum} uses the same idea to analyze the convergence in their offline optimization and finite state-action space, however, our analysis holds for in-trajectory learning and continuous state-action space.

To solve the second challenge, we divide our analysis into two steps: (\romannumeral 1) We quantify the difference of the gradient norms between the current correlated state-action distribution $\mathbb{P}_t^{\pi_E}(\cdot,\cdot)$ and a stationary state-action distribution for any loss function $L_i$, $i\geq 0$. Note that $\mathbb{P}_{t+1}^{\pi_E}(\cdot,\cdot)$ is affected by $\mathbb{P}_t^{\pi_E}(\cdot,\cdot)$. (\romannumeral 2) We quantify the local regret under the stationary distribution. The benefit of doing so is that the input data is i.i.d. under the stationary distribution, and thus we can cast the online gradient descent method as a stochastic gradient descent method and quantify its local regret. Finally, we can quantify the local regret under the current correlated distribution $\mathbb{P}_t^{\pi_E}(\cdot,\cdot)$ by combining (\romannumeral 1) and (\romannumeral 2).

We start our analysis with the definitions of stationary state distribution and stationary state-action distribution. For a given policy $\pi$, the corresponding stationary state distribution is $\mu^{\pi}(s)\triangleq (1-\gamma)\sum_{t=0}^{\infty}\gamma^t\mathbb{P}_t^{\pi}(s)$ and the stationary state-action distribution is $\mu^{\pi}(s,a)\triangleq (1-\gamma)\sum_{t=0}^{\infty}\gamma^t\mathbb{P}_t^{\pi}(s,a)$.

\begin{assumption}\label{ass: parametric model constants}
The parameterized reward function $r_{\theta}$ satisfies $|r_{\theta_1}(s,a)-r_{\theta_2}(s,a)|\leq \bar{C}_r||\theta_1-\theta_2||$ and $||\nabla_{\theta}r_{\theta_1}(s,a)-\nabla_{\theta}r_{\theta_2}(s,a)||\leq \tilde{C}_r||\theta_1-\theta_2||$ for any $(\theta_1,\theta_2)$ and any $(s,a)\in\mathcal{S}\times\mathcal{A}$ where $\bar{C}_r$ and $\tilde{C}_r$ are positive constants.
\end{assumption}
\begin{assumption}[Ergodicity]\label{ass: ergodicity}
There exist constants $C_M>0$ and $\rho\in(0,1)$ such that for any policy $\pi$ and any $t\geq 0$, the following holds for the Markov chain induced by the policy $\pi$ and the state transition function $P$: $\sup_{S_0\sim P_0}d_{\text{TV}}(\mathbb{P}_t^{\pi}(\cdot),\mu^{\pi}(\cdot))\leq C_M\rho^t$ where $d_{\text{TV}}(\mathbb{P}_1(\cdot),\mathbb{P}_2(\cdot))\triangleq \frac{1}{2}\int_{s\in\mathcal{S}}|\mathbb{P}_1(s)-\mathbb{P}_2(s)|ds$ is the total variation distance between the two state distributions $\mathbb{P}_1$ and $\mathbb{P}_2$, $S_0$ is the initial state, and $\mathbb{P}_t^{\pi}(\cdot)$ is the state distribution induced by the policy $\pi$ at time $t$. 
\end{assumption}
Assumptions \ref{ass: parametric model constants}-\ref{ass: ergodicity} are common in RL \cite{zheng2023federated,pmlr-v120-gong20a,zheng2024federated,zheng2024gapdependentboundsqlearningusing}. Assumption \ref{ass: ergodicity} holds for any time-homogeneous Markov chain with finite state space or any uniformly ergodic Markov chain with general state space.

\begin{proposition}\label{prop: gradient gap between distributions}
Suppose Assumptions \ref{ass: parametric model constants}-\ref{ass: ergodicity} hold and $\alpha_t\in(0,\frac{1-\gamma}{\lambda})$, we have the following relation for any $i\geq 0$ and any $\theta_t, t\geq 0$:
\begin{align*}
    &\bigl|E_{(S_i^E,A_i^E)\sim \mathbb{P}_i^{\pi_E}(\cdot,\cdot)}\bigl[||\nabla L_i(\theta_t;(S_i^E,A_i^E))||^2\bigr]-E_{(S_i^E,A_i^E)\sim\mu^{\pi_E}(\cdot,\cdot)}\bigl[||\nabla L_i(\theta_t;(S_i^E,A_i^E))||^2\bigr]\bigr|,\\
    &\leq 8C_M\bar{C}_r^2\bigl(\frac{2-\gamma}{1-\gamma}\bigr)^2\rho^i\gamma^{2i},
\end{align*}
where $(S_i^E,A_i^E)\sim \mathbb{P}_i^{\pi_E}(\cdot,\cdot)$ means that $(S_i^E,A_i^E)$ is drawn from the correlated distribution $\mathbb{P}_i^{\pi_E}(\cdot,\cdot)$ and $(S_i^E,A_i^E)\sim \mu^{\pi_E}(\cdot,\cdot)$ means that $(S_i^E,A_i^E)$ is drawn from the stationary distribution $\mu^{\pi_E}(\cdot,\cdot)$.
\end{proposition}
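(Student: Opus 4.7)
The plan is to invoke the standard change-of-measure bound
\begin{equation*}
|E_{\mathbb{P}_1}[f] - E_{\mathbb{P}_2}[f]| \leq 2\|f\|_{\infty}\, d_{\text{TV}}(\mathbb{P}_1, \mathbb{P}_2)
\end{equation*}
with $f(s,a) := \|\nabla L_i(\theta_t;(s,a))\|^2$, then supply (i) a TV bound on the two state-action distributions and (ii) a uniform $L^{\infty}$ bound on $\nabla L_i(\theta_t;\cdot)$. Step (i) is immediate: since $\mathbb{P}_i^{\pi_E}(s,a) = \mathbb{P}_i^{\pi_E}(s)\pi_E(a|s)$ and $\mu^{\pi_E}(s,a) = \mu^{\pi_E}(s)\pi_E(a|s)$ share the same conditional, integrating out the action reduces the joint TV distance to the marginal one, and Assumption~\ref{ass: ergodicity} yields $d_{\text{TV}}(\mathbb{P}_i^{\pi_E}(\cdot,\cdot), \mu^{\pi_E}(\cdot,\cdot)) \leq C_M \rho^i$. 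Matching the stated constant requires $\|\nabla L_i(\theta_t;\cdot)\|_\infty \leq 2\bar{C}_r \gamma^i (2-\gamma)/(1-\gamma)$.

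Differentiating gives $\nabla L_i(\theta_t;(s,a)) = -\gamma^i \nabla_\theta \log \pi_{\theta_t}(a|s) + \lambda \gamma^i (\theta_t - \bar{\theta})$, so it suffices to bound the two pieces by $2\bar{C}_r \gamma^i/(1-\gamma)$ and $2\bar{C}_r \gamma^i$, respectively. For the log-policy term I would use the soft-policy identity $\pi_\theta(a|s) \propto \exp(Q_\theta^{\text{soft}}(s,a))$, which gives $\nabla_\theta \log \pi_\theta(a|s) = \nabla_\theta Q_\theta^{\text{soft}}(s,a) - E_{A\sim\pi_\theta(\cdot|s)}[\nabla_\theta Q_\theta^{\text{soft}}(s,A)]$. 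Unrolling the soft-Bellman recursion writes $\nabla_\theta Q_\theta^{\text{soft}}(s,a)$ as an on-policy discounted sum of $\nabla_\theta r_\theta$, and Assumption~\ref{ass: parametric model constants} (through the mean-value theorem) implies $\|\nabla_\theta r_\theta\| \leq \bar{C}_r$, so $\|\nabla_\theta Q_\theta^{\text{soft}}\|_\infty \leq \bar{C}_r/(1-\gamma)$ and consequently $\|\nabla_\theta \log \pi_{\theta_t}\| \leq 2\bar{C}_r/(1-\gamma)$.

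For the regularization piece I need the uniform bound $\lambda\|\theta_t - \bar{\theta}\| \leq 2\bar{C}_r$, and this is where the step-size condition $\alpha_t \in (0, (1-\gamma)/\lambda)$ enters. Rearranging the reward update in Algorithm~\ref{alg: MERIT} gives
\begin{equation*}
\theta_{t+1} - \bar{\theta} = \Bigl(1 - \alpha_t \frac{\lambda(1-\gamma^{t+1})}{1-\gamma}\Bigr)(\theta_t - \bar{\theta}) - \alpha_t R_t,
\end{equation*}
where $R_t$ is the difference of the two empirical discounted reward-gradient sums, bounded by $\|R_t\| \leq 2\bar{C}_r/(1-\gamma)$ via the same pointwise bound on $\nabla_\theta r_\theta$ (which holds along \emph{any} rollout, regardless of the sampling policy). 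The step-size condition keeps the contraction coefficient strictly inside $(0,1)$, and since $\theta_0 = \bar{\theta}$, a straightforward induction (equivalently, a fixed-point comparison against the noise-free recursion) propagates a uniform bound on $\|\theta_t - \bar{\theta}\|$ ensuring $\lambda\|\theta_t - \bar{\theta}\| \leq 2\bar{C}_r$ for every $t$. Combining the two pieces yields $\|\nabla L_i(\theta_t;(s,a))\| \leq 2\bar{C}_r \gamma^i (2-\gamma)/(1-\gamma)$; squaring and inserting the TV bound into the change-of-measure inequality then delivers the stated constant $8 C_M \bar{C}_r^2 ((2-\gamma)/(1-\gamma))^2 \gamma^{2i} \rho^i$.

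The hard part is the uniform control of $\|\theta_t - \bar{\theta}\|$: because $g_t$ is built from rollouts of the auxiliary policy $\pi_{t+1}$ rather than the ideal lower-level minimizer $\pi_{\theta_t}$, the argument must avoid any reliance on the quality of the inner solve, and instead exploit only the pointwise Lipschitz constant $\bar{C}_r$ of $r_\theta$ in $\theta$ and the contractive structure injected by the meta-regularization term. Once that uniform bound is in hand, everything else is routine bookkeeping around the change-of-measure inequality and Assumption~\ref{ass: ergodicity}.
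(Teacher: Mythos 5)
Your proposal follows essentially the same route as the paper's proof: reduce the joint TV distance to the marginal one by exploiting the shared conditional $\pi_E(a|s)$, bound $\|\nabla L_i(\theta_t;\cdot)\|$ uniformly by $2\bar{C}_r\gamma^i(2-\gamma)/(1-\gamma)$ using the soft-Bellman expression for $\nabla_\theta\log\pi_\theta$ together with the bounded-iterate bound $\lambda\|\theta_t-\bar{\theta}\|\leq 2\bar{C}_r$ (the paper's Claim~\ref{claim: bounded trajectory of theta_t}, established by exactly the contraction recursion on $\theta_{t+1}-\bar{\theta}$ that you describe, with the step-size condition playing the same role), and conclude with the change-of-measure inequality. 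The decomposition, the constants, and the order of the steps all match the paper's argument.
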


Proposition \ref{prop: gradient gap between distributions} quantifies the gap of gradient norms between the current correlated distribution $\mathbb{P}_i^{\pi_E}$ and the stationary distribution $\mu^{\pi_E}$. We next quantify the local regret under the stationary distribution $\mu^{\pi_E}$ with the following lemma:

\begin{lemma}\label{lemma: local regret under stationary distribution}
    Suppose Assumptions \ref{ass: parametric model constants}-\ref{ass: ergodicity} hold and choose $\alpha_t=\frac{(1-\gamma)(t+1)^{-1/2}}{\lambda}$, it holds that:
    \begin{align*}
        &E_{\{(S_t^E,A_t^E)\sim\mu^{\pi_E}(\cdot,\cdot)\}_{t\geq 0}}\biggl[\sum_{t=0}^{T-1}||\frac{1}{t+1}\sum_{i=0}^{t}\nabla L_i(\theta_t;(S_i^E,A_i^E))||^2\biggr]\\
        &\leq D_1(\log T+1)+D_2\sqrt{T}+D_3\sqrt{T}(\log T+1),
    \end{align*}
    where $D_1$, $D_2$, and $D_3$ are positive constants whose expressions can be found in Appendix \ref{subsec: proof of local regret under stationary distribution}.
\end{lemma}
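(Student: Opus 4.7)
The plan is to analyze Algorithm~\ref{alg: MERIT}, under the i.i.d.\ samples from $\mu^{\pi_E}$, as a biased non-convex stochastic gradient descent on the time-varying objective $F_t(\theta) := \sum_{i=0}^{t} L_i(\theta;(S_i^E,A_i^E))$, coupled with a two-timescale argument that controls the gap between the one-step soft policy iterate $\pi_{t+1}$ and the lower-level optimum $\pi_{\theta_t}$. The overall logic is: (i) bound the bias and second moment of $g_t$ relative to $\nabla F_t(\theta_t)$; (ii) apply the descent lemma for the smooth non-convex function $F_t$; and (iii) sum the per-step inequality after dividing by $(t+1)^2$, using Abel summation to extract the three terms in the claimed bound.

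First I would collect the analytic ingredients. Assumption~\ref{ass: parametric model constants} combined with the softmax structure of $\pi_\theta$ and the $\gamma^i$ discount inside each $L_i$ yields Lipschitz and smoothness constants for $F_t$ of order $O(1/(1-\gamma))$. Because each $\nabla L_i$ contains the centering term $-\lambda \gamma^i (\theta - \bar\theta)$, the OGD iterate $\theta_t$ remains in a bounded ball around $\bar\theta$, so $\|\nabla F_t(\theta_t)\|$ and $\|g_t\|$ are uniformly bounded. For the policy tracking error, I would combine the linear contraction of soft policy iteration \cite{cen2022fast} with the Lipschitz continuity of $\pi_\theta$ in $\theta$ to derive inductively a bound $\|\pi_{t+1}-\pi_{\theta_t}\| \leq b_t$ that decays in $t$; this hinges on $\alpha_t\|g_t\| = O(1/\sqrt{t+1})$ being small relative to the contraction rate of soft policy iteration. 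These pieces together imply $\|E[g_t\mid\theta_t] - \nabla F_t(\theta_t)\| \leq C b_t$ and $E[\|g_t\|^2\mid\theta_t] \leq B^2$.

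Next I apply the $L_F$-smoothness descent inequality to $F_t$ at $\theta_{t+1} = \theta_t - \alpha_t g_t$, take conditional expectation with respect to the i.i.d.\ samples, split the cross term via Young's inequality, and rearrange to obtain
\begin{equation*}
\tfrac{\alpha_t}{2}\|\nabla F_t(\theta_t)\|^2 \leq F_t(\theta_t) - E[F_t(\theta_{t+1})] + \tfrac{\alpha_t C^2 b_t^2}{2} + \tfrac{L_F \alpha_t^2 B^2}{2}.
\end{equation*}
Dividing by $\alpha_t(t+1)^2$ and summing over $t = 0,\dots,T-1$, the telescoping piece is handled by writing $F_t(\theta_{t+1}) = F_{t+1}(\theta_{t+1}) - L_{t+1}(\theta_{t+1})$ and performing Abel summation with weights $1/[\alpha_t(t+1)^2] = \lambda(t+1)^{3/2}/(1-\gamma)$; the $O(\sqrt{t})$ weight differences, combined with uniformly bounded $F_t$ and exponentially small $L_{t+1}(\theta_{t+1}) = O(\gamma^{t+1})$, produce the $D_2\sqrt{T}$ contribution. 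The variance piece yields $D_1\log T$ through a $\sum_t 1/(t+1)$-type estimate after absorbing $\alpha_t$ into $1/[\alpha_t(t+1)^2]$, and coupling the tracking-bias decay $b_t^2$ with the growing weight $1/[\alpha_t(t+1)^2]$ produces the remaining $D_3\sqrt{T}\log T$ term.

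The main obstacle will be Step~2: establishing a decay rate $b_t$ sharp enough that the bias contribution is sub-linear in $T$. Because $\theta$ and $\pi$ are updated simultaneously and the lower-level problem is non-convex, the standard two-timescale analysis must be adapted, requiring a delicate inductive coupling between the contraction of soft policy iteration and the drift $\|\theta_{t+1}-\theta_t\|$ under the $O(1/\sqrt{t+1})$-step OGD update. A secondary difficulty is ensuring the Abel-summed telescoping piece does not blow up: the weights $1/[\alpha_t(t+1)^2]$ grow like $(t+1)^{3/2}$, so the cancellation between $F_t(\theta_t)$ and $F_{t+1}(\theta_{t+1})$ and the $\gamma^{t+1}$-smallness of $L_{t+1}$ must be exploited carefully to keep the rate sub-linear rather than $O(T^{3/2})$.
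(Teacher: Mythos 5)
Your route is genuinely different from the paper's and, once one slip is corrected, it does prove the lemma. The paper never touches the random, time-varying objective $F_t(\theta)=\sum_{i=0}^{t}L_i(\theta;(S_i^E,A_i^E))$ directly: it constructs the time-invariant population surrogate $\bar L(\theta)=E_{(S,A)\sim\mu^{\pi_E}}[-\log\pi_\theta(A|S)]+\tfrac{\lambda}{2}\|\theta-\bar\theta\|^2$, exploits $E_{\mu^{\pi_E}}[L_i(\theta)]=\gamma^i\bar L(\theta)$ to split the local regret into an i.i.d.\ sampling-variance term (each summand $O(1/t)$, giving $D_1(\log T+1)$) plus $2\sum_t\|\nabla\bar L(\theta_t)\|^2$, runs the standard biased-SGD descent argument on the \emph{fixed} function $\bar L$ to get $\sum_t\alpha_t\|\nabla\bar L(\theta_t)\|^2=O(\log T)$, and then divides by $\alpha_{T-1}\propto T^{-1/2}$ — which is exactly where the $D_2\sqrt{T}+D_3\sqrt{T}\log T$ terms come from. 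You instead apply the descent lemma to $F_t$ itself, reweight each step by $1/[\alpha_t(t+1)^2]$, and Abel-sum across the changing objectives; the policy-tracking bias is controlled the same way in both arguments (soft-policy-iteration contraction plus the $O(\alpha_t)$ parameter drift from Claim~\ref{claim: bounded trajectory of theta_t}, yielding $b_t=O(t^{-1/2})$). Your approach buys a sharper bound and avoids the lossy division by $\alpha_{T-1}$; the paper's buys a cleaner reduction to textbook SGD on a fixed smooth function.

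Two corrections to your execution. First, the algebra on the weights is backwards: with $\alpha_t=\frac{(1-\gamma)(t+1)^{-1/2}}{\lambda}$ one has $\frac{1}{\alpha_t(t+1)^2}=\frac{\lambda}{(1-\gamma)(t+1)^{3/2}}$, which \emph{decays}; your stated worry that the weights grow like $(t+1)^{3/2}$ and the telescoping might blow up is unfounded. With the correct weights the Abel-summed telescoping piece is $O(1)$ (using that $E[F_t(\theta_t)]$ is uniformly bounded — this needs $|\log\pi_{\theta_t}(a|s)|\le C_\pi$, which follows from Claim~\ref{claim: bounded trajectory of theta_t} and bounded state-action space and which you should state explicitly, as the paper only invokes it in the proof of Theorem~\ref{thm: regret under linear reward}), the variance piece is $\sum_t O((t+1)^{-5/2})=O(1)$, and the bias piece is $\sum_t O(b_t^2(t+1)^{-2})=O(1)$. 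So your term-by-term attribution of $D_1\log T$, $D_2\sqrt T$, and $D_3\sqrt T\log T$ to specific pieces does not survive the corrected computation — your method actually yields an $O(1)$ bound, which of course still implies the stated inequality. (This is consistent with the observation that $\|\nabla L_i(\theta_t)\|=O(\gamma^i)$ already forces $\|\frac{1}{t+1}\sum_{i\le t}\nabla L_i(\theta_t)\|=O(1/(t+1))$ pointwise.) Second, when you telescope via $F_t(\theta_{t+1})=F_{t+1}(\theta_{t+1})-L_{t+1}(\theta_{t+1})$, note that bounding $E[L_{t+1}(\theta_{t+1})]=\gamma^{t+1}E[\bar L(\theta_{t+1})]$ quietly reuses the i.i.d.\ structure and the population quantity $\bar L$; this is fine under the stationary-distribution hypothesis of the lemma but is worth making explicit.
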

Lemma \ref{lemma: local regret under stationary distribution} quantifies the local regret under the stationary distribution $\mu^{\pi_E}$. With Proposition \ref{prop: gradient gap between distributions} and Lemma \ref{lemma: local regret under stationary distribution}, we can quantify the local regret under the current correlated distribution.

\begin{theorem}\label{thm: local regret under correlated distribution}
    Suppose Assumptions \ref{ass: parametric model constants}-\ref{ass: ergodicity} hold and choose $\alpha_t=\frac{(1-\gamma)(t+1)^{-1/2}}{\lambda}$, we have that:
    \begin{align*}
        &E_{\{(S_t^E,A_t^E)\sim\mathbb{P}_t^{\pi_E}(\cdot,\cdot)\}_{t\geq 0}}\biggl[\sum_{t=0}^{T-1}||\frac{1}{t+1}\sum_{i=0}^t\nabla L_i(\theta_t;(S_i^E,A_i^E))||^2\biggr]\\
        &\leq \biggl(D_1+\frac{8C_M\bar{C}_r^2(2-\gamma)^2}{(1-\rho\gamma^2)(1-\gamma)^2}\biggr)(\log T+1)+D_2\sqrt{T}+D_3\sqrt{T}(\log T+1).
        %&\frac{1}{T}\sum_{t=0}^{T-1}\sup_{(s,a)\in\mathcal{S}\times\mathcal{A}}|\log \pi_{t+1}(a|s)-\log \pi_{\theta_t}(a|s)|,\\
        %&\leq \frac{\bar{D}_1}{\sqrt{T}}+\frac{\bar{D}_2}{T}.
    \end{align*}
\end{theorem}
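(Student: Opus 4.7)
The plan is to transfer the stationary-distribution regret bound of Lemma \ref{lemma: local regret under stationary distribution} to the correlated-distribution setting by controlling the discrepancy via Proposition \ref{prop: gradient gap between distributions}. Writing $G_t\triangleq \frac{1}{t+1}\sum_{i=0}^t\nabla L_i(\theta_t;(S_i^E,A_i^E))$ for brevity, I would begin with the decomposition
\begin{align*}
E_{\{(S_t^E,A_t^E)\sim\mathbb{P}_t^{\pi_E}\}_{t\geq 0}}\biggl[\sum_{t=0}^{T-1}||G_t||^2\biggr]
= E_{\{(S_t^E,A_t^E)\sim\mu^{\pi_E}\}_{t\geq 0}}\biggl[\sum_{t=0}^{T-1}||G_t||^2\biggr] + \sum_{t=0}^{T-1}\Delta_t,
\end{align*}
where $\Delta_t$ denotes the per-step gap between the two expectations of $||G_t||^2$. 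The first piece is bounded immediately by Lemma \ref{lemma: local regret under stationary distribution} by $D_1(\log T+1)+D_2\sqrt{T}+D_3\sqrt{T}(\log T+1)$.

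For the discrepancy $\sum_t \Delta_t$, I would apply the Cauchy-Schwarz inequality $||G_t||^2\leq \frac{1}{t+1}\sum_{i=0}^t||\nabla L_i(\theta_t;(S_i^E,A_i^E))||^2$ to reduce the quantity inside to an average of individual squared gradients, each of which is covered by Proposition \ref{prop: gradient gap between distributions}. Using that proposition termwise to bound every $|E_{\mathbb{P}_i^{\pi_E}}[||\nabla L_i||^2]-E_{\mu^{\pi_E}}[||\nabla L_i||^2]|$ by $8C_M\bar{C}_r^2\bigl(\frac{2-\gamma}{1-\gamma}\bigr)^2\rho^i\gamma^{2i}$, I would then use the geometric sum $\sum_{i=0}^t\rho^i\gamma^{2i}\leq \frac{1}{1-\rho\gamma^2}$ and the harmonic sum $\sum_{t=0}^{T-1}\frac{1}{t+1}\leq \log T+1$ to obtain the extra term $\frac{8C_M\bar{C}_r^2(2-\gamma)^2}{(1-\rho\gamma^2)(1-\gamma)^2}(\log T+1)$, which matches exactly the first summand in the theorem's stated bound. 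Adding this to the Lemma \ref{lemma: local regret under stationary distribution} bound completes the argument.

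The main obstacle is the termwise control of $\Delta_t$. Cauchy-Schwarz yields an upper bound on each expectation separately but not on their difference, so a direct application is not immediate. The way out is to expand $||G_t||^2=\frac{1}{(t+1)^2}\sum_{i,j=0}^t\langle\nabla L_i,\nabla L_j\rangle$ and apply a Proposition \ref{prop: gradient gap between distributions}-type estimate to each inner product, making essential use of both the $\gamma^i$ prefactor appearing in $\nabla L_i$ and the exponential TV decay from Assumption \ref{ass: ergodicity} to keep the resulting double sum summable in both $i$ and $t$. A further subtlety is that $\theta_t$ itself is a function of the prior samples, so the termwise application of Proposition \ref{prop: gradient gap between distributions} should be carried out after conditioning on the history before time $i$ and taking the inner expectation only over $(S_i^E,A_i^E)$. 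Once the per-step bound $|\Delta_t|\leq \frac{1}{t+1}\sum_{i=0}^t 8C_M\bar{C}_r^2\bigl(\frac{2-\gamma}{1-\gamma}\bigr)^2\rho^i\gamma^{2i}$ is secured, summing over $t$ and combining with Lemma \ref{lemma: local regret under stationary distribution} yields the claim.
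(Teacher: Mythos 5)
Your overall architecture --- a stationary-distribution bound plus a distribution-shift correction that sums to $\frac{8C_M\bar{C}_r^2(2-\gamma)^2}{(1-\rho\gamma^2)(1-\gamma)^2}(\log T+1)$ via the geometric and harmonic sums --- is the right one, and you correctly identify the central obstacle: Proposition \ref{prop: gradient gap between distributions} controls the gap of $E[||\nabla L_i||^2]$ at a single time index, whereas your $\Delta_t$ is a gap of $||\frac{1}{t+1}\sum_{i=0}^t\nabla L_i||^2$, a norm of a sum. But your proposed resolution does not go through. Expanding $||G_t||^2=\frac{1}{(t+1)^2}\sum_{i,j=0}^t\langle\nabla L_i,\nabla L_j\rangle$ forces you to compare, for $i\neq j$, the expectation of $\langle\nabla L_i,\nabla L_j\rangle$ under the joint law of $((S_i^E,A_i^E),(S_j^E,A_j^E))$ along the trajectory against its expectation under the i.i.d.\ product $\mu^{\pi_E}\otimes\mu^{\pi_E}$ implicit in Lemma \ref{lemma: local regret under stationary distribution}. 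Proposition \ref{prop: gradient gap between distributions} says nothing about joint laws, and Assumption \ref{ass: ergodicity} only gives marginal TV convergence; deriving a joint-law bound that remains summable in both $i$ and $j$ is a substantial piece of new analysis your sketch does not supply. The conditioning device you mention for handling the dependence of $\theta_t$ on past samples does not repair this, since the cross terms are a property of the sampling distribution itself.

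The paper avoids the problem by reordering your two steps. It first applies Jensen's inequality under the \emph{correlated} distribution, $E_{\mathbb{P}_t^{\pi_E}}[||G_t||^2]\leq E_{\mathbb{P}_t^{\pi_E}}[\frac{1}{t+1}\sum_{i=0}^t||\nabla L_i(\theta_t;(S_i^E,A_i^E))||^2]$, and only then splits into stationary plus correction. The correction now involves only single-time marginals, so Proposition \ref{prop: gradient gap between distributions} applies termwise exactly as you compute, giving your $(\log T+1)$ term. The price is that the stationary piece becomes $E_{\mu^{\pi_E}}[\sum_{t}\frac{1}{t+1}\sum_{i}||\nabla L_i(\theta_t)||^2]$, which is larger than the local regret bounded by Lemma \ref{lemma: local regret under stationary distribution} and so cannot be dispatched by citing that lemma as stated; the paper re-derives a bound of the same form $D_1(\log T+1)+D_2\sqrt{T}+D_3\sqrt{T}(\log T+1)$ for this larger quantity from the intermediate estimates inside the proof of Lemma \ref{lemma: local regret under stationary distribution} (the bound on $\sum_t||\nabla\bar{L}(\theta_t)||^2$ and the variance bound). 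As written, your argument therefore has a genuine gap at the cross-term step; the fix is to move the Jensen inequality in front of the distribution split rather than inside the discrepancy term.
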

Theorem \ref{thm: local regret under correlated distribution} is based on Proposition \ref{prop: gradient gap between distributions} and Lemma \ref{lemma: local regret under stationary distribution}. It shows that Algorithm \ref{alg: MERIT} achieves sub-linear local regret. Moreover, if the reward function is linear, Algorithm \ref{alg: MERIT} achieves sub-linear regret:

\begin{theorem}\label{thm: regret under linear reward}
    Suppose the expert reward function $r_E$ and the parameterized reward $r_{\theta}$ are linear, and Assumptions \ref{ass: parametric model constants}-\ref{ass: ergodicity} hold. Choose $\alpha_t=\frac{1-\gamma}{\lambda(t+1)(1-\gamma^{t+1})}$ we have that:
    \begin{align*}
        &E_{\{(S_t^E,A_t^E)\sim\mathbb{P}_t^{\pi_E}(\cdot,\cdot)\}_{t\geq 0}}\biggl[\sum_{t=0}^{T-1}L_t(\theta_t;(S_t^E,A_t^E))\biggr]\\
        &-\min_{\theta}E_{\{(S_t^E,A_t^E)\sim\mathbb{P}_t^{\pi_E}(\cdot,\cdot)\}_{t\geq 0}}\biggl[\sum_{t=0}^{T-1}L_t(\theta;(S_t^E,A_t^E))\biggr]\leq D_4+D_5(\log T+1),
    \end{align*}
    where $D_4$ and $D_5$ are positive constants whose expressions are in Appendix \ref{subsec: proof of sublinear regret}.
\end{theorem}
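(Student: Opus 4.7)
The plan is to exploit two special structural properties of the linear-reward setting that are absent from Theorem~\ref{thm: local regret under correlated distribution}, namely that (i) each per-round loss $L_t$ becomes strongly convex in $\theta$, and (ii) the cumulative objective $F_t(\theta):=\sum_{i=0}^{t}L_i(\theta;(s_i^E,a_i^E))$ has uniformly bounded gradient norm. Combining these with the two-timescale bias control already established en route to Theorem~\ref{thm: local regret under correlated distribution} should yield the logarithmic bound. First I would establish that under $r_\theta(s,a)=\theta^{\top}\phi(s,a)$, the soft value $V_\theta^{\text{soft}}(s)=\log\sum_a\exp(Q_\theta^{\text{soft}}(s,a))$ is convex in $\theta$ by an induction on the soft Bellman backup (each step composes an affine map with log-sum-exp, which preserves convexity). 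Consequently $-\log\pi_\theta(a|s)=V_\theta^{\text{soft}}(s)-Q_\theta^{\text{soft}}(s,a)$ is convex in $\theta$ as well, so each $L_t$ is $\lambda\gamma^t$-strongly convex and $F_t$ is $M_t:=\lambda(1-\gamma^{t+1})/(1-\gamma)$-strongly convex.

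Next I would set up the standard strongly-convex OGD template on the sequence $\{F_t\}$. From the update $\theta_{t+1}=\theta_t-\alpha_t g_t$, expanding $\|\theta_{t+1}-\theta^\ast\|^2$ and using the $M_t$-strong convexity of $F_t$ gives
\begin{equation*}
F_t(\theta_t)-F_t(\theta^\ast)\;\leq\;\frac{\|\theta_t-\theta^\ast\|^2-\|\theta_{t+1}-\theta^\ast\|^2}{2\alpha_t}+\frac{\alpha_t}{2}\|g_t\|^2-\frac{M_t}{2}\|\theta_t-\theta^\ast\|^2+\mathcal{E}_t,
\end{equation*}
where $\mathcal{E}_t$ collects the bias of $g_t$ (the $\pi_{t+1}$-versus-$\pi_{\theta_t}$ gap) and the deviation between $\mathbb{P}_t^{\pi_E}$ and $\mu^{\pi_E}$. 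The specific choice $\alpha_t=1/((t+1)M_t)$ makes $1/(2\alpha_t)-M_t/2=tM_t/2$, so an Abel summation across $t$ cancels the $\|\theta_t-\theta^\ast\|^2$ terms against the strong-convexity term, exactly as in the classical $O(\log T)$ analysis.

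To convert the cumulative-objective bound into the per-round regret $R_T=\sum_{t}\mathbb{E}[L_t(\theta_t)-L_t(\theta^\ast)]$, I would apply strong convexity of $L_t$ itself, $L_t(\theta_t)-L_t(\theta^\ast)\leq\langle\nabla L_t(\theta_t),\theta_t-\theta^\ast\rangle-\tfrac{\lambda\gamma^t}{2}\|\theta_t-\theta^\ast\|^2$, and decompose $\nabla L_t(\theta_t)=\nabla F_t(\theta_t)-\nabla F_{t-1}(\theta_t)$. The first piece is captured by the OGD identity, while the second is handled via smoothness of $F_{t-1}$: $\|\nabla F_{t-1}(\theta_t)-\nabla F_{t-1}(\theta_{t-1})\|\leq L^F\|\theta_t-\theta_{t-1}\|=O(\alpha_{t-1}\|g_{t-1}\|)=O(1/t)$, and $\nabla F_{t-1}(\theta_{t-1})$ is in turn related to the previous displacement. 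The telescoping of these terms, together with the fact that $\|\nabla F_t(\theta)\|$ is uniformly $O(1)$ (the features are bounded by Assumption~\ref{ass: parametric model constants} and the discount factors telescope), gives $\mathbb{E}[\|g_t\|^2]=O(1)$ and therefore $\sum_{t}\alpha_t\mathbb{E}[\|g_t\|^2]=\sum_{t}O(1/(t+1))=O(\log T)$, producing the $D_5(\log T+1)$ term.

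The remaining contributions to the error $\mathcal{E}_t$ are handled exactly as in Proposition~\ref{prop: gradient gap between distributions} (for the correlated-versus-stationary correction) and in the proof of Lemma~\ref{lemma: local regret under stationary distribution} (for the policy-mismatch bias on the faster timescale); because both shrink fast enough, their sums contribute only to the constant $D_4$. The \textbf{main obstacle} will be Step~3: bridging the analysis of $\sum_t[F_t(\theta_t)-F_t(\theta^\ast)]$, which is what the algorithm's one-step gradient update on $F_t$ naturally controls, to the per-round regret on $\{L_t\}$. This requires delicately matching the Abel-summation coefficients with the strong convexity of $L_t$ rather than of $F_t$, and carrying the smoothness errors from the $\nabla F_{t-1}$ term through the telescoping without losing the logarithmic rate. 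The precise step size $\alpha_t=(1-\gamma)/(\lambda(t+1)(1-\gamma^{t+1}))=1/((t+1)M_t)$ is exactly what makes this matching work.
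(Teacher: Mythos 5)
Your plan diverges from the paper's proof at its very first step, and that step contains a genuine gap. You claim that because $V_{\theta}^{\text{soft}}(s)=\log\int_a\exp(Q_{\theta}^{\text{soft}}(s,a))da$ is convex in $\theta$ (true, by induction on the soft Bellman backup), the per-sample loss $-\log\pi_{\theta}(a|s)=V_{\theta}^{\text{soft}}(s)-Q_{\theta}^{\text{soft}}(s,a)$ is convex as well, hence each $L_t$ is $\lambda\gamma^t$-strongly convex and $F_t=\sum_{i=0}^{t}L_i$ is $M_t$-strongly convex. But $Q_{\theta}^{\text{soft}}(s,a)=\theta^{\top}\phi(s,a)+\gamma\int P(s'|s,a)V_{\theta}^{\text{soft}}(s')ds'$ is itself convex in $\theta$, so $-\log\pi_{\theta}(a|s)$ is a \emph{difference of convex functions}: the term $-\gamma\int P(s'|s,a)V_{\theta}^{\text{soft}}(s')ds'$ is concave, and its negative curvature is not dominated by the curvature of $V_{\theta}^{\text{soft}}(s)$ for a fixed $(s,a)$ (e.g.\ for a low-probability action the log-sum-exp at $s$ can be nearly affine while $V_{\theta}^{\text{soft}}$ at the successor state has strictly positive curvature). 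So pointwise convexity of $L_t$, and hence $M_t$-strong convexity of $F_t$, does not follow and can fail unless $\lambda$ is large relative to $\gamma\|\nabla^2_{\theta\theta}V_{\theta}^{\text{soft}}\|$, which the theorem does not assume. Everything downstream of this — the Abel summation on $\{F_t\}$ and your "main obstacle" of bridging $\sum_t[F_t(\theta_t)-F_t(\theta^{\ast})]$ to the per-round regret — rests on this unproved claim.

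The paper avoids the issue by proving strong convexity only of the \emph{population} loss $\bar{L}(\theta)=E_{(S,A)\sim\mu^{\pi_E}}[-\log\pi_{\theta}(A|S)]+\frac{\lambda}{2}\|\theta-\bar{\theta}\|^2$: after taking the expectation over the stationary state-action distribution, the problematic cross terms cancel and the Hessian of the expected negative log-likelihood becomes a covariance matrix of discounted feature expectations, which is positive semidefinite, so $\bar{L}$ is $\lambda$-strongly convex. The regret analysis is then the standard strongly-convex SGD telescoping applied directly to $\bar{L}$ with $\theta^{\ast}=\arg\min\bar{L}$, using $E_{\mu^{\pi_E}}[L_t(\theta)]=\gamma^t\bar{L}(\theta)$ to pass to the per-round regret (no $F_t$-to-$L_t$ bridging is needed — your "main obstacle" is an artifact of your decomposition), with the bias of $g_t$ controlled by the same telescoped $Q$-function bounds as in Lemma \ref{lemma: local regret under stationary distribution} and the correlated-versus-stationary correction applied to the loss \emph{values} via ergodicity, contributing only to the constant $D_4$. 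To repair your proof you would either need to impose a lower bound on $\lambda$ ensuring per-sample strong convexity, or move the convexity argument to the population level as the paper does.
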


\subsection{Meta-Regularization}\label{sec: meta-regularization} 
Since there is only one training trajectory and this trajectory is not complete during the learning process, we need to add a regularization term to avoid overfitting. Inspired by humans' using relevant experience to help do inference, we introduce the meta-regularization $\frac{\lambda}{2}||\theta-\bar{\theta}||^2$ where $\lambda$ is the hyper-parameter and the meta-prior $\bar{\theta}$ is learned from ``relevant experience". In specific, we introduce a set of relevant tasks $\{\mathcal{T}_j\}_{j\sim P_{\mathcal{T}}}$ where each task $\mathcal{T}_j$ is an IRL problem and $P_{\mathcal{T}}$ is the implicit task distribution. The tasks $\{\mathcal{T}_j\}_{j\sim P_{\mathcal{T}}}$ are relevant in the sense that they share the components $(\mathcal{S},\mathcal{A},\gamma,P_0,P)$ of the MDP with our in-trajectory learning problem. However, the expert's reward functions of different tasks are different and are drawn from an unknown reward function distribution. For example, in the stock market case mentioned in the introduction, the experts of different tasks invest in the same stock market but may have different preferences. As standard in meta-learning \cite{xu2023efficient,xu2022meta,xu2024online}, we assume that the expert's reward function $r_E$ of our in-trajectory learning problem is also drawn from the same unknown reward function distribution. Note that the reward functions of the relevant tasks $\{\mathcal{T}_j\}_{j\sim P_{\mathcal{T}}}$ are different from $r_E$ even if they are drawn from the same unknown reward function distribution.

For each task $\mathcal{T}_j$, there is a batch of trajectories and we divide this batch into two sets, i.e., $\mathcal{D}_j^{\text{tr}}$ and $\mathcal{D}_j^{\text{eval}}$. The training set $\mathcal{D}_j^{\text{tr}}$ only has one trajectory, just as our in-trajectory learning problem, and the evaluation set $\mathcal{D}_j^{\text{eval}}$ has abundant trajectories. Define the loss function on a certain data set $\mathcal{D}\triangleq\{\zeta^v\}_{v=1}^m$ as $L(\theta,\mathcal{D})\triangleq -\sum_{v=1}^m\sum_{t=0}^{\infty}\gamma^t\log\pi_{\theta}(a_t^v|s_t^v)$. The goal of each task $\mathcal{T}_j$ is to learn a task-specific adaptation $\phi_j$ using the training set $\mathcal{D}_j^{\text{tr}}$, such that $\phi_j$ can minimize the test loss $L(\phi_j,\mathcal{D}_j^{\text{eval}})$ on the evaluation set $\mathcal{D}_j^{\text{eval}}$.  The goal of meta-regularization is to find a meta-prior $\bar{\theta}$, from which such task-specific adaptations $\phi_j$ can be adapted to all tasks $\{\mathcal{T}_j\}_{j\sim P_{\mathcal{T}}}$. In specific, meta-regularization \cite{rajeswaran2019meta} proposes a bi-level optimization problem \eqref{eq: meta bi-level formulation}. The lower-level problem uses only one trajectory $\mathcal{D}_j^{\text{tr}}$ to find the task-specific adaptation $\phi_j$ such that the meta-regularized loss function $L(\phi,\mathcal{D}_j^{\text{tr}})+\frac{\lambda}{2(1-\gamma)}||\phi-\bar{\theta}||^2$ is minimized. The upper-level problem is to find a meta-prior $\bar{\theta}$ such that the corresponding task-specific adaptations $\{\phi_j\}_{j\sim P_{\mathcal{T}}}$ can minimize the expected loss function $L(\phi_j,\mathcal{D}_j^{\text{eval}})$ over the evaluation sets of all tasks $\{\mathcal{T}_i\}_{j\sim P_{\mathcal{T}}}$.
\begin{align}
    &\min_{\bar{\theta}} \; E_{j\sim P_{\mathcal{T}}}\bigl[L(\phi_j,\mathcal{D}_j^{\text{eval}})\bigr],\quad \text{s.t.} \; \phi_j=\mathop{\arg\min}_{\phi} L(\phi,\mathcal{D}_j^{\text{tr}})+\frac{\lambda}{2(1-\gamma)}||\phi-\bar{\theta}||^2. \label{eq: meta bi-level formulation}
\end{align}
The lower-level loss function in \eqref{eq: meta bi-level formulation} is the offline version of our in-trajectory loss function \eqref{eq: loss function} (i.e., $L(\phi,\mathcal{D}_j^{\text{tr}})+\frac{\lambda}{2(1-\gamma)}||\phi-\bar{\theta}||^2=\sum_{t=0}^{\infty}L_t(\phi;(s_t^{\text{tr}},a_t^{\text{tr}}))$) where $(s_t^{\text{tr}},a_t^{\text{tr}})\in\mathcal{D}_j^{\text{tr}}$. Our in-trajectory learning problem can also be regarded as to find a task-specific adaptation. Note that the in-trajectory learning problem is online while the lower-level problem in \eqref{eq: meta bi-level formulation} is offline because the in-trajectory problem is ongoing where we keep observing new state-action pairs. In contrast, the lower-level problem in \eqref{eq: meta bi-level formulation} is based on ``experience" that has already happened. Due to the space limit, we include the algorithm and theoretical guarantees of solving the problem \eqref{eq: meta bi-level formulation} in Appendix \ref{sec: meta-regularization convergence guarantee}.

\section{Experiments}
We present three experiments to show the effectiveness of MERIT-IRL. We use four baselines for comparisons. (\romannumeral 1) \textbf{IT-IRL}: this method is MERIT-IRL without meta-regularization. (\romannumeral 2) \textbf{Naive MERIT-IRL}: this method has the meta-regularization term but uses the naive way (in the middle of Figure \ref{fig:algorithm comparison}) to update reward. (\romannumeral 3) \textbf{Naive IT-IRL}: this method uses the naive way to update the learned reward and does not have the meta-regularization term. (\romannumeral 4) \textbf{Hindsight}: this method is meta-regularized ML-IRL \cite{zengmaximum} which can access the complete expert trajectory and uses the standard IRL (visualized in the left of Figure \ref{fig:algorithm comparison}) with meta-regularization to update the learned reward. The experiment details are in Appendix \ref{sec: experiment details}.

\subsection{MuJoCo experiment}
In this subsection, we consider the target velocity problem for three MuJoCo robots: HalfCheetah, Walker, and Hopper. The target velocity problem is widely used in meta-RL \cite{finn2017model} and meta-IRL \cite{seyed2019smile}. In specific, the robots aim to maintain a target velocity in each task and the target velocity of different tasks is different. To test the performance of MERIT-IRL, we use $10$ test tasks whose target velocity is randomly between $1.5$ and $2.0$. In the test tasks, there is only one expert trajectory and the state-action pairs of this trajectory are sequentially revealed (to MERIT-IRL, IT-IRL, Naive MERIT-IRL, and Naive IT-IRL) in an online fashion. The baseline Hindsight uses the complete expert trajectory to learn a reward function. The ground truth reward is designed as $-|v-v_{\text{target}}|$ (as in \cite{finn2017model}) where $v$ is the current robot velocity and $v_{\text{target}}$ is the target velocity. To learn the meta-prior $\bar{\theta}$, we use $50$ relevant tasks whose target velocity is randomly between $0$ and $3$.

Figures \ref{fig:halfcheetah}-\ref{fig:hopper} show the in-trajectory learning performance where the $x$-axis is the time step $t$ of the expert trajectory and the $y$-axis is the cumulative reward of the learned policy $\pi_t$ when only the first $t$ steps of the expert trajectory are observed. The $x$-limit is $1,000$ because the trajectory length in MuJoCo is $1,000$. Note that the baseline ``Hindsight" is not in-trajectory learning since it learns from a complete expert trajectory. For comparison, we use two horizontal lines (close to each other) to show the performance of Hindsight and the expert in the figures. Figure \ref{fig:halfcheetah} shows that MERIT-IRL achieves similar performance with the expert when only $40\%$ of the complete expert trajectory ($t=400$) is observed while IT-IRL can only achieve performance close to the expert after observing more than $90\%$ of the complete expert trajectory ($t=900$). This shows the effectiveness of the meta-regularization. Naive MERIT-IRL and Naive IT-IRL fail to imitate the expert even if the complete expert trajectory is observed ($t=1,000$). This shows the effectiveness of our special design of the reward update. The discussions on Figures \ref{fig:walker} and \ref{fig:hopper} are in Appendix \ref{MuJoCo}.

Table \ref{tab: experiment results} shows the results after observing the complete expert trajectory. MERIT-IRL performs much better than IT-IRL, Naive MERIT-IRL, and Naive IT-IRL. MERIT-IRL achieves similar performance with Hindsight and expert. Note that it is not expected that MERIT-IRL outperforms Hindsight since Hindsight uses the complete expert trajectory to learn.

\subsection{Stock market experiment}
RL to train a stock trading agent has been widely studied in AI for finance \cite{deng2016deep,zhang2020deep,liu2021finrl}. In this experiment, we use IRL to learn the reward function (i.e., investing preference) of the target investor in a stock market scenario. In specific, we use the real-world data of $30$ constituent stocks in Dow Jones Industrial Average from 2021-01-01 to 2022-01-01. We use a benchmark called ``FinRL" \cite{liu2021finrl} to configure the real-world stock data into an MDP environment. The target investor (i.e., expert) has an initial asset of $\$1,000$ and trades stocks on every stock market opening day. The stock market opens 252 days between 2021-01-01 and 2022-01-01, and thus the trajectory length is 252. The reward function of the target investor is defined as $p_1-p_2$ where $p_1$ is the investor's profit which is the money earned from trading stocks subtracting the transaction cost, and $p_2$ models the investor's preference of whether willing to take risks. In specific, $p_2$ is positive if the investor buys stocks whose turbulence indices are larger than a certain turbulence threshold, and zero otherwise. The value of $p_2$ depends on the type and amount of the trading stocks. The turbulence thresholds of different investors are different. The turbulence index measures the price fluctuation of a stock. If the turbulence index is high, the corresponding stock has a high fluctuating price and thus is risky to buy \cite{liu2021finrl}. Therefore, an investor unwilling to take risks has a relatively low turbulence threshold. We include experiment details in Appendix \ref{subsec: stock market}. To test performance, we use $10$ test tasks whose turbulence thresholds are randomly between $45$ and $50$. To learn $\bar{\theta}$, we use $50$ relevant tasks whose turbulence thresholds are randomly between $30$ and $60$. 

Figure \ref{fig:stockmarket} shows that MERIT-IRL achieves similar cumulative reward with the expert at $t=140$ which is less than $60\%$ of the whole trajectory, while the three in-trajectory baselines fail to imitate the expert before the ongoing trajectory terminates. The last row in Table \ref{tab: experiment results} shows that MERIT-IRL achieves similar performance with Hindsight and the expert. More discussions on the results are in Appendix \ref{subsec: stock market}.

\begin{table}[t]
    \centering
    \caption{Experiment results. The mean and standard deviation are calculated from $10$ test tasks.}
    \begin{adjustbox}{width=\textwidth}
    \begin{tabular}{|c|c|c|c|c|c|c|}
    \hline
         & MERIT-IRL & IT-IRL & Naive MERIT-IRL & Naive IT-IRL & Hindsight & Expert \\
         \hline
         HalfCheetah &$-214.63\pm53.96$ & $-386.78\pm152.65$ & $-548.22\pm 40.51$ & $-765.27\pm104.41$ & $-208.74\pm 37.23$ & $-181.51\pm28.35$\\
         \hline
         Walker & $-654.77\pm 102.59$ & $-891.79\pm156.90$ & $-962.42\pm111.60$ & $-1349.25\pm158.88$ & $-648.17\pm157.92$ & $-634.17\pm 120.57$ \\
         \hline
         Hopper & $-476.72\pm32.09$ & $-669.88\pm53.63$ & $-691.03\pm93.35$ & $-1112.06\pm74.33$ & $-455.70\pm74.93$ & $-421.74\pm84.30$ \\
         \hline
         Stock market & $386.70\pm 62.95$ & $256.81\pm 68.61$ & $192.49\pm 75.34$ & $72.33\pm16.73$ & $390.30\pm 77.37$ & $403.15\pm 61.94$ \\
         \hline
    \end{tabular}
    \end{adjustbox}
    \label{tab: experiment results}
\end{table}

\begin{figure}[t]
     \centering
     \begin{subfigure}[b]{0.242\textwidth}
         \centering
         \includegraphics[width=\textwidth]{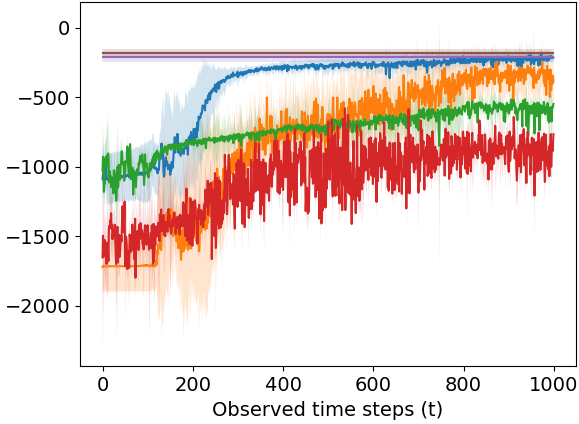}
         \caption{HalfCheetah}
         \label{fig:halfcheetah}
     \end{subfigure}
     \hfill
     \begin{subfigure}[b]{0.24\textwidth}
         \centering
         \includegraphics[width=\textwidth]{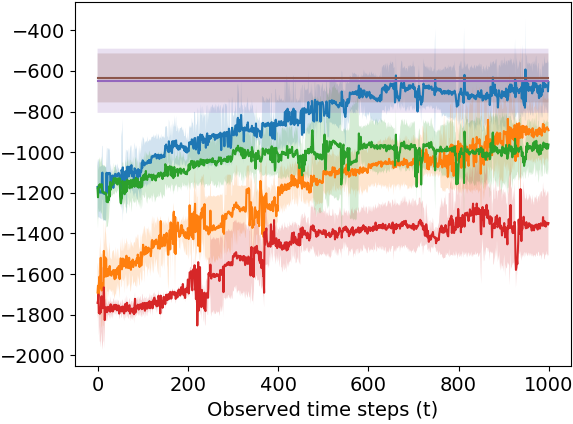}
         \caption{Walker}
         \label{fig:walker}
     \end{subfigure}
     \hfill
     \begin{subfigure}[b]{0.24\textwidth}
         \centering
         \includegraphics[width=\textwidth]{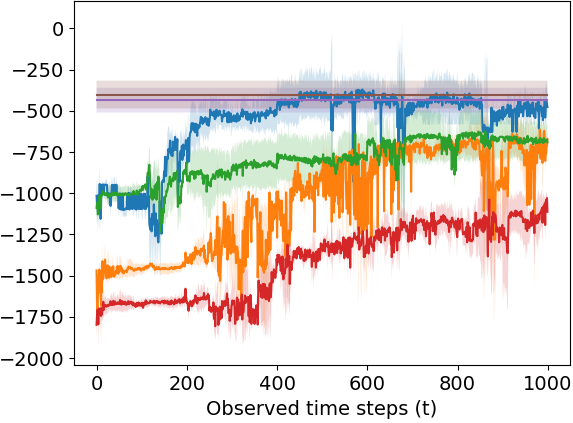}
         \caption{Hopper}
         \label{fig:hopper}
     \end{subfigure}
     \hfill
     \begin{subfigure}[b]{0.235\textwidth}
         \centering
         \includegraphics[width=\textwidth]{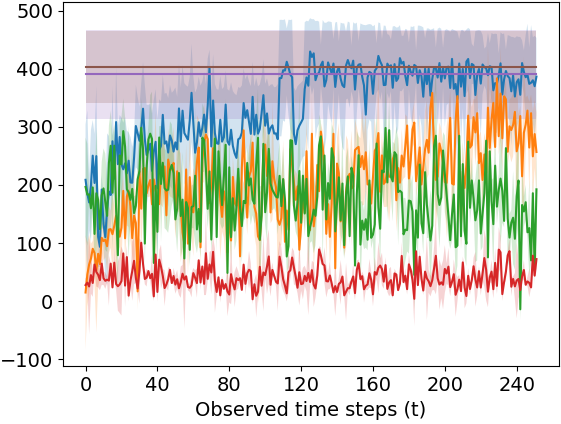}
         \caption{Stock Market}
         \label{fig:stockmarket}
     \end{subfigure}
     \vfill
     \begin{subfigure}[b]{0.65\textwidth}
         \centering
         \includegraphics[width=\textwidth]{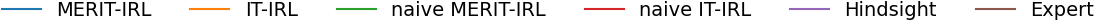}        
     \end{subfigure}
\caption{In-trajectory learning performance. }\label{fig: in-trajectory performance comparison}
\end{figure}

\subsection{Learning from a shooter's ongoing trajectory}\label{subsec: learn from shooter}
This part presents the experiment of learning from an ongoing shooter trajectory. Following \cite{aghalari2021inverse}, we model the shooter's movement as a navigation problem. We build a simulator in Gazebo (Figure \ref{fig:shooter gazebo}) where the shooter moves from the door (lower left corner) to the red target (upper right corner). The learner observes the ongoing trajectory of the shooter and keeps updating the learned reward and policy. In our case, the complete shooter trajectory has the length of $140$. Figures \ref{fig:reward40}-\ref{fig:reward140} show our in-trajectory learning performance where the heat maps visualize the learned reward. We normalize the learned reward to $[0,1]$. We can observe that as the ongoing trajectory is expanding, the learned reward function becomes more and more precise to locate the goal area. When $t=40$, we cannot tell the goal area from the heat map (Figure \ref{fig:reward40}). However, as the time $t$ grows, we can almost locate the goal area when $t=60$ (Figure \ref{fig:reward60}) and precisely locate the goal area when $t=80$ (Figure \ref{fig:reward80}).
\begin{figure}[H]
     \centering
     \begin{subfigure}[b]{0.235\textwidth}
         \centering
         \includegraphics[width=\textwidth]{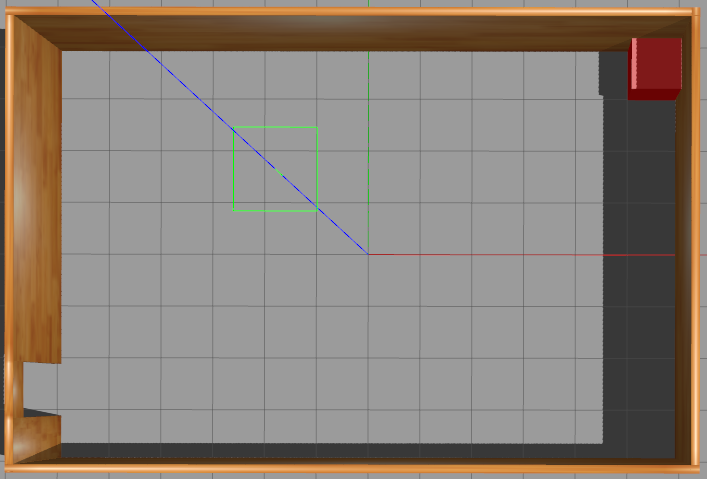}
         \caption{Gazebo environment}
         \label{fig:shooter gazebo}
     \end{subfigure}
     \hfill
     \begin{subfigure}[b]{0.265\textwidth}
         \centering
         \includegraphics[width=\textwidth]{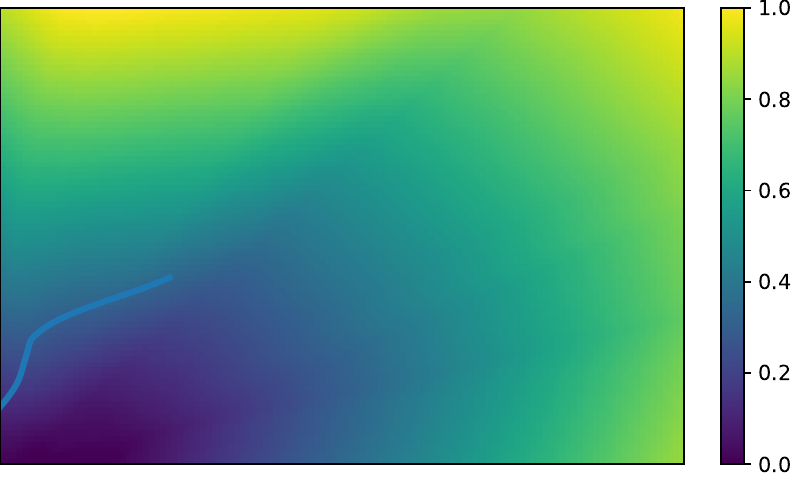}
         \caption{$t=40$}
         \label{fig:reward40}
     \end{subfigure}
     \hfill
     \begin{subfigure}[b]{0.235\textwidth}
         \centering
         \includegraphics[width=\textwidth]{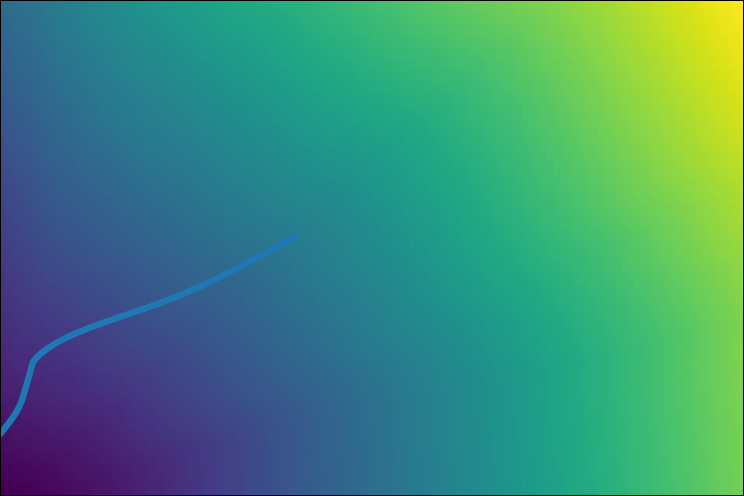}
         \caption{$t=60$}
         \label{fig:reward60}
     \end{subfigure}
     \hfill
     \begin{subfigure}[b]{0.235\textwidth}
         \centering
         \includegraphics[width=\textwidth]{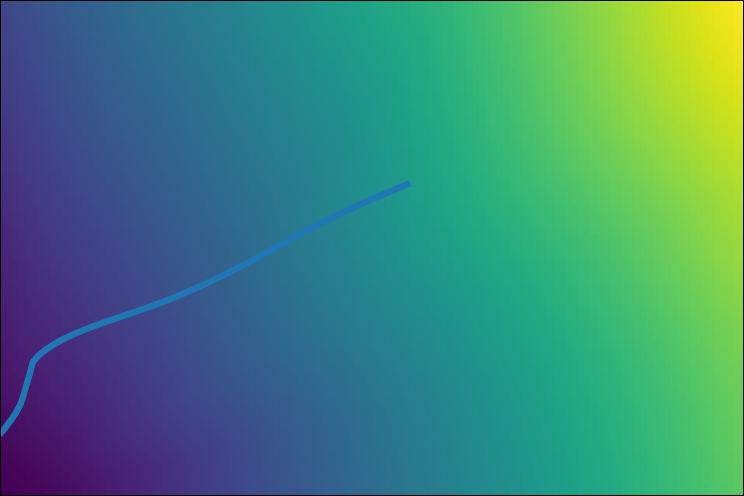}
         \caption{$t=80$}
         \label{fig:reward80}
     \end{subfigure}
     \vfill
     \begin{subfigure}[b]{0.24\textwidth}
         \centering
         \includegraphics[width=\textwidth]{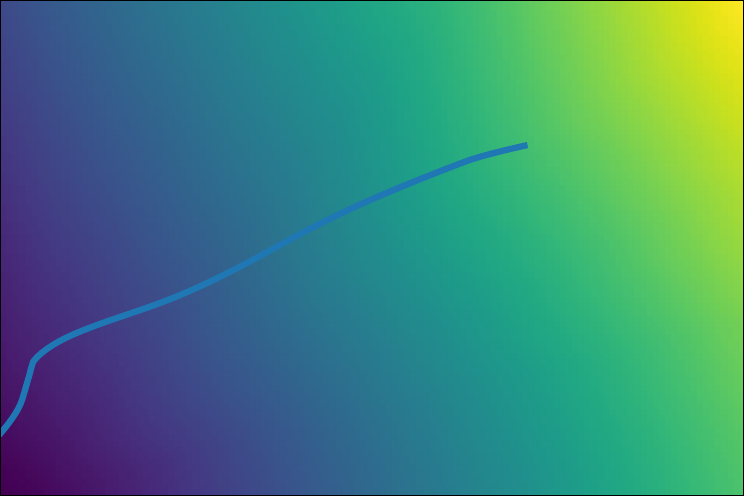}
         \caption{$t=100$}
         \label{fig:reward100}
     \end{subfigure}
     \hfill
     \begin{subfigure}[b]{0.24\textwidth}
         \centering
         \includegraphics[width=\textwidth]{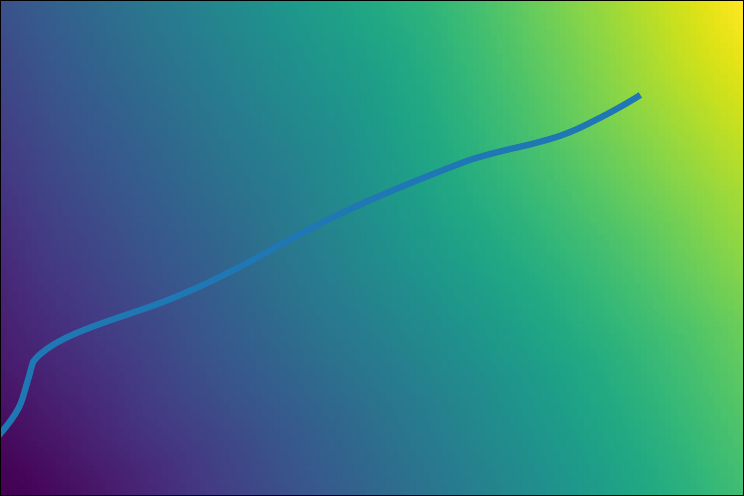}
         \caption{$t=120$}
         \label{fig:reward120}
     \end{subfigure}
     \hfill
     \begin{subfigure}[b]{0.24\textwidth}
         \centering
         \includegraphics[width=\textwidth]{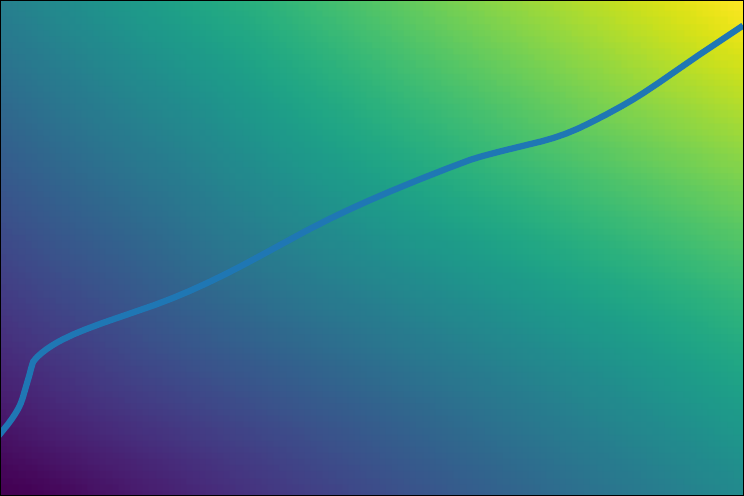}
         \caption{$t=140$}
         \label{fig:reward140}
     \end{subfigure}
     \hfill
     \begin{subfigure}[b]{0.235\textwidth}
         \centering
         \includegraphics[width=\textwidth]{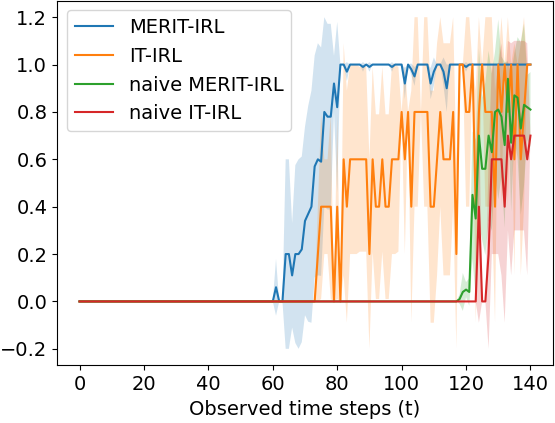}
         \caption{Success rate}
         \label{fig:success rate}
     \end{subfigure}
     
\caption{Learning performance on the active shooting scenario.}\label{fig: active shooting}
\end{figure}

Figure \ref{fig:success rate} shows the policy learning performance. Since there is no ground truth reward in this problem, we use ``success rate" to quantify the performance of the learned policy. The success rate is the rate that the learned policy successfully reaches the goal. From \ref{fig:success rate}, we can see that MERIT-IRL outperforms the other baselines and can achieve $100\%$ success rate when $t=80$ (i.e., only observing $57\%$ of the complete trajectory). Note that we do not include Hindsight and Expert in Figure \ref{fig: active shooting} since they both achieve $100\%$ success rate.

\section{Conclusion}\label{sec: conclusions}
This paper proposes MERIT-IRL, the first in-trajectory inverse reinforcement learning theoretical framework that learns a reward function while observing an initial portion of a trajectory and keeps updating the learned reward function when extended portions (i.e., new state-action pairs) of the trajectory are observed. Experiments show that MERIT-IRL can imitate the expert from the ongoing expert trajectory before it terminates.

% \section{Acknowledgements}
% This work is partially supported by the National Science Foundation through grants ECCS 1846706 and ECCS 2140175. We would like to thank the reviewers for their insightful and constructive suggestions.

\newpage
\medskip

\bibliographystyle{ieeetr}
\bibliography{reference}

%%%%%%%%%%%%%%%%%%%%%%%%%%%%%%%%%%%%%%%%%%%%%%%%%%%%%%%%%%%%
\newpage

\appendix
This appendix consists of four parts: related works, proof, meta-regularization algorithm and convergence guarantee, and Experiment details.
\section{Related works}\label{sec: related works}
\textbf{Applying IRL to predict from ongoing trajectories}. Papers \cite{ziebart2009human,monfort2015intent,gaurav2019discriminatively} use standard IRL to predict goals of incomplete (or ongoing) trajectories. In specific, they first learn the reward function corresponding to each potential goal candidate from complete trajectories in the training phase and then use Bayesian methods to pick the most likely goal candidate of incomplete trajectories in the testing phase. However, these works are not in-trajectory learning since they do not learn a reward function from an incomplete ongoing trajectory.
% Current mainstream IRL works \cite{ho2016generative,fu2018learning,abbeel2004apprenticeship,ziebart2008maximum} regard a complete trajectory as a data point and thus cannot start learning until at least one data point (i.e., a complete trajectory) is input. This kind of input data structure inherently prevents in-trajectory learning where each data point is just a state-action pair.  %One application of our in-trajectory learning method is also to predict the goal of an incomplete trajectory. However, even if falling into the context of predicting the goal of an incomplete trajectory, our method has two advantages compared to the aforementioned works. (\romannumeral 1) Since they need to pick the most likely goal from all the potential goal candidates, they need to know the position of each potential goal candidate beforehand. However, we do not have potential goals nor do we require positions of any goal. We aim to learn a reward function from the incomplete trajectory and the learned reward function can indicate the goal. (\romannumeral 2) They all require the number of potential goal candidates to be finite so that the training data can cover all the potential goal candidates and they can pick one from a finite class in the testing phase, however, we do not assume this. In fact, the possible goals can be infinitely many in our case because any state-action pair in the continuous environment can be a goal. 

\textbf{Online non-convex optimization}. This paper casts the in-trajectory learning problem as an online non-convex bi-level optimization problem where at each online iteration, a new state-action pair is input. Current literature on online non-convex optimization has two major categories. The first one is to use the regret originally defined in online convex optimization \cite{krichene2015hedge,agarwal2019learning}. However, it assumes to find a global optimal solution of a non-convex optimization problem at each online iteration. Therefore, the second category studies ``local regret" \cite{hazan2017efficient,hallak2021regret} and uses follow-the-leader-based methods to minimize the local regret. However, the follow-the-leader-based methods need to solve a non-convex optimization problem to obtain a near-stationary point at each online iteration, which can be computationally expensive and time-consuming. If the streaming data arrives at a fast speed, the computation at each online iteration may not be finished before the next data arrives. The computational burden of each online iteration can be mitigated by online gradient descent (OGD) methods where we only partially solve the non-convex optimization problem by one-step gradient descent at each online iteration. While OGD is widely studied in online convex optimization, it is rarely studied in online non-convex optimization. \cite{liu2023learning} uses OGD to quantify the local regret, however, its analysis only holds when the input data is identically independent distributed (i.i.d.). In contrast, the input data in our problem is not i.i.d. In specific, the input data at time $t$ (i.e., $(S_t^E,A_t^E)$) is affected by the input data at last step (i.e., $(S_{t-1}^E,A_{t-1}^E)$). This temporal correlation between any two consecutive input data makes it difficult to analyze the growth rate of the local regret. 

\textbf{Regularization and meta-learning in IRL}. Moreover, the data of in-trajectory learning is extremely lacking since there is only one demonstrated trajectory and this trajectory is not complete during the learning process. The lack of data can easily lead to overfitting and a common way to alleviate this problem is to use regularizers \cite{ho2016generative,garg2021iq}. Inspired by humans' using relevant experience to help the inference, we introduce a novel regularization method called meta-regularization \cite{rajeswaran2019meta,balcan2019provable}. Compared to the regularizers commonly used in IRL \cite{ho2016generative,garg2021iq}, the meta-regularizer provides human-experience-like prior information which helps recover the reward function from few data. Similar to the meta-initialization method \cite{balcan2019provable,finn2017model} commonly used in IRL \cite{xu2019learning}, meta-regularization provides an initialization that the algorithm starts at. However, more importantly, meta-regularization also provides a regularization term to avoid overfitting.

\subsection{Distinction from Maximum-likelihood inverse reinforcement learning (ML-IRL) \cite{zengmaximum}}\label{subsec: distinctions from ML-IRL}
We discuss our distinctions from ML-IRL from the following three aspects: problem setting, algorithm design, and theoretical analysis.

\textbf{Distinction in problem setting}. We study in-trajectory IRL and formulate an online optimization problem, while ML-IRL studies standard IRL and formulates an offline optimization problem. %Moreover, the input data in our case (i.e., state-action pair) is not i.i.d. while the input data in ML-IRL (i.e., trajectory) is i.i.d.

\textbf{Distinctions in algorithm design}. ML-IRL and our algorithm both update policy and reward in a single loop. However, we propose a novel reward update mechanism specially designed for the in-trajectory learning case. This special design requires to use the current learned policy to complete the expert trajectory, which gives the algorithm the ability to consider for the future. This special design of reward update is novel compared to ML-IRL.

\textbf{Distinctions in theoretical analysis}. The analysis in our paper is substantially different from that in ML-IRL due to three facts: (1) The input data in our paper is not i.i.d., while the input data in ML-IRL is i.i.d. (2) We solve an online optimization problem, while ML-IRL solves an offline optimization problem. (3) Our analysis holds for continuous state-action space, while the analysis of ML-IRL is limited to finite state-action space. We now discuss the distinctions in theoretical analysis caused by the three facts in detail.

In our case, the input data is not i.i.d. but temporally correlated, i.e., the input data $(s_t,a_t)$ is affected by the input data $(s_{t-1},a_{t-1})$ at last time step. In contrast, the input data in ML-IRL is i.i.d. sampled from a pre-collected data set. To solve this non i.i.d. issue of the input data, we propose a novel theoretical technique that has three steps (detailed in Subsection \ref{subsec: theoretical analysis}). Step 1: We propose the stationary distribution $\mu^{\pi_E}$ and quantify the gradient norm difference between the real distribution $\mathbb{P}_t^{\pi_E}$ and this stationary distribution $\mu^{\pi_E}$ in Proposition \ref{prop: gradient gap between distributions}. Step 2: We quantify the local regret over the stationary distribution in Lemma \ref{lemma: local regret under stationary distribution}. The benefit of doing this is that the input data can be regarded as i.i.d. sampled from this stationary distribution. Step 3: We combine step 1 and step 2, and quantify the (local) regret over the real distribution, where the data is not i.i.d., in Theorem \ref{thm: local regret under correlated distribution} and Theorem \ref{thm: regret under linear reward}. We can see that step 1 and step 3 are to solve the non i.i.d. issue of the input data, so that the corresponding theorem statements (Proposition \ref{prop: gradient gap between distributions}, Theorem \ref{thm: local regret under correlated distribution}, and Theorem \ref{thm: regret under linear reward}) are novel compared to ML-IRL because ML-IRL does not have this non i.i.d. issue. The only theorem statement relevant to ML-IRL is Lemma \ref{lemma: local regret under stationary distribution} in step 2 where we both analyze the algorithm over a stationary distribution, and the data is i.i.d. sampled from the stationary distribution.

 However, Lemma \ref{lemma: local regret under stationary distribution} in step 2 still has significant distinctions from ML-IRL because Lemma \ref{lemma: local regret under stationary distribution} quantifies the local regret in the context of online optimization, while ML-IRL quantifies convergence in the context of offline optimization. First, the objective function is dynamically changing in the online setting because the learner observes a new state-action pair at each online iteration, while the objective function is fixed in ML-IRL. Second, the local regret contains the term $L_t(\theta_t;(s_t^E,a_t^E))$, however, $\theta_t$ is computed before the learner knows $(s_t^E,a_t^E)$. This makes it more difficult to quantify the local regret because the learner does not know $L_t$ when it computes $\theta_t$. These two difficulties do not appear in the offline optimization in ML-IRL. To solve these two issues, we need to additionally construct a new time-invariant function $\bar{L}$ in Appendix \ref{subsec: proof of local regret under stationary distribution} and quantify the convergence of the new function $\bar{L}$. Then, in order to quantify the local regret of $\{L_t\}_{t\geq 0}$, we need to quantify the difference between the real loss function $\{L_t\}_{t\geq 0}$ and the constructed loss function $\bar{L}$.

 Moreover, our theoretical analysis holds for continuous state-action space while the theoretical analysis in ML-IRL is limited to finite state-action space. The extension to continuous state-action space brings new difficulties and requires significant novel analysis. In general, the difficulties stem from two aspects: (1) The constants in ML-IRL, e.g., the smoothness constant of the loss function $L$ and the coefficient of convergence rate, include the term $|\mathcal{S}|\times|\mathcal{A}|$. When the state-action space is continuous, those constants are not 
finite because $|\mathcal{S}|\times|\mathcal{A}|$ is now infinite. To address this issue, we propose new methods to bound those constants. For example, in order to show that the loss function $L$ is smooth, rather than using $||\nabla L(\theta_1)-\nabla L(\theta_2)||$ to find the smoothness constant as in ML-IRL, we aim to show that $||\nabla^2 L(\theta)||$ is upper bounded by a constant $C_L$ in Lemma A.2 and this constant $C_L$ does not rely on $|\mathcal{S}|\times|A|$. Given that $||\nabla^2 L(\theta)||\leq C_L$, the loss function $L$ is $C_L$-smooth. (2) Since the action space $\mathcal{A}$ is finite in ML-IRL, their proved properties of the $Q$-function $Q^{\text{soft}}$ (e.g., Lipschitz continuity, contraction property, monotonic improvement, and smoothness) can be easily extended to the value function $V^{\text{soft}}$ by summing over different actions $a\in\mathcal{A}$. When the action space becomes continuous, summing over infinitely many different actions does not preserve those properties. Thus we have to propose new methods to prove those propoerties of the value function $V^{\text{soft}}$. In specific, we prove the Lipschitz continuity, contraction property, monotonic improvement, and smoothness of the value function $V^{\text{soft}}$ in Claims \ref{claim: contraction map}-\ref{claim: lipschitz of V} in Appendix \ref{subsec: proof of local regret under stationary distribution}.

\section{Proof}\label{sec: proof}
This section provides the proof of all the proposition, lemmas, and theorems in the paper. To start with, we first introduce the expression of soft $Q$-function and soft Bellman policy.

\subsection{Notions}\label{subsec: soft Q and soft policy}
The soft $Q$-function and soft value function are:
\begin{align*}
    Q_{\theta,\pi}^{\text{soft}}(s,a)&\triangleq r_{\theta}(s,a)+\gamma \int_{s'\in\mathcal{S}}P(s'|s,a)V_{\theta,\pi}^{\text{soft}}(s')ds', \\
     V_{\theta,\pi}^{\text{soft}}(s)&\triangleq E_{S,A}^{\pi}\biggl[\sum_{t=0}^{\infty}\gamma^t\bigl(r_{\theta}(S_t,A_t)-\log \pi(A_t|S_t)\bigr)\biggl|S_0=s_0^E\biggr]. 
\end{align*}
The soft Bellman policy is as follows:
\begin{align*}
    \pi_{\theta}(a|s)&=\frac{\exp(Q_{\theta}^{\text{soft}}(s,a))}{\exp(V_{\theta}^{\text{soft}}(s))},\\
    Q_{\theta}^{\text{soft}}(s,a)&= r_{\theta}(s,a)+\gamma\int_{s'\in\mathcal{S}}P(s'|s,a)V_{\theta}^{\text{soft}}(s')ds',\\
    V_{\theta}^{\text{soft}}(s)&=\log\biggl(\int_{a\in\mathcal{A}}\exp(Q_{\theta}^{\text{soft}}(s,a))da\biggr).
\end{align*}
It has been proved \cite{haarnoja2017reinforcement} that the soft Bellman policy $\pi_{\theta}$ is the optimal solution of the lower-level problem (4). We define $J_{\theta}(s)\triangleq E^{\pi_{\theta}}_{S,A}[\sum_{t=0}^{\infty}\gamma^tr_{\theta}(S_t,A_t)|S_0=s]$ as the expected cumulative reward of policy $\pi_{\theta}$ starting from state $s$ and $J_{\theta}(s,a)\triangleq E^{\pi_{\theta}}_{S,A}[\sum_{t=0}^{\infty}\gamma^tr_{\theta}(S_t,A_t)|S_0=s,A_0=a]$.

\begin{lemma}\label{lemma: gradient of log pi}
We have the gradient $\nabla_{\theta}\log \pi_{\theta}(a|s)=E^{\pi_{\theta}}_{S,A}[\sum_{t=0}^{\infty}\gamma^t\nabla_{\theta}r_{\theta}(S_t,A_t)|S_0=s,A_0=a]-E^{\pi_{\theta}}_{S,A}[\sum_{t=0}^{\infty}\gamma^t\nabla_{\theta}r_{\theta}(S_t,A_t)|S_0=s]$.
\end{lemma}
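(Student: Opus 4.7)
The plan is to start from the soft Bellman identity $\log \pi_{\theta}(a|s) = Q^{\text{soft}}_{\theta}(s,a) - V^{\text{soft}}_{\theta}(s)$, which is immediate from the soft Bellman policy definition in Appendix B.1 (just take logs of $\pi_{\theta}(a|s) = \exp(Q^{\text{soft}}_{\theta}(s,a))/\exp(V^{\text{soft}}_{\theta}(s))$). Then the claim reduces to showing
\begin{align*}
\nabla_{\theta} Q^{\text{soft}}_{\theta}(s,a) &= E^{\pi_{\theta}}_{S,A}\Bigl[\sum_{t=0}^{\infty}\gamma^t \nabla_{\theta} r_{\theta}(S_t,A_t)\Bigm| S_0=s,A_0=a\Bigr], \\
\nabla_{\theta} V^{\text{soft}}_{\theta}(s) &= E^{\pi_{\theta}}_{S,A}\Bigl[\sum_{t=0}^{\infty}\gamma^t \nabla_{\theta} r_{\theta}(S_t,A_t)\Bigm| S_0=s\Bigr],
\end{align*}
after which subtracting the two identities gives the lemma.

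To prove the $V^{\text{soft}}$ identity, I would differentiate $V^{\text{soft}}_{\theta}(s)=\log\int_{\mathcal{A}} \exp(Q^{\text{soft}}_{\theta}(s,a'))\,da'$ under the integral sign, producing the log-sum-exp identity
\[
\nabla_{\theta} V^{\text{soft}}_{\theta}(s) = \int_{\mathcal{A}} \frac{\exp(Q^{\text{soft}}_{\theta}(s,a'))}{\int_{\mathcal{A}}\exp(Q^{\text{soft}}_{\theta}(s,a''))\,da''}\,\nabla_{\theta} Q^{\text{soft}}_{\theta}(s,a')\,da' = E_{a'\sim \pi_{\theta}(\cdot|s)}\bigl[\nabla_{\theta} Q^{\text{soft}}_{\theta}(s,a')\bigr].
\]
For the $Q^{\text{soft}}$ identity, differentiate the Bellman relation $Q^{\text{soft}}_{\theta}(s,a)=r_{\theta}(s,a)+\gamma\int P(s'|s,a) V^{\text{soft}}_{\theta}(s')\,ds'$, substitute the expression for $\nabla_{\theta}V^{\text{soft}}_{\theta}$ derived above, and obtain the recursion
\[
\nabla_{\theta} Q^{\text{soft}}_{\theta}(s,a) = \nabla_{\theta} r_{\theta}(s,a) + \gamma\, E_{S_1|s,a}\, E_{A_1\sim \pi_{\theta}(\cdot|S_1)}\bigl[\nabla_{\theta} Q^{\text{soft}}_{\theta}(S_1,A_1)\bigr].
\]
Unrolling this recursion along trajectories generated by $\pi_{\theta}$ yields the desired infinite-horizon expectation $E^{\pi_{\theta}}[\sum_{t\geq 0}\gamma^t \nabla_{\theta} r_{\theta}(S_t,A_t)\mid S_0=s,A_0=a]$. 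Feeding this back into the log-sum-exp formula for $\nabla_{\theta} V^{\text{soft}}_{\theta}(s)$ yields the companion identity; subtracting delivers the lemma.

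The main obstacle is justifying the two interchanges of differentiation with integration/summation: differentiating under the integral over $\mathcal{A}$ in the log-sum-exp step, and commuting $\nabla_{\theta}$ with the infinite unrolled sum when establishing the $Q^{\text{soft}}$ identity. Both are controlled by Assumption 1, which gives $\|\nabla_{\theta} r_{\theta}(s,a)\|$ uniformly bounded (hence a uniform bound $\bar{C}_r$ on each summand) and Lipschitz continuity of $\nabla_{\theta} r_{\theta}$; combined with the discount factor $\gamma^t$, dominated convergence applies to legitimize swapping $\nabla_{\theta}$ with $\sum_{t}$. The log-sum-exp swap is similarly standard: the integrand $\exp(Q^{\text{soft}}_{\theta}(s,\cdot))\nabla_{\theta}Q^{\text{soft}}_{\theta}(s,\cdot)$ is dominated by an integrable envelope on compact neighborhoods of $\theta$, so Leibniz's rule applies. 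Apart from these analytical justifications, the proof is a direct recursive calculation starting from the soft Bellman equations of Appendix B.1.
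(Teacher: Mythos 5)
Your proposal is correct and follows essentially the same route as the paper's proof: the paper writes $Z_\theta(s,a)=\exp(Q_\theta^{\text{soft}}(s,a))$ and $Z_\theta(s)=\exp(V_\theta^{\text{soft}}(s))$, differentiates $\log Z_\theta(s)$ to get exactly your log-sum-exp/softmax weighting $\int\pi_\theta(a|s)\nabla_\theta Q_\theta^{\text{soft}}(s,a)\,da$, then unrolls the differentiated Bellman recursion to obtain the two discounted-expectation identities and subtracts. Your added care about dominated convergence for the two interchanges is a point the paper passes over with ``keep the expansion,'' but it does not change the argument.
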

\begin{proof}
Define $Z_{\theta}(s,a)\triangleq \exp(Q_{\theta}^{\text{soft}}(s,a))$ and $Z_{\theta}(s)\triangleq \exp(V_{\theta}^{\text{soft}}(s))$, therefore $Z_{\theta}$ is smooth in $\theta$ given that it is a composition of logarithmic, exponential, and linear functions of $r_{\theta}$ and $r_{\theta}$ is smooth in $\theta$ (Assumption 1).
\begin{align*}
    \nabla_{\theta}\log Z_{\theta}(s)&=\frac{\int_{a\in\mathcal{A}}\nabla_{\theta}Z_{\theta}(s,a)da}{Z_{\theta}(s)},\\
    & =\int_{a\in\mathcal{A}}\frac{Z_{\theta}(s,a)}{Z_{\theta}(s)}\nabla_{\theta}\log Z_{\theta}(s,a)da,\\
    & = \int_{a\in\mathcal{A}}\pi_{\theta}(a|s)\biggl[\nabla_{\theta}r_{\theta}(s,a)+\gamma\int_{s'\in\mathcal{S}}P(s'|s,a)\nabla_{\theta}\log Z_{\theta}(s')ds'\biggr]da,\\
    &= \int_{a\in\mathcal{A}}\pi_{\theta}(a|s)\biggl[\nabla_{\theta}r_{\theta}(s,a)+\gamma\int_{s'\in\mathcal{S}}P(s'|s,a)\int_{a'\in\mathcal{A}}\biggl(\nabla_{\theta}r_{\theta}(s',a')\\
    &+\gamma\int_{s''\in\mathcal{S}}P(s''|s',a')\nabla_{\theta}\log Z_{\theta}(s'')ds''\biggr)da'ds'\biggr]da.
\end{align*}
Keep the expansion, we can get $\nabla_{\theta}\log Z_{\theta}(s)=E^{\pi_{\theta}}_{S,A}[\sum_{t=0}^{\infty}\gamma^t\nabla_{\theta}r_{\theta}(S_t,A_t)|S_0=s]$ and similarly we can get $\nabla_{\theta}\log Z_{\theta}(s,a)=E^{\pi_{\theta}}_{S,A}[\sum_{t=0}^{\infty}\gamma^t\nabla_{\theta}r_{\theta}(S_t,A_t)|S_0=s,A_0=a]$. Thus we have the gradient $\nabla_{\theta}\log\pi_{\theta}(a|s)=\nabla_{\theta}\log Z_{\theta}(s,a)-\nabla_{\theta}\log Z_{\theta}(s)$.
\end{proof}

\subsection{Proof of Lemma \ref{lemma: in-trajectory gradient}}\label{sec: proof of gradient of L}
Recall that $\sum_{i=0}^tL_i(\theta;(s_i^E,a_i^E))=-\sum_{i=0}^{t}\gamma^i \log\pi_{\theta}(a_i^E|s_i^E)$. When the dynamics $P$ is deterministic, we have that
\begin{align*}
    &\nabla\sum_{i=0}^tL_i(\theta;(s_i^E,a_i^E))=-\sum_{i=0}^{t}\gamma^i \nabla_{\theta}\log\pi_{\theta}(a_i^E|s_i^E),\\
    & = -\sum_{i=0}^t \gamma^i\biggl[\nabla_{\theta} Q_{\theta}^{\text{soft}}(s_i^E,a_i^E)-\nabla_{\theta}V_{\theta}^{\text{soft}}(s_i^E)\biggr],\\
    & = -\sum_{i=0}^t \gamma^i\biggl[\nabla_{\theta}r_{\theta}(s_i^E,a_i^E)+\gamma\nabla_{\theta} V_{\theta}^{\text{soft}}(s_{i+1}^E)-\nabla_{\theta}V_{\theta}^{\text{soft}}(s_i^E)\biggr],\\
    &=-\sum_{i=0}^t\gamma^i\nabla_{\theta}r_{\theta}(s_i^E,a_i^E)-\sum_{i=1}^{t+1}\gamma^i\nabla_{\theta}V_{\theta}^{\text{soft}}(s_i^E)+\sum_{i=0}^{t}\gamma^i\nabla_{\theta}V_{\theta}^{\text{soft}}(s_i^E),\\
    &=-\sum_{i=0}^t\gamma^i\nabla_{\theta}r_{\theta}(s_i^E,a_i^E)-\gamma^{t+1}\nabla_{\theta}V_{\theta}^{\text{soft}}(s_{t+1}^E)+\nabla_{\theta}V_{\theta}^{\text{soft}}(s_0^E),\\
    &\overset{(a)}{=}-\sum_{i=0}^t\gamma^i\nabla_{\theta}r_{\theta}(s_i^E,a_i^E)-E_{S,A}^{\pi_{\theta}}\biggl[\sum_{i=t+1}^{\infty}\gamma^i\nabla_{\theta}r_{\theta}(S_i,A_i)|S_i=s_{t+1}^E\biggr]\\
    &+E_{S,A}^{\pi_{\theta}}\biggl[\sum_{i=0}^{\infty}\gamma^i\nabla_{\theta}r_{\theta}(S_i,A_i)|S_0=s_0^E\biggr],\\
    &=-\sum_{i=0}^t\gamma^i\nabla_{\theta}r_{\theta}(s_i^E,a_i^E)-E_{S,A}^{\pi_{\theta}}\biggl[\sum_{i=t+1}^{\infty}\gamma^i\nabla_{\theta}r_{\theta}(S_i,A_i)|S_t=s_t^E,A_t=a_t^E\biggr]\\
    &+E_{S,A}^{\pi_{\theta}}\biggl[\sum_{i=0}^{\infty}\gamma^i\nabla_{\theta}r_{\theta}(S_i,A_i)|S_0=s_0^E\biggr],
    
\end{align*}
where equality $(a)$ follows from the proof of Lemma \ref{lemma: gradient of log pi}. 

When the dynamics $P$ is stochastic, we can prove that the above gradient is an unbiased estimate of $\nabla E_{\{(S_i^E,A_i^E)\sim\mathbb{P}_i^{\pi_E}(\cdot,\cdot)(\cdot,\cdot)\}_{i\geq 0}}\biggl[\sum_{i=0}^tL_i(\theta;(S_i^E,A_i^E))\biggr]$:
\begin{align*}
    &\nabla E_{\{(S_i^E,A_i^E)\sim\mathbb{P}_i^{\pi_E}(\cdot,\cdot)(\cdot,\cdot)\}_{i\geq 0}}\biggl[\sum_{i=0}^tL_i(\theta;(S_i^E,A_i^E))\biggr],\\
    &=\nabla E_{\{(S_i^E,A_i^E)\sim\mathbb{P}_i^{\pi_E}(\cdot,\cdot)(\cdot,\cdot)\}_{i\geq 0}}\biggl[-\sum_{i=0}^{t}\gamma^i \nabla_{\theta}\log\pi_{\theta}(A_i^E|S_i^E)\biggr],\\
    & =E_{\{(S_i^E,A_i^E)\sim\mathbb{P}_i^{\pi_E}(\cdot,\cdot)(\cdot,\cdot)\}_{i\geq 0}}\biggl[-\sum_{i=0}^t \gamma^i[\nabla_{\theta}r_{\theta}(S_i^E,A_i^E)+\gamma E_{S_{i+1}\sim P(\cdot|S_i^E,A_i^E)}[\nabla_{\theta} V_{\theta}^{\text{soft}}(S_{i+1})]\\
    &-\nabla_{\theta}V_{\theta}^{\text{soft}}(S_i^E)]\biggr],\\
    & =E_{\{(S_i^E,A_i^E)\sim\mathbb{P}_i^{\pi_E}(\cdot,\cdot)(\cdot,\cdot)\}_{i\geq 0}}\biggl[-\sum_{i=0}^t \gamma^i[\nabla_{\theta}r_{\theta}(S_i^E,A_i^E)+\gamma \nabla_{\theta} V_{\theta}^{\text{soft}}(S_{i+1}^E)-\nabla_{\theta}V_{\theta}^{\text{soft}}(S_i^E)]\biggr],\\
    & = E_{\{(S_i^E,A_i^E)\sim\mathbb{P}_i^{\pi_E}(\cdot,\cdot)(\cdot,\cdot)\}_{i\geq 0}}\biggl[-\sum_{i=0}^t \gamma^i\nabla_{\theta}r_{\theta}(S_i^E,A_i^E)-\gamma^{t+1}\nabla_{\theta}V_{\theta}^{\text{soft}}(S_{t+1}^E)\\
    &+E_{S,A}^{\pi_{\theta}}\biggl[\sum_{i=0}^{\infty}\gamma^i\nabla_{\theta}r_{\theta}(S_i,A_i)|S_0=S_0^E\biggr]\biggr],\\
    & \overset{(b)}{=}E_{\{(S_i^E,A_i^E)\sim\mathbb{P}_i^{\pi_E}(\cdot,\cdot)(\cdot,\cdot)\}_{i\geq 0}}\biggl[-\sum_{i=0}^t\gamma^i\nabla_{\theta}r_{\theta}(S_i^E,A_i^E)\\
    &-E_{S,A}^{\pi_{\theta}}[\sum_{i=t+1}^{\infty}\gamma^i\nabla_{\theta}r_{\theta}(S_i,A_i)|S_t=S_t^E,A_t=A_t^E]+E_{S,A}^{\pi_{\theta}}[\sum_{i=0}^{\infty}\gamma^i\nabla_{\theta}r_{\theta}(S_i,A_i)|S_0=S_0^E]\biggr],
\end{align*}
where equality $(b)$ follows from the fact that 
\begin{align*}
    &E_{\{(S_i^E,A_i^E)\sim\mathbb{P}_i^{\pi_E}(\cdot,\cdot)(\cdot,\cdot)\}_{i\geq 0}}\biggl[E_{S,A}^{\pi_{\theta}}[\sum_{i=t+1}^{\infty}\gamma^i\nabla_{\theta}r_{\theta}(S_i,A_i)|S_t=S_t^E,A_t=A_t^E]\biggr],\\
    &=E_{\{(S_i^E,A_i^E)\sim\mathbb{P}_i^{\pi_E}(\cdot,\cdot)(\cdot,\cdot)\}_{i\geq 0}}\biggl[E_{S,A}^{\pi_{\theta}}[\sum_{i=t+1}^{\infty}\gamma^i\nabla_{\theta}r_{\theta}(S_i,A_i)|S_{t+1}=S_{t+1}^E]\biggr],
\end{align*}
because $\mathbb{P}_{t+1}^{\pi_E}(\cdot)=\mathbb{P}_t^{\pi_E}(S_t^E,A_t^E)P(\cdot|S_t^E,A_t^E)$ and $S_{t+1}^E\sim P(\cdot|S_t^E,A_t^E)$. 

Since we quantify the local regret in expectation in Theorem 1, this unbiased estimate can be used when the dynamics is stochastic.

\subsection{Proof of Proposition \ref{prop: gradient gap between distributions}}\label{sec: proof of the difference between two distributions}

From Assumption 2, we know that $d_{\text{TV}}(\mathbb{P}_t^{\pi_E}(\cdot),\mu^{\pi_E}(\cdot))=\frac{1}{2}\int_{s\in\mathcal{S}}|\mathbb{P}_t^{\pi_E}(s)-\mu^{\pi_E}(s)|ds\leq C_M\rho^t$ where the initial state is $s_0$. For any state-action pair $(s,a)\in\mathcal{S}\times\mathcal{A}$, we know that $\mathbb{P}_t^{\pi_E}(s,a)=\mathbb{P}_t^{\pi_E}(s)\pi_E(a|s)$ and $\mu^{\pi_E}(s,a)=\mu^{\pi_E}(s)\pi_E(a|s)$. Therefore, we have that:
\begin{align*}
    &d_{\text{TV}}(\mathbb{P}^{\pi_E}(\cdot,\cdot),\mu^{\pi_E}(\cdot,\cdot)),\\
    &=\frac{1}{2}\int_{s\in\mathcal{S}}\int_{a\in\mathcal{A}}|\mathbb{P}_t^{\pi_E}(s)\pi_E(a|s)-\mu^{\pi_E}(s)\pi_E(a|s)|dsda,\\
    &=\frac{1}{2}\int_{s\in\mathcal{S}}\int_{a\in\mathcal{A}}|\mathbb{P}_t^{\pi_E}(s)-\mu^{\pi_E}(s)|\pi_E(a|s)dsda,\\
    &=\frac{1}{2}\int_{s\in\mathcal{S}}|\mathbb{P}_t^{\pi_E}(s)-\mu^{\pi_E}(s)|ds,\\
    &\leq C_M\rho^t.
\end{align*}

\begin{claim}\label{claim: bounded trajectory of theta_t}
The trajectory of $\theta_t$ is bounded, i.e., $||\theta_t-\bar{\theta}||\leq\frac{2\bar{C}_r}{\lambda}$ for any $t\geq 0$.
\end{claim}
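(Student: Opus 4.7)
The plan is to prove this claim by induction on $t$. The base case is immediate: by line 1 of Algorithm \ref{alg: MERIT}, we initialize $\theta_0 = \bar\theta$, so $\|\theta_0 - \bar\theta\| = 0 \le 2\bar C_r/\lambda$.

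For the inductive step, I first decompose the gradient estimate as $g_t = \tilde g_t + \beta_t(\theta_t - \bar\theta)$, where $\beta_t := \lambda(1-\gamma^{t+1})/(1-\gamma)$ is the coefficient coming from the meta-regularization and $\tilde g_t := \sum_{i=0}^{\infty}\gamma^i\nabla_{\theta}r_{\theta_t}(s_i',a_i') - \sum_{i=0}^{\infty}\gamma^i\nabla_{\theta}r_{\theta_t}(s_i'',a_i'')$ is the reward-gradient difference of the two rollouts. Substituting into the update rule gives
\begin{align*}
\theta_{t+1} - \bar\theta = (1 - \alpha_t \beta_t)(\theta_t - \bar\theta) - \alpha_t \tilde g_t.
\end{align*}
Assumption \ref{ass: parametric model constants} (Lipschitz continuity of $r_\theta$ in $\theta$) implies $\|\nabla_\theta r_\theta(s,a)\| \le \bar C_r$ uniformly in $(s,a)$, so each of the two discounted sums making up $\tilde g_t$ is bounded in norm by $\bar C_r/(1-\gamma)$ and therefore $\|\tilde g_t\| \le 2\bar C_r/(1-\gamma)$. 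The step-size restriction $\alpha_t \in (0,(1-\gamma)/\lambda)$ from Proposition \ref{prop: gradient gap between distributions} gives $\alpha_t \beta_t \in (0, 1-\gamma^{t+1}) \subset (0,1)$, so the multiplier $1-\alpha_t\beta_t$ is a proper contraction.

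Applying the triangle inequality together with the inductive hypothesis then yields
\begin{align*}
\|\theta_{t+1} - \bar\theta\| \le (1-\alpha_t\beta_t)\,\tfrac{2\bar C_r}{\lambda} + \alpha_t \tfrac{2\bar C_r}{1-\gamma},
\end{align*}
and the induction closes upon checking that the right-hand side is at most $2\bar C_r/\lambda$, which reduces to a direct algebraic inequality involving $\alpha_t$, $\beta_t$, $\gamma$, and $\lambda$. I expect the main obstacle to be this last algebraic step: the bounded-drift term $\alpha_t \cdot 2\bar C_r/(1-\gamma)$ must be absorbed by the regularization-driven contraction $\alpha_t \beta_t \cdot 2\bar C_r/\lambda$ at every $t$. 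This balance is exactly what the structural factor $\beta_t = \lambda(1-\gamma^{t+1})/(1-\gamma)$ and the admissible step-size range are designed to provide, and exploiting the initial equality $\theta_0 = \bar\theta$ (rather than only an inequality) is what allows the induction to close at the sharp constant $2\bar C_r/\lambda$ instead of a looser radius.
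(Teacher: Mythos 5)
Your setup is exactly the paper's: the same decomposition $g_t=\tilde g_t+\beta_t(\theta_t-\bar\theta)$, the same contraction form $\theta_{t+1}-\bar\theta=(1-\alpha_t\beta_t)(\theta_t-\bar\theta)-\alpha_t\tilde g_t$, and the same bound $\|\tilde g_t\|\le 2\bar C_r/(1-\gamma)$. The problem is precisely the ``direct algebraic inequality'' you defer to the end: it does not hold. Closing the induction at radius $2\bar C_r/\lambda$ requires
\begin{align*}
(1-\alpha_t\beta_t)\frac{2\bar C_r}{\lambda}+\frac{2\alpha_t\bar C_r}{1-\gamma}\le\frac{2\bar C_r}{\lambda}
\quad\Longleftrightarrow\quad
\frac{1}{1-\gamma}\le\frac{\beta_t}{\lambda}=\frac{1-\gamma^{t+1}}{1-\gamma}
\quad\Longleftrightarrow\quad
\gamma^{t+1}\le 0,
\end{align*}
which is false for every $t$ when $\gamma\in(0,1)$. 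So the drift term is \emph{not} absorbed by the regularization-driven contraction at radius $2\bar C_r/\lambda$; the factor $1-\gamma^{t+1}<1$ works against you, not for you, and exploiting $\theta_0=\bar\theta$ does not rescue the sharp constant (already the one-step recursion bound can exceed $2\bar C_r/\lambda$ at $t=1$ or $t=2$ for the step sizes used in the paper). What the recursion does support is the per-step radius $R_t=\frac{2\bar C_r}{\lambda(1-\gamma^{t+1})}$, whose supremum over $t$ is $\frac{2\bar C_r}{\lambda(1-\gamma)}$, attained at $t=0$; since $R_t$ is decreasing in $t$ and $a_0=0$, your induction closes cleanly at the larger radius $\frac{2\bar C_r}{\lambda(1-\gamma)}$ using only $\beta_t\ge\lambda$.

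For what it is worth, the paper's own proof follows your route step for step and stumbles at the identical spot: it replaces the coefficient $1-\frac{\alpha_t\lambda(1-\gamma^{t+1})}{1-\gamma}$ by $1-\frac{\alpha_t\lambda}{1-\gamma}$ as if the latter were an upper bound, but since $1-\gamma^{t+1}<1$ the inequality runs the other way. So your instinct that this last step is ``the main obstacle'' is correct, but the obstacle is real rather than routine: as written, neither your argument nor the paper's establishes the constant $2\bar C_r/\lambda$, and the honest conclusion of this contraction argument is $\|\theta_t-\bar\theta\|\le\frac{2\bar C_r}{\lambda(1-\gamma)}$.
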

\begin{proof}
\begin{align*}
    &||\theta_{t+1}-\bar{\theta}||=||\theta_t-\alpha_tg_t-\bar{\theta}||,\\
    & = \biggl|\biggl|\theta_t-\bar{\theta}-\alpha_t\biggl[\sum_{i=0}^{\infty}\gamma^i\nabla_{\theta}r_{\theta_t}(s_i',a_i')-\sum_{i=0}^{\infty}\gamma^i\nabla_{\theta}r_{\theta_t}(s_i'',a_i'')+\frac{\lambda(1-\gamma^{t+1})}{1-\gamma}(\theta_t-\bar{\theta})\biggr]\biggr|\biggr|,\\
    &=\biggl|\biggl|(1-\frac{\alpha_t\lambda(1-\gamma^{t+1})}{1-\gamma})(\theta_t-\bar{\theta})-\alpha_t\biggl[\sum_{i=0}^{\infty}\gamma^i\nabla_{\theta}r_{\theta_t}(s_i',a_i')-\sum_{i=0}^{\infty}\gamma^i\nabla_{\theta}r_{\theta_t}(s_i'',a_i'')\biggr]\biggr|\biggr|,\\
    & \overset{(a)}{\leq} (1-\frac{\alpha_t\lambda(1-\gamma^{t+1})}{1-\gamma})||\theta_t-\bar{\theta}||+\alpha_t\biggl|\biggl|\sum_{i=0}^{\infty}\gamma^i\nabla_{\theta}r_{\theta_t}(s_i',a_i')-\sum_{i=0}^{\infty}\gamma^i\nabla_{\theta}r_{\theta_t}(s_i'',a_i'')\biggr|\biggr|,\\
    &\overset{(b)}{\leq} (1-\frac{\alpha_t\lambda(1-\gamma^{t+1})}{1-\gamma})||\theta_t-\bar{\theta}||+\frac{2\alpha_t\bar{C}_r}{1-\gamma},\\
    & \leq (1-\frac{\alpha_t\lambda}{1-\gamma})||\theta_t-\bar{\theta}||+\frac{2\alpha_t\bar{C}_r}{1-\gamma},
\end{align*}
where $(a)$ follows triangle inequality and $(b)$ uses the upper bound of $\nabla_{\theta}r_{\theta}$ in Assumption 1. Therefore, we have the following relation:
\begin{align*}
    &||\theta_{t+1}-\bar{\theta}||-\frac{2\bar{C}_r}{\lambda}\leq (1-\frac{\alpha_t\lambda}{1-\gamma})\biggl(||\theta_t-\bar{\theta}||-\frac{2\bar{C}_r}{\lambda}\biggr),\\
    & \Rightarrow ||\theta_t-\bar{\theta}||\leq (1-\frac{\alpha_t\lambda}{1-\gamma})^t\biggl(||\theta_0-\bar{\theta}||-\frac{2\bar{C}_r}{\lambda}\biggr)+\frac{2\bar{C}_r}{\lambda},\\
    &\overset{(c)}{=}\frac{2\bar{C}_r}{\lambda}\bigg[1-(1-\frac{\alpha_t\lambda}{1-\gamma})^t\biggr]\overset{(d)}{\leq} \frac{2\bar{C}_r}{\lambda},
\end{align*}
where $(c)$ follows the fact that $\theta_0=\bar{\theta}$ and $(d)$ follows the fact that $\alpha_t\leq \frac{1-\gamma}{\lambda}$.
\end{proof}

Recall that the loss function $L_i(\theta;(s_i^E,a_i^E))=-\gamma^i\log\pi_{\theta}(a_i^E|s_i^E)+\frac{\lambda\gamma^i}{2}||\theta-\bar{\theta}||^2$ and thus $\nabla L_i(\theta;(s_i^E,a_i^E))=-\gamma^i[\nabla_{\theta}Q_{\theta}^{\text{soft}}(s_i^E,a_i^E)-\nabla_{\theta}V_{\theta}^{\text{soft}}(s_i^E)]+\lambda\gamma^i(\theta-\bar{\theta})$. From Lemma \ref{lemma: gradient of log pi}, we know that $\nabla_{\theta}V_{\theta}^{\text{soft}}(s_i^E)=E^{\pi_{\theta}}_{S,A}[\sum_{t=0}^{\infty}\gamma^t\nabla_{\theta}r_{\theta}(S_t,A_t)|S_0=s_i^E]$ and $\nabla_{\theta}Q_{\theta}^{\text{soft}}(s_i^E,a_i^E)=E^{\pi_{\theta}}_{S,A}[\sum_{t=0}^{\infty}\gamma^t\nabla_{\theta}r_{\theta}(S_t,A_t)|S_0=s_i^E,A_0=a_i^E]$. Then, $||\nabla_{\theta}V_{\theta}^{\text{soft}}(s_i^E)||\leq \frac{\bar{C}_r}{1-\gamma}$ and $||\nabla_{\theta}Q_{\theta}^{\text{soft}}(s_i^E,a_i^E)||\leq \frac{\bar{C}_r}{1-\gamma}$.

Now we can see that
\begin{align*}
    &||\nabla L_i(\theta_t;;(s_i^E,a_i^E))||=\gamma^i||\nabla_{\theta}Q_{\theta_t}^{\text{soft}}(s_i^E,a_i^E)-\nabla_{\theta}V_{\theta_t}^{\text{soft}}(s_i^E)+\lambda(\theta_t-\bar{\theta})||,\\
    &\leq\gamma^i||\nabla_{\theta}Q_{\theta_t}^{\text{soft}}(s_i^E,a_i^E)-\nabla_{\theta}V_{\theta_t}^{\text{soft}}(s_i^E)+\lambda(\theta_t-\bar{\theta})||\leq \gamma^i\biggl(\frac{2\bar{C}_r}{1-\gamma}+2\bar{C}_r\biggr),\\
    &\Rightarrow ||\nabla L_i(\theta_t;(s_i^E,a_i^E))||^2\leq 4\bar{C}_r^2\gamma^{2i}\biggl(\frac{2-\gamma}{1-\gamma}\biggr)^2.
\end{align*}

Therefore, we have that 
\begin{align*}
    &\biggl|E_{(S_i^E,A_i^E)\sim\mathbb{P}_i^{\pi_E}(\cdot,\cdot)}\biggl[||\nabla L_i(\theta_t;(S_i^E,A_i^E))||^2\biggr]-E_{(S_i^E,A_i^E)\sim\mu^{\pi_E}(\cdot,\cdot)}\biggl[||\nabla L_i(\theta_t;(S_i^E,A_i^E))||^2\biggr]\biggr|,\\
    &= \biggl|\int_{s\in\mathcal{S}}\int_{a\in\mathcal{A}}\mathbb{P}_i^{\pi_E}(s,a)||\nabla L_i(\theta_t;(s,a))||^2dads\\
    &-\int_{s\in\mathcal{S}}\int_{a\in\mathcal{A}}\mu^{\pi_E}(s,a)||\nabla L_i(\theta_t;(s,a))||^2dads\biggr|,\\
    &\leq \int_{s\in\mathcal{S}}\int_{a\in\mathcal{A}}|\mathbb{P}_i^{\pi_E}(s,a)-\mu^{\pi_E}(s,a)|\cdot||\nabla L_i(\theta_t;(s,a))||^2dads,\\
    &\leq 2C_M\rho^i\cdot4\bar{C}_r^2\gamma^{2i}\biggl(\frac{2-\gamma}{1-\gamma}\biggr)^2=8C_M\bar{C}_r^2\biggl(\frac{2-\gamma}{1-\gamma}\biggr)^2\rho^i\gamma^{2i}.
\end{align*}

\begin{lemma}\label{lemma: smoothness of L}
    Suppose Assumptions 1-2 hold, the we have the following for any $(s,a)\in\mathcal{S}\times\mathcal{A}$ and any $\theta_1,\theta_2,t$: $||\nabla L_t(\theta_1,(s,a))-\nabla L_t(\theta_2,(s,a))||\leq C_L||\theta_1-\theta_2||$ and $|Q_{\theta_1,\pi_{\theta_1}}^{\text{soft}}(s,a)-Q_{\theta_2,\pi_{\theta_2}}^{\text{soft}}(s,a)|\leq C_Q||\theta_1-\theta_2||$, where $C_L=\frac{2\tilde{C}_r}{1-\gamma}+\frac{4\bar{C}_r^3}{(1-\gamma)^4}+\lambda$ and $C_Q=\frac{\bar{C}_r}{1-\gamma}$.
\end{lemma}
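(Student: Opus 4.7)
I will establish the two conclusions in the reverse order in which they are stated, first the Lipschitzness of $Q^{\text{soft}}_{\theta,\pi_\theta}$ in $\theta$, since it is an ingredient for the Lipschitzness of $\nabla L_t$. For the $Q$-bound, observe that $Q^{\text{soft}}_{\theta,\pi_\theta}$ is the unique fixed point of the soft Bellman operator $(\mathcal{T}_\theta Q)(s,a) = r_\theta(s,a) + \gamma \int P(s'|s,a) \log \int \exp Q(s',a')\,da'\,ds'$, which is a $\gamma$-contraction in sup-norm (standard, because $\log\int\exp(\cdot)\,da$ is $1$-Lipschitz). Writing $Q^{\text{soft}}_{\theta_i} = \mathcal T_{\theta_i} Q^{\text{soft}}_{\theta_i}$, inserting $\mathcal T_{\theta_1} Q^{\text{soft}}_{\theta_2}$ by triangle inequality, and using Assumption \ref{ass: parametric model constants} to bound $|r_{\theta_1}(s,a)-r_{\theta_2}(s,a)|\leq\bar C_r||\theta_1-\theta_2||$, I will obtain $||Q^{\text{soft}}_{\theta_1}-Q^{\text{soft}}_{\theta_2}||_\infty \leq \gamma ||Q^{\text{soft}}_{\theta_1}-Q^{\text{soft}}_{\theta_2}||_\infty + \bar C_r||\theta_1-\theta_2||$; rearranging yields $C_Q=\bar C_r/(1-\gamma)$.

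For the gradient bound, I would decompose $\nabla L_t(\theta;(s,a)) = -\gamma^t[\nabla_\theta Q^{\text{soft}}_\theta(s,a) - \nabla_\theta V^{\text{soft}}_\theta(s)] + \lambda\gamma^t(\theta-\bar\theta)$ via the soft Bellman identity $\log\pi_\theta(a|s) = Q^{\text{soft}}_\theta(s,a) - V^{\text{soft}}_\theta(s)$; the regularizer piece immediately supplies the $\lambda||\theta_1-\theta_2||$ contribution in $C_L$. For the score piece, the proof of Lemma \ref{lemma: gradient of log pi} writes $G_\theta(s,a):=\nabla_\theta Q^{\text{soft}}_\theta(s,a) = \nabla_\theta r_\theta(s,a) + \gamma\int P(s'|s,a) H_\theta(s')\,ds'$ and $H_\theta(s):=\nabla_\theta V^{\text{soft}}_\theta(s) = \int\pi_\theta(a|s) G_\theta(s,a)\,da$, each bounded in norm by $\bar C_r/(1-\gamma)$ via geometric summation. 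Setting $D_G := \sup_{s,a}||G_{\theta_1}-G_{\theta_2}||$ and $D_H := \sup_s||H_{\theta_1}-H_{\theta_2}||$, triangle inequality on $G$ together with Assumption \ref{ass: parametric model constants} gives $D_G \leq \tilde C_r||\theta_1-\theta_2|| + \gamma D_H$, while splitting $H$'s difference into a ``same policy, different integrand'' piece and a ``different policies, common integrand'' piece gives $D_H \leq D_G + ||G_{\theta_2}||_\infty \cdot \sup_s \int|\pi_{\theta_1}(a|s)-\pi_{\theta_2}(a|s)|\,da$. The $L^1$-Lipschitzness of $\pi_\theta$ in $\theta$ follows by writing $\nabla_\theta\pi_\theta = \pi_\theta\nabla_\theta\log\pi_\theta$, bounding $||\nabla_\theta\log\pi_\theta||_\infty \leq 2\bar C_r/(1-\gamma)$ via $\nabla_\theta\log\pi_\theta = G_\theta - H_\theta$, and integrating in $a$ (using $\int\pi_\theta\,da=1$), giving $\int|\pi_{\theta_1}-\pi_{\theta_2}|\,da \leq (2\bar C_r/(1-\gamma))||\theta_1-\theta_2||$. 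Solving the resulting linear system in $(D_G, D_H)$ and adding the regularizer contribution will reproduce the constant $C_L$ in the statement.

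The main difficulty is the nested $\theta$-dependence in the expectations defining $G_\theta$ and $H_\theta$: both the integrand $\nabla_\theta r_\theta$ and the sampling policy $\pi_\theta$ depend on $\theta$, so a na\"ive differentiation under the integral sign is not available. Because the action space is continuous, densities are not uniformly bounded, so the Lipschitzness of $\pi_\theta$ in $\theta$ has to be phrased in total-variation ($L^1$) sense and pushed through the expectation \emph{before} taking the supremum over actions, rather than applied pointwise. Each such add-and-subtract step compounds an additional $\bar C_r/(1-\gamma)$ factor through the sup-norm bound on $G_\theta$ and through the contraction rate in the $G_\theta$ recursion, and it is this compounding that drives the higher-order $\bar C_r^3/(1-\gamma)^4$ term appearing in $C_L$.
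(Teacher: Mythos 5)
Your proposal is sound and establishes both claims, but by a genuinely different route from the paper on both halves. For the $Q$-bound the paper does not use the contraction property at all: it invokes the closed form $\nabla_{\theta}Q_{\theta}^{\text{soft}}(s,a)=E_{S,A}^{\pi_{\theta}}[\sum_{t\geq 0}\gamma^t\nabla_{\theta}r_{\theta}(S_t,A_t)|S_0=s,A_0=a]$ from Lemma \ref{lemma: gradient of log pi} and reads off $||\nabla_{\theta}Q_{\theta}^{\text{soft}}||\leq \bar{C}_r/(1-\gamma)=C_Q$. Your fixed-point argument is more elementary (it needs only the Lipschitzness of $r_{\theta}$ in $\theta$, not differentiability of $Q_{\theta}^{\text{soft}}$) and lands on the same constant; just justify that $||Q_{\theta_1}^{\text{soft}}-Q_{\theta_2}^{\text{soft}}||_{\infty}$ is finite before rearranging, e.g.\ by iterating both operators from a common initial $Q$ and passing to the limit. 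For the $\nabla L_t$-bound the paper differentiates once more and bounds the Hessians $\nabla_{\theta\theta}^2Q_{\theta}^{\text{soft}}$ and $\nabla_{\theta\theta}^2V_{\theta}^{\text{soft}}$, which produces a score-function term $\nabla_{\theta}\pi_{\theta}=\pi_{\theta}\nabla_{\theta}\log\pi_{\theta}$ inside the trajectory expectation; you instead bound first-order differences through the coupled recursions for $G_{\theta}$ and $H_{\theta}$ together with the total-variation Lipschitzness of $\pi_{\theta}$. Both arguments rest on the same two ingredients ($||\nabla_{\theta}\log\pi_{\theta}||\leq 2\bar{C}_r/(1-\gamma)$ and $||G_{\theta}||\leq\bar{C}_r/(1-\gamma)$), but your $L^1$ treatment of the policy is the more careful one for continuous actions, where $\pi_{\theta}(a|s)$ is a density and need not be bounded by one (the paper's Hessian bound implicitly uses $\pi_{\theta}(a|s)\leq 1$).

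One inaccuracy: your system does not ``reproduce the constant $C_L$ in the statement.'' Solving $D_G\leq\tilde{C}_r\delta+\gamma D_H$ and $D_H\leq D_G+\frac{2\bar{C}_r^2}{(1-\gamma)^2}\delta$ with $\delta=||\theta_1-\theta_2||$ gives $D_G+D_H\leq\frac{2\tilde{C}_r}{1-\gamma}\delta+\frac{2(1+\gamma)\bar{C}_r^2}{(1-\gamma)^3}\delta$, hence a Lipschitz constant $\frac{2\tilde{C}_r}{1-\gamma}+\frac{4\bar{C}_r^2}{(1-\gamma)^3}+\lambda$: the middle term is $\bar{C}_r^2/(1-\gamma)^3$, not the $\bar{C}_r^3/(1-\gamma)^4$ of the statement, and your bound implies the stated one only when $\bar{C}_r\geq 1-\gamma$. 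This is not a defect of your argument---the paper's own derivation passes through the step $\bigl(\frac{\bar{C}_r}{1-\gamma}+\frac{\bar{C}_r}{1-\gamma}\bigr)\cdot\frac{\bar{C}_r}{1-\gamma}=\frac{2\bar{C}_r^3}{(1-\gamma)^3}$, which is itself off by a factor of $\bar{C}_r/(1-\gamma)$ and, once corrected, yields exactly your constant---but you should state the constant your method actually produces rather than assert it matches the one in the lemma.
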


\begin{proof}

Note that $Q_{\theta,\pi_{\theta}}^{\text{soft}}=Q_{\theta}^{\text{soft}}$ and $\nabla_{\theta}Q_{\theta}^{\text{soft}}(s,a)=E^{\pi_{\theta}}_{S,A}[\sum_{t=0}^{\infty}\gamma^t\nabla_{\theta}r_{\theta}(S_t,A_t)|S_0=s,A_0=a]$ (proof of Lemma \ref{lemma: gradient of log pi}). Therefore, we have that
\begin{align*}
    ||\nabla_{\theta}Q_{\theta}^{\text{soft}}(s,a)||\leq\frac{\bar{C}_r}{1-\gamma}\triangleq C_Q
\end{align*}

We know from Lemma 1 that $\nabla L_t(\theta;(s_t^E,a_t^E))=-\gamma^t[\nabla_{\theta}Q_{\theta}^{\text{soft}}(s_t^E,a_t^E)-\nabla_{\theta}V_{\theta}^{\text{soft}}(s_t^E)]+\lambda\gamma^t(\theta-\bar{\theta})$. To find the smoothness constant of $L_t$, we need to compute the Hessian of $L_t$. First, we have that
\begin{align*}
    &\nabla_{\theta\theta}^2Q_{\theta}^{\text{soft}}(s,a)=\nabla_{\theta}E^{\pi_{\theta}}_{S,A}[\sum_{t=0}^{\infty}\gamma^t\nabla_{\theta}r_{\theta}(S_t,A_t)|S_0=s,A_0=a],\\
    &= \nabla_{\theta\theta}^2r_{\theta}(s,a)+\gamma\int_{s'\in\mathcal{S}}P(s'|s,a)\nabla_{\theta}E^{\pi_{\theta}}_{S,A}[\sum_{t=0}^{\infty}\gamma^t\nabla_{\theta}r_{\theta}(S_t,A_t)|S_0=s']ds',\\
    &=\nabla_{\theta\theta}^2r_{\theta}(s,a)\\
    &+\gamma\int_{s'\in\mathcal{S}}P(s'|s,a)\nabla_{\theta}\int_{a'\in\mathcal{A}}\pi_{\theta}(a'|s')E^{\pi_{\theta}}_{S,A}[\sum_{t=0}^{\infty}\gamma^t\nabla_{\theta}r_{\theta}(S_t,A_t)|S_0=s',A_0=a']da'ds',\\
    &=\nabla_{\theta\theta}^2r_{\theta}(s,a)+\gamma\int_{s'\in\mathcal{S}}P(s'|s,a)\int_{a'\in\mathcal{A}}\biggl[\nabla_{\theta}\pi_{\theta}(a'|s')\cdot E^{\pi_{\theta}}_{S,A}[\sum_{t=0}^{\infty}\gamma^t\nabla_{\theta}r_{\theta}(S_t,A_t)|S_0=s',\\
    &A_0=a']+\pi_{\theta}(a'|s')\cdot\nabla_{\theta}E^{\pi_{\theta}}_{S,A}[\sum_{t=0}^{\infty}\gamma^t\nabla_{\theta}r_{\theta}(S_t,A_t)|S_0=s',A_0=a']\biggr]da'ds'.
\end{align*}

Keep the expansion, we can get 
\begin{align*}
    &\nabla_{\theta\theta}^2Q_{\theta}^{\text{soft}}(s,a)=E_{S,A}^{\pi_{\theta}}[\sum_{t=0}^{\infty}\gamma^t\nabla_{\theta\theta}^2r_{\theta}(S_t,A_t)|S_0=s_0,A_0=a_0]\\
    &+E_{S,A}^{\pi_{\theta}}\biggl[\sum_{i=0}^{\infty}\gamma^i\nabla_{\theta}\pi_{\theta}(A_i|S_i)\cdot E^{\pi_{\theta}}_{S',A'}[\sum_{t=0}^{\infty}\gamma^t\nabla_{\theta}r_{\theta}(S_t',A_t')|S_0'=S_i,A_0'=A_i]\biggl|S_0=s_0,A_0=a_0\biggr].
\end{align*}
Now we take a look at the second term in the above equality:
\begin{align*}
    &\nabla_{\theta}\pi_{\theta}(a|s)\cdot E^{\pi_{\theta}}_{S,A}[\sum_{t=0}^{\infty}\gamma^t\nabla_{\theta}r_{\theta}(S_t,A_t)|S_0=s,A_0=a],\\
    & =\pi_{\theta}(a|s)\nabla_{\theta}\log\pi_{\theta}(a|s)\cdot E^{\pi_{\theta}}_{S,A}[\sum_{t=0}^{\infty}\gamma^t\nabla_{\theta}r_{\theta}(S_t,A_t)|S_0=s,A_0=a],\\
    & = \pi_{\theta}(a|s)\biggl[\nabla_{\theta}Q_{\theta}^{\text{soft}}(s,a)-\nabla_{\theta}V_{\theta}^{\text{soft}}(s)\biggr]\cdot E^{\pi_{\theta}}_{S,A}[\sum_{t=0}^{\infty}\gamma^t\nabla_{\theta}r_{\theta}(S_t,A_t)|S_0=s,A_0=a],\\
    &\Rightarrow \biggl|\biggl|\nabla_{\theta}\pi_{\theta}(a|s)\cdot E^{\pi_{\theta}}_{S,A}[\sum_{t=0}^{\infty}\gamma^t\nabla_{\theta}r_{\theta}(S_t,A_t)|S_0=s,A_0=a]\biggr|\biggr|,\\
    &\leq (\frac{\bar{C}_r}{1-\gamma}+\frac{\bar{C}_r}{1-\gamma})\cdot\frac{\bar{C}_r}{1-\gamma}=\frac{2\bar{C}_r^3}{(1-\gamma)^3}.
\end{align*}

Therefore, we have that
\begin{align*}
    &||\nabla_{\theta\theta}^2Q_{\theta}^{\text{soft}}(s,a)||\leq\biggl|\biggl|E_{S,A}^{\pi_{\theta}}[\sum_{t=0}^{\infty}\gamma^t\nabla_{\theta\theta}^2r_{\theta}(S_t,A_t)|S_0=s_0,A_0=a_0]\biggr|\biggr|\\
    &+\biggl|\biggl|E_{S',A'}^{\pi_{\theta}}\biggl[\sum_{i=0}^{\infty}\gamma^i\nabla_{\theta}\pi_{\theta}(A_i'|S_i')\cdot E^{\pi_{\theta}}_{S,A}[\sum_{t=0}^{\infty}\gamma^t\nabla_{\theta}r_{\theta}(S_t,A_t)|S_0=S_i',A_0=A_i']\biggr]\biggr|\biggr|,\\
    &\leq \frac{\tilde{C}_r}{1-\gamma}+\sum_{t=0}^{\infty}\gamma^t\frac{2\bar{C}_r^3}{(1-\gamma)^3}=\frac{\tilde{C}_r}{1-\gamma}+\frac{2\bar{C}_r^3}{(1-\gamma)^4}.
\end{align*}

Similarly, we can get $||\nabla_{\theta\theta}^2V_{\theta}^{\text{soft}}(s)||\leq\frac{\tilde{C}_r}{1-\gamma}+\frac{2\bar{C}_r^3}{(1-\gamma)^4}$. Therefore, we have that
\begin{align}
    &||\nabla^2L_t(\theta;(s_t^E,a_t^E))||\leq \gamma^t\biggl(||\nabla_{\theta\theta}^2Q_{\theta}^{\text{soft}}(s_t^E,a_t^E)||+||\nabla_{\theta\theta}^2V_{\theta}^{\text{soft}}(s_t^E)||+\lambda\biggr),\notag\\
    &\leq \gamma^t\biggl(\frac{2\tilde{C}_r}{1-\gamma}+\frac{4\bar{C}_r^3}{(1-\gamma)^4}+\lambda\biggr)\leq \frac{2\tilde{C}_r}{1-\gamma}+\frac{4\bar{C}_r^3}{(1-\gamma)^4}+\lambda \triangleq C_L.\label{eq: derivation of CL}
\end{align}

\end{proof}

\subsection{Proof of Lemma \ref{lemma: local regret under stationary distribution}}\label{subsec: proof of local regret under stationary distribution}
This proof is based on the proof in ML-IRL \cite{zengmaximum}. The differences are: (\romannumeral 1) their proof only holds for finite state-action space while we extend to continuous state-action space; (\romannumeral 2) their analysis is for offline settings while we extend to online settings to quantify the local regret. We first introduce the following claims which serve as building blocks in this subsection.
\begin{claim}\label{claim: lipchitz of Q and V}
    For any given policy $\pi$ and state-action pair $(s,a)$, it holds that $|Q_{\theta_1,\pi}^{\text{soft}}(s,a)-Q_{\theta_2,\pi}^{\text{soft}}(s,a)|\leq C_Q||\theta_1-\theta_2||$ and $|V_{\theta_1,\pi}^{\text{soft}}(s)-V_{\theta_2,\pi}^{\text{soft}}(s)|\leq C_Q||\theta_1-\theta_2||$.
\end{claim}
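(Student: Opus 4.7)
The plan is to first bound $|V_{\theta_1,\pi}^{\text{soft}}(s) - V_{\theta_2,\pi}^{\text{soft}}(s)|$ directly from the definition, and then obtain the $Q$-bound as a one-line corollary via the soft Bellman relation $Q_{\theta,\pi}^{\text{soft}}(s,a) = r_\theta(s,a) + \gamma \int_{s'} P(s'|s,a) V_{\theta,\pi}^{\text{soft}}(s') ds'$. The whole argument hinges on one simple observation: since $\pi$ is held fixed, the law of the trajectory $(S_t,A_t)_{t\geq 0}$ induced by $\pi$ and $P$ does not depend on $\theta$, and neither does the entropy contribution $-\log\pi(A_t|S_t)$. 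Consequently the $\theta$-dependence of $V_{\theta,\pi}^{\text{soft}}$ lives entirely in the reward terms, which are controlled by Assumption \ref{ass: parametric model constants}.

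Concretely, I would subtract the two value functions to obtain
\begin{align*}
V_{\theta_1,\pi}^{\text{soft}}(s) - V_{\theta_2,\pi}^{\text{soft}}(s) = E_{S,A}^{\pi}\Bigl[\sum_{t=0}^{\infty}\gamma^t\bigl(r_{\theta_1}(S_t,A_t) - r_{\theta_2}(S_t,A_t)\bigr)\Bigm| S_0 = s\Bigr],
\end{align*}
then push the absolute value inside the expectation and the series, apply the pointwise bound $|r_{\theta_1}(S_t,A_t) - r_{\theta_2}(S_t,A_t)| \leq \bar{C}_r ||\theta_1-\theta_2||$ from Assumption \ref{ass: parametric model constants}, and sum the geometric series $\sum_{t=0}^{\infty}\gamma^t = 1/(1-\gamma)$ to obtain the bound $C_Q ||\theta_1-\theta_2||$ with $C_Q = \bar{C}_r/(1-\gamma)$. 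Interchanging the absolute value with the expectation and the infinite sum is justified by dominated convergence, since the pointwise upper bound is a deterministic constant.

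For the $Q$-bound, subtract the soft Bellman equations at $\theta_1$ and $\theta_2$, then apply the triangle inequality together with the $V$-bound just established:
\begin{align*}
|Q_{\theta_1,\pi}^{\text{soft}}(s,a) - Q_{\theta_2,\pi}^{\text{soft}}(s,a)| \leq \bar{C}_r ||\theta_1-\theta_2|| + \gamma \cdot \frac{\bar{C}_r}{1-\gamma} ||\theta_1-\theta_2|| = \frac{\bar{C}_r}{1-\gamma} ||\theta_1-\theta_2||,
\end{align*}
which is exactly $C_Q ||\theta_1-\theta_2||$. There is essentially no technical obstacle here; the claim is Lipschitz continuity of $r_\theta$ propagated through the fixed-policy Bellman operator, and the only subtlety worth flagging is that the entropy term in $V_{\theta,\pi}^{\text{soft}}$ plays no role in the $\theta$-difference precisely because $\pi$ is shared between $\theta_1$ and $\theta_2$. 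More substantive difficulties (such as the policy also varying with $\theta$) will only appear in the companion claim bounding $|Q_{\theta_1,\pi_{\theta_1}}^{\text{soft}} - Q_{\theta_2,\pi_{\theta_2}}^{\text{soft}}|$ from Lemma \ref{lemma: smoothness of L}, but they are irrelevant here.
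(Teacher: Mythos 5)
Your proposal is correct and matches the paper's argument in substance: both rest on the observation that with $\pi$ fixed the entropy terms cancel in the difference, so the gap reduces to a discounted sum of reward differences bounded by $\bar{C}_r\|\theta_1-\theta_2\|$ per step, giving $C_Q=\bar{C}_r/(1-\gamma)$. The only cosmetic difference is ordering — the paper unrolls $Q$ directly as a trajectory expectation and treats $V$ "similarly," whereas you bound $V$ first and recover $Q$ through one soft Bellman step; the arithmetic $\bar{C}_r+\gamma\bar{C}_r/(1-\gamma)=\bar{C}_r/(1-\gamma)$ checks out.
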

\begin{proof}
\begin{align*}
    &Q_{\theta_1,\pi}^{\text{soft}}(s,a)-Q_{\theta_2,\pi}^{\text{soft}}(s,a),\\
    & = E_{S,A}^{\pi}\biggl[\sum_{t=0}^{\infty}\gamma^t\bigl[r_{\theta_1}(S_t,A_t)-\log\pi(A_t|S_t)\bigr]\biggl|S_0=s,A_0=a\biggr]\\
    &-E_{S,A}^{\pi}\biggl[\sum_{t=0}^{\infty}\gamma^t\bigl[r_{\theta_2}(S_t,A_t)-\log\pi(A_t|S_t)\bigr]\biggl|S_0=s,A_0=a\biggr],\\
    &=E_{S,A}^{\pi}\biggl[\sum_{t=0}^{\infty}\gamma^t[r_{\theta_1}(S_t,A_t)-r_{\theta_2}(S_t,A_t)]\biggl|S_0=s,A_0=a\biggr],\\
    &\Rightarrow |Q_{\theta_1,\pi}(s,a)-Q_{\theta_2,\pi}(s,a)|\leq \sum_{t=0}^{\infty}\gamma^t\bar{C}_r||\theta_1-\theta_2||=\frac{\bar{C}_r}{1-\gamma}||\theta_1-\theta_2||=C_Q||\theta_1-\theta_2||.
\end{align*}
Similarly, we can get that 
\begin{align*}
    &|V_{\theta_1,\pi}^{\text{soft}}(s)-V_{\theta_2,\pi}^{\text{soft}}(s)|\leq E_{S,A}^{\pi}\biggl[\sum_{t=0}^{\infty}\gamma^t[r_{\theta_1}(S_t,A_t)-r_{\theta_2}(S_t,A_t)]\biggl|S_0=s\biggr],\\
    &\leq \sum_{t=0}^{\infty}\gamma^t\bar{C}_r||\theta_1-\theta_2||=\frac{\bar{C}_r}{1-\gamma}||\theta_1-\theta_2||=C_Q||\theta_1-\theta_2||.
\end{align*}
\end{proof}

\begin{claim}\label{claim: contraction map}
    The soft Bellman operator $\mathcal{T}_{\theta}^{\text{soft}}$:
    \begin{align*}
     (\mathcal{T}_{\theta}^{\text{soft}}Q)(s,a)&\triangleq r_{\theta}(s,a)+\gamma\int_{s'\in\mathcal{S}}P(s'|s,a)\log\biggl[\int_{a'\in\mathcal{A}}\exp(Q(s',a'))da'\biggr]ds',\\
     (\mathcal{T}_{\theta}^{\text{soft}}V)(s)&\triangleq \log \biggl[\int_{a\in\mathcal{A}}\exp\biggl(r_{\theta}(s,a)+\gamma\int_{s'\in\mathcal{S}}P(s'|s,a)V(s')ds' \biggr)da\biggr],
    \end{align*}
is a contraction map with constant $\gamma$.
\end{claim}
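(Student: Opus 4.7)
The plan is to prove the contraction property for both forms of $\mathcal{T}_\theta^{\text{soft}}$ via the standard log-sum-exp (LSE) argument, taking the sup norm on both sides. First I would establish the following auxiliary inequality: for any two measurable functions $f_1, f_2$ on a set $\mathcal{X}$ with $\sup_{x\in\mathcal{X}}|f_1(x)-f_2(x)|\leq c$, we have
\[
\bigl|\log\textstyle\int_{\mathcal{X}}\exp(f_1(x))dx-\log\textstyle\int_{\mathcal{X}}\exp(f_2(x))dx\bigr|\leq c.
\]
This follows because $f_2(x)-c\leq f_1(x)\leq f_2(x)+c$ pointwise, so $e^{-c}\int\exp(f_2)\leq\int\exp(f_1)\leq e^{c}\int\exp(f_2)$, and taking logarithms yields the stated bound. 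This LSE inequality is the essential quantitative tool and replaces the $\max$-based reasoning used in the finite-action case.

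Next, for the $Q$ operator, I would fix any two action-value functions $Q_1,Q_2$ and any $(s,a)\in\mathcal{S}\times\mathcal{A}$. The $r_\theta(s,a)$ terms cancel upon subtraction, leaving
\[
(\mathcal{T}_\theta^{\text{soft}}Q_1)(s,a)-(\mathcal{T}_\theta^{\text{soft}}Q_2)(s,a)=\gamma\int_{s'\in\mathcal{S}}P(s'|s,a)\bigl[\log\textstyle\int\exp(Q_1(s',a'))da'-\log\textstyle\int\exp(Q_2(s',a'))da'\bigr]ds'.
\]
Applying the LSE inequality with $c=\sup_{a'}|Q_1(s',a')-Q_2(s',a')|\leq\|Q_1-Q_2\|_\infty$ inside the $s'$-integral, then using that $\int P(s'|s,a)ds'=1$, yields $|(\mathcal{T}_\theta^{\text{soft}}Q_1)(s,a)-(\mathcal{T}_\theta^{\text{soft}}Q_2)(s,a)|\leq\gamma\|Q_1-Q_2\|_\infty$. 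Taking the supremum over $(s,a)$ gives the desired contraction.

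For the $V$ operator, I would argue analogously but apply the LSE inequality at the outer integral. Let $G_i(s,a)\triangleq r_\theta(s,a)+\gamma\int P(s'|s,a)V_i(s')ds'$, so that $(\mathcal{T}_\theta^{\text{soft}}V_i)(s)=\log\int\exp(G_i(s,a))da$. Then $|G_1(s,a)-G_2(s,a)|=\gamma|\int P(s'|s,a)(V_1(s')-V_2(s'))ds'|\leq\gamma\|V_1-V_2\|_\infty$, so by the LSE inequality (taking $\mathcal{X}=\mathcal{A}$ and $c=\gamma\|V_1-V_2\|_\infty$) we get $|(\mathcal{T}_\theta^{\text{soft}}V_1)(s)-(\mathcal{T}_\theta^{\text{soft}}V_2)(s)|\leq\gamma\|V_1-V_2\|_\infty$, and taking sup over $s$ finishes the proof.

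The argument is almost entirely mechanical once the LSE inequality is in hand; the only mild subtlety compared with finite state-action space is that I have to justify the LSE bound with integrals rather than finite sums, which is exactly what the pointwise sandwich $f_2-c\leq f_1\leq f_2+c$ buys us. I do not anticipate any genuine obstacle, but care is needed to apply the LSE lemma at the right layer (inside the $s'$-integral for the $Q$ form, at the outer $a$-integral for the $V$ form) so that the factor $\gamma$ is correctly extracted.
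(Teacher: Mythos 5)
Your proof is correct and follows essentially the same route as the paper: for the $V$ operator the paper uses exactly your pointwise sandwich $V_1\leq V_2+\epsilon$ together with monotonicity of the log-integral-exp and the fact that an additive constant factors out, which is your LSE inequality in inline form. The only difference is that for the $Q$ operator the paper simply cites Appendix A.2 of the soft Q-learning paper, whereas you supply the (standard, correct) argument directly.
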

\begin{proof}
    It has been proved that $\mathcal{T}_{\theta}^{\text{soft}}Q$ is a contraction map with constant $\gamma$ (Appendix A.2 in \cite{haarnoja2017reinforcement}). Here we show that $\mathcal{T}_{\theta}^{\text{soft}}V$ is a contraction map with constant $\gamma$. Define a norm of $V$ as $||V_1-V_2||=\sup_{s\in\mathcal{S}}|V_1(s)-V_2(s)|$ and suppose $||V_1-V_2||=\epsilon$. Then we have that
    \begin{align*}
        &\mathcal{T}_{\theta}^{\text{soft}}V_1(s)=\log \biggl[\int_{a\in\mathcal{A}}\exp\biggl(r_{\theta}(s,a)+\gamma\int_{s'\in\mathcal{S}}P(s'|s,a)V_1(s')ds' \biggr)da\biggr],\\
        &\leq\log \biggl[\int_{a\in\mathcal{A}}\exp\biggl(r_{\theta}(s,a)+\gamma\int_{s'\in\mathcal{S}}P(s'|s,a)[V_2(s')+\epsilon]ds' \biggr)da\biggr],\\
        & =\log \biggl[\int_{a\in\mathcal{A}}\exp\biggl(r_{\theta}(s,a)+\gamma\int_{s'\in\mathcal{S}}P(s'|s,a)V_2(s')ds'+\gamma\epsilon \biggr)da\biggr],\\
        & =\log \biggl[\int_{a\in\mathcal{A}}\exp(\gamma\epsilon)\exp\biggl(r_{\theta}(s,a)+\gamma\int_{s'\in\mathcal{S}}P(s'|s,a)V_2(s')ds' \biggr)da\biggr],\\
        & = \mathcal{T}_{\theta}^{\text{soft}}V_2(s)+\gamma\epsilon.
    \end{align*}
    Similarly, we can get $\mathcal{T}_{\theta}^{\text{soft}}V_1(s)\geq\mathcal{T}_{\theta}^{\text{soft}}V_2(s)-\gamma\epsilon$. Therefore, $||\mathcal{T}_{\theta}^{\text{soft}}V_1-\mathcal{T}_{\theta}^{\text{soft}}V_2||\leq\gamma\epsilon=\gamma||V_1-V_2||$.
\end{proof}

\begin{claim}\label{claim: Q and V after update is larger than operating}
It holds that $Q_{\theta_t,\pi_{t+1}}^{\text{soft}}(s,a)\geq\mathcal{T}_{\theta_t}^{\text{soft}}(Q_{\theta_t,\pi_t}^{\text{soft}})(s,a)$ and $V_{\theta_t,\pi_{t+1}}^{\text{soft}}(s)\geq\mathcal{T}_{\theta_t}^{\text{soft}}(V_{\theta_t,\pi_t}^{\text{soft}})(s)$ for any $(s,a)$.
\end{claim}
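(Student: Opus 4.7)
My plan is to prove the $V$-inequality first and bootstrap the $Q$-inequality from it. Both parts rest on the fact that $\pi_{t+1}(a|s)\propto\exp(Q_{\theta_t,\pi_t}^{\text{soft}}(s,a))$ is the pointwise maximizer of the entropy-regularized Bellman target $E_{a\sim\pi}[Q_{\theta_t,\pi_t}^{\text{soft}}(s,a)-\log\pi(a|s)]$, whose maximum value is $\log\int_a\exp(Q_{\theta_t,\pi_t}^{\text{soft}}(s,a))da$.

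\textbf{Proof of the $V$-inequality.} I would first rewrite $\mathcal{T}_{\theta_t}^{\text{soft}}(V_{\theta_t,\pi_t}^{\text{soft}})(s)$ as an expectation under $\pi_{t+1}$. Let $Z(s)\triangleq\int_a\exp(Q_{\theta_t,\pi_t}^{\text{soft}}(s,a))da$. Then by the definition of $\pi_{t+1}$, $\log\pi_{t+1}(a|s)=Q_{\theta_t,\pi_t}^{\text{soft}}(s,a)-\log Z(s)$, so the constant $\log Z(s)$ can be pulled out of the expectation, giving
\begin{equation*}
\mathcal{T}_{\theta_t}^{\text{soft}}(V_{\theta_t,\pi_t}^{\text{soft}})(s)=\log Z(s)=E_{a\sim\pi_{t+1}(\cdot|s)}\bigl[Q_{\theta_t,\pi_t}^{\text{soft}}(s,a)-\log\pi_{t+1}(a|s)\bigr].
\end{equation*}
Introducing the policy-dependent operator $(\mathcal{T}_{\theta_t,\pi}^{\text{soft}}V)(s)\triangleq E_{a\sim\pi(\cdot|s)}\bigl[r_{\theta_t}(s,a)+\gamma\int P(s'|s,a)V(s')ds'-\log\pi(a|s)\bigr]$, the display above reads $\mathcal{T}_{\theta_t,\pi_{t+1}}^{\text{soft}}V_{\theta_t,\pi_t}^{\text{soft}}=\mathcal{T}_{\theta_t}^{\text{soft}}V_{\theta_t,\pi_t}^{\text{soft}}$. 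Because $V_{\theta_t,\pi_t}^{\text{soft}}=\mathcal{T}_{\theta_t,\pi_t}^{\text{soft}}V_{\theta_t,\pi_t}^{\text{soft}}\leq\sup_\pi\mathcal{T}_{\theta_t,\pi}^{\text{soft}}V_{\theta_t,\pi_t}^{\text{soft}}=\log Z$, the first iterate $V_1\triangleq\mathcal{T}_{\theta_t,\pi_{t+1}}^{\text{soft}}V_{\theta_t,\pi_t}^{\text{soft}}$ satisfies $V_1\geq V_0\triangleq V_{\theta_t,\pi_t}^{\text{soft}}$. The operator $\mathcal{T}_{\theta_t,\pi_{t+1}}^{\text{soft}}$ is pointwise monotone and a sup-norm $\gamma$-contraction (the proof mirrors Claim \ref{claim: contraction map}) with unique fixed point $V_{\theta_t,\pi_{t+1}}^{\text{soft}}$; iterating $V_{k+1}\triangleq\mathcal{T}_{\theta_t,\pi_{t+1}}^{\text{soft}}V_k$ therefore produces an increasing sequence converging to $V_{\theta_t,\pi_{t+1}}^{\text{soft}}$, yielding $V_{\theta_t,\pi_{t+1}}^{\text{soft}}\geq V_1=\mathcal{T}_{\theta_t}^{\text{soft}}V_{\theta_t,\pi_t}^{\text{soft}}$, which is the desired $V$-inequality.

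\textbf{Proof of the $Q$-inequality and main obstacle.} For the $Q$-inequality, I would subtract the identity $\mathcal{T}_{\theta_t}^{\text{soft}}Q_{\theta_t,\pi_t}^{\text{soft}}(s,a)=r_{\theta_t}(s,a)+\gamma\int P(s'|s,a)\log\int_{a'}\exp(Q_{\theta_t,\pi_t}^{\text{soft}}(s',a'))da'ds'=r_{\theta_t}(s,a)+\gamma\int P(s'|s,a)\mathcal{T}_{\theta_t}^{\text{soft}}V_{\theta_t,\pi_t}^{\text{soft}}(s')ds'$ from the definition $Q_{\theta_t,\pi_{t+1}}^{\text{soft}}(s,a)=r_{\theta_t}(s,a)+\gamma\int P(s'|s,a)V_{\theta_t,\pi_{t+1}}^{\text{soft}}(s')ds'$, reducing the $Q$-inequality to integrating the pointwise non-negative difference $V_{\theta_t,\pi_{t+1}}^{\text{soft}}-\mathcal{T}_{\theta_t}^{\text{soft}}V_{\theta_t,\pi_t}^{\text{soft}}$ (established in the previous paragraph) against the non-negative kernel $P(\cdot|s,a)$. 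The main obstacle is justifying the monotone-iteration step for continuous state-action space: I must verify that $\mathcal{T}_{\theta_t,\pi_{t+1}}^{\text{soft}}$ is pointwise monotone, a $\gamma$-contraction in sup-norm on bounded measurable functions, and that the iterates $V_k$ remain uniformly bounded so that the monotone limit agrees with the contraction fixed point; everything else is algebraic.
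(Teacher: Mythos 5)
Your proposal is correct. It takes a recognizably different route from the paper's, so a comparison is worth recording. The paper proves the $Q$-inequality first: it unrolls the soft Bellman recursion $Q_{\theta_t,\pi_{t+1}}^{\text{soft}}(s,a)=r_{\theta_t}(s,a)+\gamma\int P(s'|s,a)E_{a'\sim\pi_{t+1}}[Q_{\theta_t,\pi_{t+1}}^{\text{soft}}(s',a')-\log\pi_{t+1}(a'|s')]ds'$ and then invokes the soft policy improvement theorem of Haarnoja et al.\ (the pointwise bound $Q_{\theta_t,\pi_{t+1}}^{\text{soft}}\geq Q_{\theta_t,\pi_t}^{\text{soft}}$) as a cited black box to replace $Q_{\theta_t,\pi_{t+1}}^{\text{soft}}$ by $Q_{\theta_t,\pi_t}^{\text{soft}}$ inside the expectation; the $V$-inequality is then derived by the same two moves. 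You instead prove the $V$-inequality first and do so self-containedly: the identity $\mathcal{T}_{\theta_t}^{\text{soft}}V_{\theta_t,\pi_t}^{\text{soft}}=\mathcal{T}_{\theta_t,\pi_{t+1}}^{\text{soft}}V_{\theta_t,\pi_t}^{\text{soft}}=\log Z\geq V_{\theta_t,\pi_t}^{\text{soft}}$ followed by monotone iteration of the $\gamma$-contraction $\mathcal{T}_{\theta_t,\pi_{t+1}}^{\text{soft}}$ up to its fixed point is essentially the proof of the cited improvement theorem, inlined; the $Q$-inequality then falls out by integrating the nonnegative difference $V_{\theta_t,\pi_{t+1}}^{\text{soft}}-\mathcal{T}_{\theta_t}^{\text{soft}}V_{\theta_t,\pi_t}^{\text{soft}}$ against the kernel $P(\cdot|s,a)$, which is cleaner than re-running the argument for $Q$. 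What each buys: the paper's version is shorter because it delegates the improvement step to the reference, while yours makes explicit exactly what must be verified in continuous state-action space (monotonicity, sup-norm contraction, uniform boundedness of the iterates), which is the point of the paper's Claims 2--5 anyway. The "main obstacle" you flag is routine here: under Assumption 1 and the bounded state-action space the paper implicitly assumes, $r_{\theta_t}$ is bounded and $\log\pi_{t+1}(a|s)=Q_{\theta_t,\pi_t}^{\text{soft}}(s,a)-\log Z(s)$ is bounded whenever $Q_{\theta_t,\pi_t}^{\text{soft}}$ is, so the iterates stay in a sup-norm ball and the monotone limit coincides with the contraction fixed point.
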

\begin{proof}
    \begin{align*}
        &Q_{\theta_t,\pi_{t+1}}^{\text{soft}}(s,a)\overset{(\romannumeral 1)}{=}r_{\theta_t}(s,a)+\gamma\int_{s'\in\mathcal{S}}P(s'|s,a)E_{a'\sim\pi_{t+1}}[Q_{\theta_t,\pi_{t+1}}^{\text{soft}}(s',a')-\log\pi_{t+1}(a'|s')]ds',\\
        &\overset{(\romannumeral 2)}{\geq} r_{\theta_t}(s,a)+\gamma\int_{s'\in\mathcal{S}}P(s'|s,a)E_{A'\sim\pi_{t+1}(\cdot|s')}[Q_{\theta_t,\pi_t}^{\text{soft}}(s',A')-\log\pi_{t+1}(A'|s')]ds',\\
        & = r_{\theta_t}(s,a)+\gamma\int_{s'\in\mathcal{S}}P(s'|s,a)\log\biggl[\int_{a'\in\mathcal{A}}\exp(Q_{\theta_t,\pi_t}^{\text{soft}}(s',a'))da'\biggr]ds',\\
        &=\mathcal{T}_{\theta}^{\text{soft}}(Q_{\theta_t,\pi_t}^{\text{soft}})(s,a),
    \end{align*}
    where $(\romannumeral 1)$ follow equations (2)-(3) in \cite{haarnoja2018soft} and $(\romannumeral 2)$ follows policy improvement theorem (Theorem 4 in \cite{haarnoja2017reinforcement}). Similarly, we can get that
    \begin{align*}
       & V_{\theta_t,\pi_{t+1}}^{\text{soft}}(s)=E_{A\sim\pi_{t+1}(\cdot|s)}[Q_{\theta_t,\pi_{t+1}}^{\text{soft}}(s,A)-\log\pi_{t+1}(A|s)],\\
       &\geq E_{A\sim\pi_{t+1}(\cdot|s)}[Q_{\theta_t,\pi_t}^{\text{soft}}(s,A)-\log\pi_{t+1}(A|s)],\\
       &= \log\biggl[\int_{a\in\mathcal{A}}\exp(Q_{\theta_t,\pi_t}^{\text{soft}}(s,a))da\biggr],\\
       &= \log\biggl[\int_{a\in\mathcal{A}}\exp\biggl(r_{\theta_t}(s,a)+\gamma\int_{s'\in\mathcal{S}}P(s'|s,a)V_{\theta_t,\pi_t}^{\text{soft}}(s')ds'\biggr)da\biggr],\\
       &=\mathcal{T}_{\theta_t}^{\text{soft}}(V_{\theta_t,\pi_t}^{\text{soft}})(s).
    \end{align*}
\end{proof}

\begin{claim}\label{claim: lipschitz of V}
    The following holds for any $(s,a)\in\mathcal{S}\times\mathcal{A}$ and $\theta_1$, $\theta_2$, $t$:
    \begin{equation*}
        |V_{\theta_1}^{\text{soft}}(s,a)-V_{\theta_2}^{\text{soft}}(s,a)|\leq C_Q||\theta_1-\theta_2||.
    \end{equation*}
\end{claim}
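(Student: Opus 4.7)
The plan is to exploit the fact that $V_{\theta}^{\text{soft}}$ is the unique fixed point of the soft Bellman operator $\mathcal{T}_{\theta}^{\text{soft}}$ and combine the contraction property (Claim \ref{claim: contraction map}) with the Lipschitz dependence of $\mathcal{T}_{\theta}^{\text{soft}}$ on $\theta$ through $r_{\theta}$ (Assumption \ref{ass: parametric model constants}). First, I would note that the statement should be read at a single state $s$ (the $(s,a)$ notation in the claim is a typo, since $V_{\theta}^{\text{soft}}$ takes only a state), and work with the sup-norm $\|V\|\triangleq\sup_{s\in\mathcal{S}}|V(s)|$ used already in the proof of Claim \ref{claim: contraction map}.

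The main decomposition I would use is
\begin{equation*}
V_{\theta_1}^{\text{soft}}-V_{\theta_2}^{\text{soft}}=\mathcal{T}_{\theta_1}^{\text{soft}}V_{\theta_1}^{\text{soft}}-\mathcal{T}_{\theta_1}^{\text{soft}}V_{\theta_2}^{\text{soft}}+\mathcal{T}_{\theta_1}^{\text{soft}}V_{\theta_2}^{\text{soft}}-\mathcal{T}_{\theta_2}^{\text{soft}}V_{\theta_2}^{\text{soft}}.
\end{equation*}
The first pair is bounded by $\gamma\|V_{\theta_1}^{\text{soft}}-V_{\theta_2}^{\text{soft}}\|$ directly from Claim \ref{claim: contraction map}. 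For the second pair, I would mimic the log-shift argument in the proof of Claim \ref{claim: contraction map}: since $|r_{\theta_1}(s,a)-r_{\theta_2}(s,a)|\leq\bar{C}_r\|\theta_1-\theta_2\|$, the integrand inside each $\log\int_a\exp(\cdot)\,da$ changes by at most $\bar{C}_r\|\theta_1-\theta_2\|$ pointwise, so pulling the common factor $\exp(\pm\bar{C}_r\|\theta_1-\theta_2\|)$ out of the integral yields $\|\mathcal{T}_{\theta_1}^{\text{soft}}V_{\theta_2}^{\text{soft}}-\mathcal{T}_{\theta_2}^{\text{soft}}V_{\theta_2}^{\text{soft}}\|\leq\bar{C}_r\|\theta_1-\theta_2\|$ (this is the Lipschitz-in-$\theta$ counterpart of the contraction argument). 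Combining gives $(1-\gamma)\|V_{\theta_1}^{\text{soft}}-V_{\theta_2}^{\text{soft}}\|\leq\bar{C}_r\|\theta_1-\theta_2\|$, i.e.\ the desired bound with $C_Q=\bar{C}_r/(1-\gamma)$.

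An alternative, arguably shorter, route is to exploit the optimality characterization of $\pi_{\theta}$: since $\pi_{\theta_1}$ maximizes the entropy-regularized return under $r_{\theta_1}$, one has $V_{\theta_1}^{\text{soft}}(s)\geq V_{\theta_1,\pi_{\theta_2}}^{\text{soft}}(s)$, and Claim \ref{claim: lipchitz of Q and V} then gives $V_{\theta_1,\pi_{\theta_2}}^{\text{soft}}(s)\geq V_{\theta_2}^{\text{soft}}(s)-C_Q\|\theta_1-\theta_2\|$; swapping the roles of $\theta_1,\theta_2$ yields the matching lower bound. I would probably include this as a remark but present the contraction-based argument as the primary proof, since it stays within the operator-theoretic framework the authors have already set up. The only mildly tricky step is the second pair of terms: making the log-sum-exp shift rigorous over a possibly non-compact continuous action space, which I would handle exactly as in Claim \ref{claim: contraction map} by bounding pointwise in the integrand before taking the logarithm, so that finiteness of the relevant integrals is inherited from the well-definedness of $\mathcal{T}_{\theta_2}^{\text{soft}}V_{\theta_2}^{\text{soft}}$.
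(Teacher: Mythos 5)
Your proof is correct, but it is not the route the paper takes. The paper's own proof is a one-liner: it invokes the gradient formula $\nabla_{\theta}V_{\theta}^{\text{soft}}(s)=E_{S,A}^{\pi_{\theta}}\bigl[\sum_{t=0}^{\infty}\gamma^t\nabla_{\theta}r_{\theta}(S_t,A_t)\,\big|\,S_0=s\bigr]$ already derived in the proof of Lemma \ref{lemma: gradient of log pi}, bounds its norm by $\bar{C}_r/(1-\gamma)=C_Q$ using Assumption \ref{ass: parametric model constants}, and concludes Lipschitz continuity from the uniform gradient bound. Your primary argument instead avoids differentiability in $\theta$ altogether: you split $V_{\theta_1}^{\text{soft}}-V_{\theta_2}^{\text{soft}}$ through the fixed-point equation, use the contraction property of Claim \ref{claim: contraction map} on the first pair and a log-sum-exp shift (driven only by the $\bar{C}_r$-Lipschitz bound on $r_{\theta}$) on the second, and solve $(1-\gamma)\|V_{\theta_1}^{\text{soft}}-V_{\theta_2}^{\text{soft}}\|\leq\bar{C}_r\|\theta_1-\theta_2\|$. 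Both arguments land on the same constant $C_Q=\bar{C}_r/(1-\gamma)$. What the paper's version buys is brevity and reuse of machinery it needs anyway for Lemma \ref{lemma: in-trajectory gradient}; what yours buys is weaker regularity requirements (only Lipschitzness of $r_{\theta}$, not smoothness of $V_{\theta}^{\text{soft}}$ in $\theta$) and a self-contained operator-theoretic derivation, at the cost of needing $\|V_{\theta_1}^{\text{soft}}-V_{\theta_2}^{\text{soft}}\|$ to be finite in sup-norm before the subtraction step — a caveat the paper's contraction claims share and likewise leave implicit. Your side remark that the claim's $(s,a)$ argument is a typo for $s$ is also accurate, and your alternative route via the optimality of $\pi_{\theta}$ combined with Claim \ref{claim: lipchitz of Q and V} is a third valid proof.
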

\begin{proof}
    Note that $\nabla_{\theta}V_{\theta}^{\text{soft}}(s)=E_{S,A}^{\pi_{\theta}}[\sum_{t=0}^{\infty}\gamma^t\nabla_{\theta}r_{\theta}(S_t,A_t)|S_0=s]$ (proof of Lemma \ref{lemma: gradient of log pi}), therefore
    \begin{equation*}
        ||\nabla_{\theta}V_{\theta}^{\text{soft}}(s)||\leq \frac{\bar{C}_r}{1-\gamma}=C_Q.
    \end{equation*}
\end{proof}
We first show the convergence of $\pi_{t}$ in Algorithm 1. For any $(s,a)\in\mathcal{S}\times\mathcal{A}$, we know that
\begin{align*}
    &|\log \pi_{t+1}(a|s)-\log \pi_{\theta_t}(a|s)|\leq |Q_{\theta_t,\pi_t}^{\text{soft}}(s,a)-Q_{\theta_t}^{\text{soft}}(s,a)|+|V_{\theta_t,\pi_t}^{\text{soft}}(s)-V_{\theta_t}^{\text{soft}}(s)|.
\end{align*}

Now we take a look at the term $|Q_{\theta_t,\pi_t}^{\text{soft}}(s,a)-Q_{\theta_t}^{\text{soft}}(s,a)|$:
\begin{align}
    &|Q_{\theta_t,\pi_t}^{\text{soft}}(s,a)-Q_{\theta_t}^{\text{soft}}(s,a)|,\notag\\
    &\leq |Q_{\theta_t,\pi_t}^{\text{soft}}(s,a)-Q_{\theta_{t-1},\pi_t}^{\text{soft}}(s,a)|+|Q_{\theta_{t-1},\pi_t}^{\text{soft}}(s,a)-Q_{\theta_{t-1}}^{\text{soft}}(s,a)|+|Q_{\theta_{t-1}}^{\text{soft}}(s,a)-Q_{\theta_t}^{\text{soft}}(s,a)|,\notag\\
    &\overset{(a)}{\leq} C_Q||\theta_t-\theta_{t-1}||+|Q_{\theta_{t-1},\pi_t}^{\text{soft}}(s,a)-Q_{\theta_{t-1}}^{\text{soft}}(s,a)|+C_Q||\theta_t-\theta_{t-1}||,\notag\\
    & =2C_Q||\theta_t-\theta_{t-1}||+|Q_{\theta_{t-1},\pi_t}^{\text{soft}}(s,a)-Q_{\theta_{t-1}}^{\text{soft}}(s,a)|,\notag\\
    &\overset{(b)}{=}2C_Q||\theta_t-\theta_{t-1}||+Q_{\theta_{t-1}}^{\text{soft}}(s,a)-Q_{\theta_{t-1},\pi_t}^{\text{soft}}(s,a),\notag\\
    &\overset{(c)}{\leq}2C_Q||\theta_t-\theta_{t-1}||+Q_{\theta_{t-1}}^{\text{soft}}(s,a)-\mathcal{T}_{\theta_{t-1}}^{\text{soft}}(Q_{\theta_{t-1},\pi_{t-1}}^{\text{soft}})(s,a),\notag\\
    &\overset{(d)}{=}2C_Q||\theta_t-\theta_{t-1}||+\mathcal{T}_{\theta_{t-1}}^{\text{soft}}(Q_{\theta_{t-1}}^{\text{soft}})(s,a)-\mathcal{T}_{\theta_{t-1}}^{\text{soft}}(Q_{\theta_{t-1},\pi_{t-1}}^{\text{soft}})(s,a)\notag,\\
    &\overset{(e)}{\leq}2C_Q||\theta_t-\theta_{t-1}||+\gamma|Q_{\theta_{t-1}}^{\text{soft}}(s,a)-Q_{\theta_{t-1},\pi_{t-1}}^{\text{soft}}(s,a)|,\label{eq: sequence of Q}
\end{align}
where $(a)$ follows Lemma 2 and Claim \ref{claim: lipchitz of Q and V}, $(b)$ follows the fact that $\pi_{\theta}$ is the optimal solution, $(c)$ follows Claim \ref{claim: Q and V after update is larger than operating}, $(d)$ follows the fact that $Q_{\theta_{t-1}}^{\text{soft}}$ is a fixed point of $\mathcal{T}_{\theta_{t-1}}^{\text{soft}}$ (Theorem 2 in \cite{haarnoja2017reinforcement}), and $(e)$ follows Claim \ref{claim: contraction map}. 

Similarly we can bound the term $|V_{\theta_t,\pi_t}^{\text{soft}}(s)-V_{\theta_t}^{\text{soft}}(s)|$:
\begin{align*}
    &|V_{\theta_t,\pi_t}^{\text{soft}}(s)-V_{\theta_t}^{\text{soft}}(s)|,\\
    &\leq |V_{\theta_t,\pi_t}^{\text{soft}}(s)-V_{\theta_{t-1},\pi_t}^{\text{soft}}(s)|+|V_{\theta_{t-1},\pi_t}^{\text{soft}}(s)-V_{\theta_{t-1}}^{\text{soft}}(s)|+|V_{\theta_{t-1}}^{\text{soft}}(s)-V_{\theta_{t-1},\pi_{t-1}}^{\text{soft}}(s)|,\\
    &\overset{(f)}{\leq}C_Q||\theta_t-\theta_{t-1}||+|V_{\theta_{t-1},\pi_t}^{\text{soft}}(s)-V_{\theta_{t-1}}^{\text{soft}}(s)|+C_Q||\theta_t-\theta_{t-1}||,\\
    &\overset{(g)}{\leq} 2C_Q||\theta_t-\theta_{t-1}||+V_{\theta_{t-1}}^{\text{soft}}(s)-\mathcal{T}_{\theta_{t-1}}^{\text{soft}}(V_{\theta_{t-1},\pi_t}^{\text{soft}})(s),\\
    &\leq 2C_Q||\theta_t-\theta_{t-1}||+\gamma|V_{\theta_{t-1}}^{\text{soft}}(s)-V_{\theta_{t-1},\pi_t}^{\text{soft}}(s)|,
\end{align*}
where $(f)$ follows Claim \ref{claim: lipchitz of Q and V} and claim \ref{claim: lipschitz of V} and $(e)$ follows Claim \ref{claim: Q and V after update is larger than operating}.

Now we take a look at the term $||\theta_t-\theta_{t-1}||$:
\begin{align}
    &||\theta_t-\theta_{t-1}||=\alpha_{t-1}||g_{t-1}||,\notag\\
    &\leq\alpha_{t-1}\biggl[\biggl|\biggl|\sum_{t=0}^{\infty}\gamma^t\nabla_{\theta}r_{\theta_{t-1}}(s_t',a_t')\biggr|\biggr|+\biggl|\biggl|\sum_{t=0}^{\infty}\gamma^t\nabla_{\theta}r_{\theta_{t-1}}(s_t'',a_t'')\biggr|\biggr|+\frac{\lambda(1-\gamma^{t-1})}{1-\gamma}\biggl|\biggl|\theta_{t-1}-\bar{\theta}\biggr|\biggr|\biggr],\notag\\
    &\overset{(h)}{\leq} \alpha_{t-1}\biggl(\frac{\bar{C}_r}{1-\gamma}+\frac{\bar{C}_r}{1-\gamma}+\frac{\lambda(1-\gamma^{t-1})}{1-\gamma}\cdot\frac{2\bar{C}_r}{\lambda}\biggr),\notag\\
    &\leq \frac{4\alpha_{t-1}\bar{C}_r}{1-\gamma},\label{eq: bounded gradient}
\end{align}
where $(h)$ follows claim \ref{claim: bounded trajectory of theta_t}. Recall that
\begin{align*}
    |\log \pi_{t+1}(a|s)-\log \pi_{\theta_t}(a|s)|\leq |Q_{\theta_t,\pi_t}^{\text{soft}}(s,a)-Q_{\theta_t}^{\text{soft}}(s,a)|+|V_{\theta_t,\pi_t}^{\text{soft}}(s)-V_{\theta_t}^{\text{soft}}(s)|.
\end{align*}
Summing from $i=0$ to $t$, we get
\begin{align*}
    &\sum_{i=1}^t\biggl[|Q_{\theta_i,\pi_i}^{\text{soft}}(s,a)-Q_{\theta_i}^{\text{soft}}(s,a)|+|V_{\theta_i,\pi_i}^{\text{soft}}(s)-V_{\theta_i}^{\text{soft}}(s)|\biggr],\\
    & \leq \sum_{i=1}^t\biggl[4C_Q||\theta_i-\theta_{i-1}||+\gamma\biggl(|Q_{\theta_{i-1}}^{\text{soft}}(s,a)-Q_{\theta_{i-1},\pi_{i-1}}^{\text{soft}}(s,a)|+|V_{\theta_{i-1},\pi_{i-1}}^{\text{soft}}(s)-V_{\theta_{i-1}}^{\text{soft}}(s)|\biggr)\biggr],\\
    &\Rightarrow (1-\gamma)\sum_{i=0}^{t-1}\biggl[|Q_{\theta_i,\pi_i}^{\text{soft}}(s,a)-Q_{\theta_i}^{\text{soft}}(s,a)|+|V_{\theta_i,\pi_i}^{\text{soft}}(s)-V_{\theta_i}^{\text{soft}}(s)|\biggr],\\
    &\overset{(i)}{\leq} \frac{16C_Q\bar{C}_r}{1-\gamma}\sum_{i=1}^t\alpha_{i-1}+\biggl(|Q_{\theta_0}^{\text{soft}}(s,a)-Q_{\theta_0,\pi_0}^{\text{soft}}(s,a)|+|V_{\theta_0,\pi_0}^{\text{soft}}(s)-V_{\theta_0}^{\text{soft}}(s)|\\
    &-|Q_{\theta_t}^{\text{soft}}(s,a)-Q_{\theta_t,\pi_t}^{\text{soft}}(s,a)|-|V_{\theta_t,\pi_t}^{\text{soft}}(s)-V_{\theta_t}^{\text{soft}}(s)|\bigg),\\
    &\Rightarrow \frac{1}{t}\sum_{i=0}^{t-1}\biggl[|Q_{\theta_i,\pi_i}^{\text{soft}}(s,a)-Q_{\theta_i}^{\text{soft}}(s,a)|+|V_{\theta_i,\pi_i}^{\text{soft}}(s)-V_{\theta_i}^{\text{soft}}(s)|\biggr]\\
    &\leq \frac{16C_Q\bar{C}_r}{t(1-\gamma)^2}\sum_{i=1}^t\alpha_{i-1}+\frac{1}{t(1-\gamma)}\biggl(|Q_{\theta_0}^{\text{soft}}(s,a)-Q_{\theta_0,\pi_0}^{\text{soft}}(s,a)|+|V_{\theta_0,\pi_0}^{\text{soft}}(s)-V_{\theta_0}^{\text{soft}}(s)|\\
    &-|Q_{\theta_t}^{\text{soft}}(s,a)-Q_{\theta_t,\pi_t}^{\text{soft}}(s,a)|-|V_{\theta_t,\pi_t}^{\text{soft}}(s)-V_{\theta_t}^{\text{soft}}(s)|\bigg),\\
    &=\frac{\bar{D}_1}{\sqrt{t}}+\frac{\bar{D}_2}{t},
\end{align*}
where $(i)$ follows \eqref{eq: bounded gradient}, $\bar{D}_1=\frac{16C_Q\bar{C}_r}{(1-\gamma)^2}$, and $\bar{D}_2=\frac{1}{1-\gamma}\biggl(|Q_{\theta_0}^{\text{soft}}(s,a)-Q_{\theta_0,\pi_0}^{\text{soft}}(s,a)|+|V_{\theta_0,\pi_0}^{\text{soft}}(s)-V_{\theta_0}^{\text{soft}}(s)|-|Q_{\theta_t}^{\text{soft}}(s,a)-Q_{\theta_t,\pi_t}^{\text{soft}}(s,a)|-|V_{\theta_t,\pi_t}^{\text{soft}}(s)-V_{\theta_t}^{\text{soft}}(s)|\bigg)$. Therefore, we can see that
\begin{align}
    \frac{1}{t}\sum_{i=0}^{t-1}|\log \pi_{i+1}(a|s)-\log \pi_{\theta_i}(a|s)|\leq \frac{\bar{D}_1}{\sqrt{t}}+\frac{\bar{D}_2}{t}. \label{eq: convergence of policy}
\end{align}

Define the loss function $\bar{L}(\theta)\triangleq E_{(S,A)\sim\mu^{\pi_E}(\cdot,\cdot)}\bigl[-\log\pi_{\theta}(A|S)+\frac{\lambda}{2}||\theta-\bar{\theta}||^2\bigr]$, then we can see that $E_{(S_i^E,A_i^E)\sim\mu^{\pi_E}(\cdot,\cdot)}[L_i(\theta;(S_i^E,A_i^E))]=\gamma^i\bar{L}(\theta)$. Moreover, we have that
\begin{align}
    &||\nabla L_i(\theta_t;(S_i^E,A_i^E))||\overset{(j)}{\leq}\gamma^i\lambda||\theta_t-\bar{\theta}||\notag\\
    &+\gamma^i||E_{S,A}^{\pi_{\theta_t}}[\sum_{k=0}^{\infty}\gamma^k\nabla_{\theta}r_{\theta_t}(S_k,A_k)|S_0=S_i^E]-E_{S,A}^{\pi_{\theta_t}}[\sum_{k=0}^{\infty}\gamma^k\nabla_{\theta}r_{\theta_t}(S_k,A_k)|S_0=S_i^E,A_0=A_i^E]||,\notag\\
    & \overset{(k)}{\leq} 2\gamma^i\bar{C}_r+\frac{2\gamma^i\bar{C}_r}{1-\gamma},\label{eq: bounded of nabla L}
\end{align}
where $(j)$ follows Lemma \ref{lemma: gradient of log pi} and $(k)$ follows Claim \ref{claim: bounded trajectory of theta_t}.

Therefore, we have that 
\begin{align}
    &E_{(S_i^E,A_i^E)\sim\mu^{\pi_E}(\cdot,\cdot)}[||\frac{\nabla L_i(\theta_t;(S_i^E,A_i^E))}{\gamma^i}-\nabla \bar{L}(\theta_t)||^2]\overset{(l')}{\leq}\biggl[2\bar{C}_r+\frac{2\bar{C}_r}{1-\gamma}\biggr]^2,\notag\\
    &\Rightarrow E_{(S_i^E,A_i^E)\sim\mu^{\pi_E}(\cdot,\cdot)}[||\frac{1}{t}\sum_{i=0}^{t-1}L_i(\theta_t;(S_i^E,A_i^E))-\gamma^i\bar{L}(\theta_t)||^2],\notag\\
    &=\biggl(\frac{1-\gamma^t}{1-\gamma}\biggr)^2E_{(S_i^E,A_i^E)\sim\mu^{\pi_E}(\cdot,\cdot)}[||\frac{1}{t}\sum_{i=0}^{t-1}\frac{L_i(\theta_t;(S_i^E,A_i^E))}{\gamma^i}-\bar{L}(\theta_t)||^2],\notag\\
    &\leq \frac{1}{t}\cdot \frac{4\bar{C}_r^2(2-\gamma)^2}{(1-\gamma)^4},\label{eq: bounded variance}
\end{align}
where $(l')$ follows the fact that a bounded variable $X\in[-a,a]$ has bounded variance at most $a^2$.
Now we take a look at the term $g_t-\frac{1-\gamma^{t+1}}{1-\gamma}\nabla \bar{L}(\theta_t)$:
\begin{align}
    &E_{(S,A)\sim\mu^{\pi_E}(\cdot,\cdot)}\biggl[g_t-\frac{1-\gamma^{t+1}}{1-\gamma}\nabla \bar{L}(\theta_t)\biggr],\notag\\
    &=E_{\{(S_i^E,A_i^E)\sim\mu^{\pi_E}(\cdot,\cdot)\}_{i\geq 0}}\biggl[g_t-\sum_{i=0}^t\nabla L_i(\theta_t;(S_i^E,A_i^E))\biggr]\notag\\
    &+E_{\{(S_i^E,A_i^E)\sim\mu^{\pi_E}(\cdot,\cdot)\}_{i\geq 0}}\biggl[\sum_{i=0}^t\nabla L_i(\theta_t;(S_i^E,A_i^E))-\frac{1-\gamma^{t+1}}{1-\gamma}\nabla \bar{L}(\theta_t)\biggr],\notag\\
    &=E_{\{(S_i^E,A_i^E)\sim\mu^{\pi_E}(\cdot,\cdot)\}_{i\geq 0}}\biggl[g_t-\sum_{i=0}^t\nabla L_i(\theta_t;(S_i^E,A_i^E))\biggr],\notag\\
    &= E_{\{(S_i^E,A_i^E)\sim\mu^{\pi_E}(\cdot,\cdot)\}_{i\geq 0}}\biggl[\sum_{i=0}^{\infty}\gamma^i\nabla_{\theta}r_{\theta_t}(s_i',a_i')-E_{S,A}^{\pi_{\theta_t}}[\sum_{i=0}^{\infty}\gamma^i\nabla_{\theta}r_{\theta_t}(S_i,A_i)|S_0=S_0^E]\biggr] \notag\\
    &+E_{\{(S_i^E,A_i^E)\sim\mu^{\pi_E}(\cdot,\cdot)\}_{i\geq 0}}\biggl[\sum_{i=t+1}^{\infty}\gamma^i\nabla_{\theta}r_{\theta_t}(s_i'',a_i'')\notag\\
    &-E_{S,A}^{\pi_{\theta_t}}[\sum_{i=t+1}^{\infty}\gamma^i\nabla_{\theta}r_{\theta_t}(S_i,A_i)|S_t=S_t^E,A_t=A_t^E]\biggr],\notag\\
    &= E_{\{(S_i^E,A_i^E)\sim\mu^{\pi_E}(\cdot,\cdot)\}_{i\geq 0}}\biggl[E_{S,A}^{\pi_{t+1}}[\sum_{i=0}^{\infty}\gamma^i\nabla_{\theta}r_{\theta_t}(S_i,A_i)|S_0=S_0^E]\notag\\
    &-E_{S,A}^{\pi_{\theta_t}}[\sum_{i=0}^{\infty}\gamma^i\nabla_{\theta}r_{\theta_t}(S_i,A_i)|S_0=S_0^E]+E_{S,A}^{\pi_{t+1}}[\sum_{i=t+1}^{\infty}\gamma^i\nabla_{\theta}r_{\theta_t}(S_i,A_i)|S_t=S_t^E,A_t=A_t^E]\notag\\
    &-E_{S,A}^{\pi_{\theta_t}}[\sum_{i=t+1}^{\infty}\gamma^i\nabla_{\theta}r_{\theta_t}(S_i,A_i)|S_t=S_t^E,A_t=A_t^E]\biggr].\notag
\end{align}
From equation (64) in \cite{zengmaximum}, we know that
\begin{align*}
    &\biggl|\bigg|E_{S,A}^{\pi_{t+1}}[\sum_{i=0}^{\infty}\gamma^i\nabla_{\theta}r_{\theta_t}(S_i,A_i)|S_0=s_0]-E_{S,A}^{\pi_{\theta_t}}[\sum_{i=0}^{\infty}\gamma^i\nabla_{\theta}r_{\theta_t}(S_i,A_i)|S_0=s_0]\biggr|\biggr|,\\
    &\leq \frac{2\bar{C}_r}{1-\gamma}\int_{s\in\mathcal{S}}\int_{a\in\mathcal{A}}|Q_{\theta_t}^{\text{soft}}(s,a)-Q_{\theta_t,\pi_t}^{\text{soft}}(s,a)|dads,\\
    &\leq \frac{2\bar{C}_rC_d}{1-\gamma}\sup_{(s,a)\in\mathcal{S}\times\mathcal{A}}\{|Q_{\theta_t}^{\text{soft}}(s,a)-Q_{\theta_t,\pi_t}^{\text{soft}}(s,a)|\},
\end{align*}
where $C_d$ is the product of the area of $\mathcal{S}$ and the area of $\mathcal{A}$.

Therefore, we can get that
\begin{align}
    E_{(S,A)\sim\mu^{\pi_E}}\biggl[\biggl|\biggl|g_t-\frac{1-\gamma^{t+1}}{1-\gamma}\nabla \bar{L}(\theta_t)\biggr|\biggr|\biggr]\leq \frac{4\bar{C}_rC_d}{1-\gamma}\sup_{(s,a)\in\mathcal{S}\times\mathcal{A}}\{|Q_{\theta_t}^{\text{soft}}(s,a)-Q_{\theta_t,\pi_t}^{\text{soft}}(s,a)|\}.\label{eq:upper bound of g-nabla L}
\end{align}

From \eqref{eq: sequence of Q} and \eqref{eq: bounded gradient}, we know that 
\begin{align}
    &|Q_{\theta_i,\pi_i}^{\text{soft}}(s,a)-Q_{\theta_i}^{\text{soft}}(s,a)|\leq \gamma|Q_{\theta_{i-1},\pi_{i-1}}^{\text{soft}}(s,a)-Q_{\theta_{i-1}}^{\text{soft}}(s,a)|+\frac{8\alpha_{i-1}C_Q\bar{C}_r}{1-\gamma},\notag\\
    &\Rightarrow \alpha_i|Q_{\theta_i,\pi_i}^{\text{soft}}(s,a)-Q_{\theta_i}^{\text{soft}}(s,a)|\leq\alpha_{i-1}\gamma|Q_{\theta_{i-1},\pi_{i-1}}^{\text{soft}}(s,a)-Q_{\theta_{i-1}}^{\text{soft}}(s,a)|+\frac{8\alpha_{i-1}^2C_Q\bar{C}_r}{1-\gamma},\notag\\
    &\Rightarrow \sum_{i=1}^t\alpha_i|Q_{\theta_i,\pi_i}^{\text{soft}}(s,a)-Q_{\theta_i}^{\text{soft}}(s,a)|\leq \sum_{i=0}^{t-1}\alpha_i\gamma|Q_{\theta_i,\pi_i}^{\text{soft}}(s,a)-Q_{\theta_i}^{\text{soft}}(s,a)|+\sum_{i=0}^{t-1}\frac{8\alpha_i^2C_Q\bar{C}_r}{1-\gamma},\notag\\
    &\Rightarrow(1-\gamma)\sum_{i=0}^{t-1}\alpha_i|Q_{\theta_i,\pi_i}^{\text{soft}}(s,a)-Q_{\theta_i}^{\text{soft}}(s,a)|,\notag\\
    &\leq \alpha_0|Q_{\theta_0,\pi_0}^{\text{soft}}(s,a)-Q_{\theta_0}^{\text{soft}}(s,a)|-\alpha_t|Q_{\theta_t,\pi_t}^{\text{soft}}(s,a)-Q_{\theta_t}^{\text{soft}}(s,a)|+\sum_{i=0}^{t-1}\frac{8\alpha_i^2C_Q\bar{C}_r}{1-\gamma},\label{eq: telescope alpha Q}
\end{align}
and similarly we can see that
\begin{align}
    &\sum_{i=1}^t|Q_{\theta_i,\pi_i}^{\text{soft}}(s,a)-Q_{\theta_i}^{\text{soft}}(s,a)|\leq \sum_{i=0}^{t-1}\gamma|Q_{\theta_i,\pi_i}^{\text{soft}}(s,a)-Q_{\theta_i}^{\text{soft}}(s,a)|+\frac{8\alpha_iC_Q\bar{C}_r}{1-\gamma}, \notag\\
    & \Rightarrow (1-\gamma)\sum_{i=1}^{t-1}|Q_{\theta_i,\pi_i}^{\text{soft}}(s,a)-Q_{\theta_i}^{\text{soft}}(s,a)|,\notag\\
    &\leq |Q_{\theta_0,\pi_0}^{\text{soft}}(s,a)-Q_{\theta_0}^{\text{soft}}(s,a)|-|Q_{\theta_t,\pi_t}^{\text{soft}}(s,a)-Q_{\theta_t}^{\text{soft}}(s,a)|+\sum_{i=0}^{t-1}\frac{8\alpha_iC_Q\bar{C}_r}{1-\gamma}, \label{eq: telescope Q}
\end{align}

Telescoping from $i=0$ to $t-1$, we get
\begin{align}
    &\sum_{i=0}^{t-1}\alpha_iE_{(S,A)\sim\mu^{\pi_E}}\biggl[\biggl|\biggl|g_i-\frac{1-\gamma^{i+1}}{1-\gamma}\nabla \bar{L}(\theta_i)\biggr|\biggr|\biggr],\notag\\
    &\overset{(l)}{\leq}\frac{4\bar{C}_rC_d}{1-\gamma}\sum_{i=0}^{t-1}\alpha_i|Q_{\theta_i,\pi_i}^{\text{soft}}(S,A)-Q_{\theta_i}^{\text{soft}}(S,A)|,\notag\\
    &\overset{(m)}{\leq} \frac{4\bar{C}_rC_d}{(1-\gamma)^2}\biggl[\alpha_0|Q_{\theta_0,\pi_0}^{\text{soft}}(S,A)-Q_{\theta_0}^{\text{soft}}(S,A)|-\alpha_t|Q_{\theta_t,\pi_t}^{\text{soft}}(S,A)-Q_{\theta_t}^{\text{soft}}(S,A)|+\sum_{i=0}^{t-1}\frac{8\alpha_i^2C_Q\bar{C}_r}{1-\gamma}\biggr], \label{eq: bound of telescoping alpha g-nabla L}
\end{align}
where $(l)$ follows \eqref{eq:upper bound of g-nabla L} and $(m)$ follows \eqref{eq: telescope alpha Q}. Similarly, we can see that
\begin{align}
    &\sum_{i=0}^{t-1}E_{(S,A)\sim\mu^{\pi_E}}\biggl[\biggl|\biggl|g_i-\frac{1-\gamma^{i+1}}{1-\gamma}\nabla \bar{L}(\theta_i)\biggr|\biggr|\biggr],\notag\\
    &\leq\frac{4\bar{C}_rC_d}{1-\gamma}\sum_{i=0}^{t-1}|Q_{\theta_i,\pi_i}^{\text{soft}}(S,A)-Q_{\theta_i}^{\text{soft}}(S,A)|,\notag\\
    &\leq \frac{4\bar{C}_rC_d}{(1-\gamma)^2}\biggl[|Q_{\theta_0,\pi_0}^{\text{soft}}(S,A)-Q_{\theta_0}^{\text{soft}}(S,A)|-|Q_{\theta_t,\pi_t}^{\text{soft}}(S,A)-Q_{\theta_t}^{\text{soft}}(S,A)|+\sum_{i=0}^{t-1}\frac{8\alpha_iC_Q\bar{C}_r}{1-\gamma}\biggr], \label{eq: bound of telescoping g-nabla L}
\end{align}
Now, we start to quantify the local regret:
\begin{align*}
    &\bar{L}(\theta_{i+1})\geq \bar{L}(\theta_i)+[\nabla\bar{L}(\theta_i)]^{\top}(\theta_{i+1}-\theta_i)-\frac{C_L}{2}||\theta_{i+1}-\theta_i||^2,\\
    &\overset{(n)}{\geq}\bar{L}(\theta_i)+\alpha_t[\nabla\bar{L}(\theta_i)]^{\top}g_i-\frac{8\alpha_i^2\bar{C}_r^2C_L}{(1-\gamma)^2},\\
    &= \bar{L}(\theta_i)+\alpha_i\frac{1-\gamma^{i+1}}{1-\gamma}||\nabla\bar{L}(\theta_i)||^2+\alpha_i[\nabla\bar{L}(\theta_i)]^{\top}(g_i-\frac{1-\gamma^{i+1}}{1-\gamma}\nabla\bar{L}(\theta_i))-\frac{8\alpha_i^2\bar{C}_r^2C_L}{(1-\gamma)^2},\\
    &\geq \bar{L}(\theta_i)+\alpha_i\frac{1-\gamma^{i+1}}{1-\gamma}||\nabla\bar{L}(\theta_i)||^2-\alpha_i||\nabla\bar{L}(\theta_i)||\cdot||g_i-\frac{1-\gamma^{i+1}}{1-\gamma}\nabla\bar{L}(\theta_i)||-\frac{8\alpha_i^2\bar{C}_r^2C_L}{(1-\gamma)^2},\\
    &\overset{(o)}{\geq} \bar{L}(\theta_i)+\alpha_i\frac{1-\gamma^{i+1}}{1-\gamma}||\nabla\bar{L}(\theta_i)||^2-\alpha_i\frac{2\bar{C}_r(2-\gamma)}{1-\gamma}\cdot||g_i-\frac{1-\gamma^{i+1}}{1-\gamma}\nabla\bar{L}(\theta_i)||-\frac{8\alpha_i^2\bar{C}_r^2C_L}{(1-\gamma)^2},\\
    &\Rightarrow E_{\{(S_i^E,A_i^E)\sim\mu^{\pi_E}(\cdot,\cdot)\}_{i\geq 0}}\biggl[\sum_{i=0}^{t-1}\alpha_i\frac{1-\gamma^{i+1}}{1-\gamma}||\nabla\bar{L}(\theta_i)||^2\biggr],\\
    &\leq \bar{L}(\theta_t)-\bar{L}(\theta_0)+\frac{2\bar{C}_r(2-\gamma)}{1-\gamma}\sum_{i=0}^{t-1}\alpha_iE_{\{(S_i^E,A_i^E)\sim\mu^{\pi_E}(\cdot,\cdot)\}_{i\geq 0}}\biggl[||g_i-\frac{1-\gamma^{i+1}}{1-\gamma}\nabla\bar{L}(\theta_i)||\biggr]\\
    &+\sum_{i=0}^{t-1}\frac{8\alpha_i^2\bar{C}_r^2C_L}{(1-\gamma)^2},\\
    &\overset{(p)}{\leq}\bar{L}(\theta_t)-\bar{L}(\theta_0)+\frac{8\bar{C}_r^2C_d(2-\gamma)}{(1-\gamma)^3}E_{\{(S_i^E,A_i^E)\sim\mu^{\pi_E}(\cdot,\cdot)\}_{i\geq 0}}\biggl[\alpha_0|Q_{\theta_0,\pi_0}^{\text{soft}}(S_0^E,A_0^E)-Q_{\theta_0}^{\text{soft}}(S_0^E,A_0^E)|\\
    &-\alpha_t|Q_{\theta_t,\pi_t}^{\text{soft}}(S_t^E,A_t^E)-Q_{\theta_t}^{\text{soft}}(S_t^E,A_t^E)|\biggr]+\biggl(\frac{32C_Q\bar{C}_r^2C_d}{(1-\gamma)^3}+\frac{8\bar{C}_r^2C_L}{(1-\gamma)^2}\biggr)\sum_{i=0}^{t-1}\alpha_i^2,
\end{align*}
where $(n)$ follows \eqref{eq: bounded gradient}, $(o)$ follows \eqref{eq: bounded of nabla L}, $(p)$ follows \eqref{eq: bound of telescoping alpha g-nabla L}. Therefore, we have that
\begin{align}
    &E_{\{(S_i^E,A_i^E)\sim\mu^{\pi_E}(\cdot,\cdot)\}_{i\geq 0}}\biggl[\sum_{t=0}^{T-1}\alpha_{T-1}||\nabla\bar{L}(\theta_t)||^2\biggr],\\
    &\leq E_{\{(S_i^E,A_i^E)\sim\mu^{\pi_E}(\cdot,\cdot)\}_{i\geq 0}}\biggl[\sum_{t=0}^{T-1}\alpha_t\frac{1-\gamma^t}{1-\gamma}||\nabla\bar{L}(\theta_t)||^2\biggr],\notag\\
    &\leq \bar{L}(\theta_T)-\bar{L}(\theta_0)+\frac{8\bar{C}_r^2C_d(2-\gamma)}{(1-\gamma)^3}E_{\{(S_i^E,A_i^E)\sim\mu^{\pi_E}(\cdot,\cdot)\}_{i\geq 0}}\biggl[\alpha_0|Q_{\theta_0,\pi_0}^{\text{soft}}(S_0^E,A_0^E)-Q_{\theta_0}^{\text{soft}}(S_0^E,A_0^E)|\notag\\
    &-\alpha_T|Q_{\theta_T,\pi_T}^{\text{soft}}(S_T^E,A_T^E)-Q_{\theta_T}^{\text{soft}}(S_T^E,A_T^E)|\biggr]+\biggl(\frac{32C_Q\bar{C}_r^2C_d}{(1-\gamma)^3}+\frac{8\bar{C}_r^2C_L}{(1-\gamma)^2}\biggr)\sum_{i=0}^{T-1}\alpha_i^2,\notag\\
    &\Rightarrow E_{\{(S_i^E,A_i^E)\sim\mu^{\pi_E}(\cdot,\cdot)\}_{i\geq 0}}\biggl[\sum_{t=0}^{T-1}||\nabla\bar{L}(\theta_t)||^2\biggr]\leq D_2\sqrt{T}+D_3\sqrt{T}(\log T+1),\label{eq: bound of sum of nabla bar L}
\end{align}
where $D_2=\bar{L}(\theta_T)-\bar{L}(\theta_0)+\frac{8\bar{C}_r^2C_d(2-\gamma)}{(1-\gamma)^3}E_{\{(S_i^E,A_i^E)\sim\mu^{\pi_E}(\cdot,\cdot)\}_{i\geq 0}}\biggl[\alpha_0|Q_{\theta_0,\pi_0}^{\text{soft}}(S_0^E,A_0^E)-Q_{\theta_0}^{\text{soft}}(S_0^E,A_0^E)|-\alpha_T|Q_{\theta_T,\pi_T}^{\text{soft}}(S_T^E,A_T^E)-Q_{\theta_T}^{\text{soft}}(S_T^E,A_T^E)|\biggr]$ and $D_3=\frac{2(1-\gamma)}{\lambda}\biggl(\frac{32C_Q\bar{C}_r^2C_d}{(1-\gamma)^3}+\frac{8\bar{C}_r^2C_L}{(1-\gamma)^2}\biggr)$.
Then we can see that
\begin{align*}
    &E_{\{(S_i^E,A_i^E)\sim\mu^{\pi_E}(\cdot,\cdot)\}_{i\geq 0}}\biggl[\sum_{t=0}^{T-1}||\frac{1}{t+1}\sum_{i=0}^t\nabla L_i(\theta_t;(S_i^E,A_i^E))||^2\biggr],\\
    &\leq 2E_{\{(S_i^E,A_i^E)\sim\mu^{\pi_E}(\cdot,\cdot)\}_{i\geq 0}}\biggl[\sum_{t=0}^{T-1}||\frac{1}{t+1}\sum_{i=0}^t\nabla L_i(\theta_t;(S_i^E,A_i^E))-\gamma^i\nabla\bar{L}(\theta_t)||^2\biggr]\\
    &+2E_{\{(S_i^E,A_i^E)\sim\mu^{\pi_E}(\cdot,\cdot)\}_{i\geq 0}}\biggl[\sum_{t=0}^{T-1}||\frac{1-\gamma^t}{(t+1)(1-\gamma)}\nabla\bar{L}(\theta_t)||^2\biggr],\\
    &\leq 2E_{\{(S_i^E,A_i^E)\sim\mu^{\pi_E}(\cdot,\cdot)\}_{i\geq 0}}\biggl[\sum_{t=0}^{T-1}||\frac{1}{t+1}\sum_{i=0}^t\nabla L_i(\theta_t;(S_i^E,A_i^E))-\gamma^i\nabla \bar{L}(\theta_t)||^2\biggr]\\
    &+2E_{\{(S_i^E,A_i^E)\sim\mu^{\pi_E}(\cdot,\cdot)\}_{i\geq 0}}\biggl[\sum_{t=0}^{T-1}||\nabla\bar{L}(\theta_t)||^2\biggr],\\
    &\overset{(q)}{\leq} \sum_{t=0}^{T-1}\frac{8\bar{C}_r^2(2-\gamma)^2}{(t+1)(1-\gamma)^4}+2E_{\{(S_i^E,A_i^E)\sim\mu^{\pi_E}(\cdot,\cdot)\}_{i\geq 0}}\biggl[\sum_{t=0}^{T-1}||\nabla\bar{L}(\theta_t)||^2\biggr],\\
    &\leq D_1(\log T+1)+D_2\sqrt{T}+D_3\sqrt{T}(\log T+1),
\end{align*}
where $D_1=\frac{8\bar{C}_r^2(2-\gamma)^2}{(1-\gamma)^4}$. The $(q)$ follows \eqref{eq: bounded variance}. Note that to achieve this local regret rate, we actually need to take an extra expectation over the dynamics $P$ because we need to roll out $\pi_t$ to formulate $g_t$. Here, we omit the expectation over the dynamics.

\subsection{Proof of Theorem \ref{thm: local regret under correlated distribution}}
%First, \eqref{eq: convergence of policy} shows that
%\begin{align*}
%    \frac{1}{T}\sum_{t=0}^{T-1}|\log \pi_{t+1}(a|s)-\log \pi_{\theta_t}(a|s)|\leq \frac{\bar{D}_1}{\sqrt{T}}+\frac{\bar{D}_2}{T}
%\end{align*}
%holds for any $(s,a)\in\mathcal{S}\times\mathcal{A}$.

We know that 
\begin{align}
    & E_{\{(S_i^E,A_i^E)\sim\mu^{\pi_E}(\cdot,\cdot)\}_{i\geq 0}}\biggl[\sum_{i=0}^t||\nabla L_i(\theta_t;(S_i^E,A_i^E))||^2\biggr],\notag\\
    &\leq E_{\{(S_i^E,A_i^E)\sim\mu^{\pi_E}(\cdot,\cdot)\}_{i\geq 0}}\biggl[\sum_{i=0}^t||\gamma^i\nabla \bar{L}(\theta_t)||^2\biggr]\notag\\
    &+E_{\{(S_i^E,A_i^E)\sim\mu^{\pi_E}(\cdot,\cdot)\}_{i\geq 0}}\biggl[\sum_{i=0}^t||\nabla L_i(\theta_t;(S_i^E,A_i^E))-\gamma^i\nabla \bar{L}(\theta_t)||^2\biggr],\notag\\
    &\leq  E_{\{(S_i^E,A_i^E)\sim\mu^{\pi_E}(\cdot,\cdot)\}_{i\geq 0}}\biggl[\sum_{i=0}^t||\nabla \bar{L}(\theta_t)||^2\biggr]\notag\\
    &+E_{\{(S_i^E,A_i^E)\sim\mu^{\pi_E}(\cdot,\cdot)\}_{i\geq 0}}\biggl[\sum_{i=0}^t||\nabla L_i(\theta_t;(S_i^E,A_i^E))-\gamma^i\nabla \bar{L}(\theta_t)||^2\biggr],\notag\\
    &\overset{(a)}{\leq} O(\sqrt{t+1}+\sqrt{t+1}\log (t+1))+\sum_{i=0}^t\gamma^{2i}\cdot 4\bar{C}_r^2\biggl(\frac{2-\gamma}{1-\gamma}\biggr)^2,\notag\\
    &\Rightarrow E_{\{(S_i^E,A_i^E)\sim\mu^{\pi_E}(\cdot,\cdot)\}_{i\geq 0}}\biggl[\frac{1}{t+1}\sum_{i=0}^t||\nabla L_i(\theta_t;(S_i^E,A_i^E))||^2\biggr]\leq O(\frac{1}{\sqrt{t+1}}+\frac{\log (t+1)}{\sqrt{t+1}}+\frac{1}{t+1}),\label{eq: bound of sum of nabla L_i}
\end{align}
where $(a)$ follows \eqref{eq: bound of sum of nabla bar L} and \eqref{eq: bounded variance}.

Therefore, 
\begin{align*}
    &\sum_{t=0}^{T-1}E_{\{(S_i^E,A_i^E)\sim\mathbb{P}_i^{\pi_E}(\cdot,\cdot)\}_{i\geq 0}}\biggl[||\frac{1}{t+1}\sum_{i=0}^t\nabla L_i(\theta_t;(S_i^E,A_i^E))||^2\biggr],\notag\\
    &\leq \sum_{t=0}^{T-1}E_{\{(S_i^E,A_i^E)\sim\mathbb{P}_i^{\pi_E}(\cdot,\cdot)\}_{i\geq 0}}\biggl[\frac{1}{t+1}\sum_{i=0}^t||\nabla L_i(\theta_t;(S_i^E,A_i^E))||^2\biggr],\\
    & \leq \sum_{t=0}^{T-1}E_{\{(S_i^E,A_i^E)\sim\mu^{\pi_E}(\cdot,\cdot)\}_{i\geq 0}}\biggl[\frac{1}{t+1}\sum_{i=0}^t||\nabla L_i(\theta_t;(S_i^E,A_i^E))||^2\biggr]\\
    &+\sum_{t=0}^{T-1}\frac{1}{t+1}\sum_{i=0}^t\biggl[E_{\{(S_i^E,A_i^E)\sim\mathbb{P}_i^{\pi_E}(\cdot,\cdot)\}_{i\geq 0}}[||\nabla L_i(\theta_t;(S_i^E,A_i^E))||^2\notag\\
    &-E_{\{(S_i^E,A_i^E)\sim\mu^{\pi_E}(\cdot,\cdot)\}_{i\geq 0}}[||\nabla L_i(\theta_t;(S_i^E,A_i^E))||^2]\biggr],\\
    &\overset{(s)}{\leq} D_1\log T+D_2\sqrt{T}+D_3\log T\sqrt{T}+\sum_{t=0}^{T-1}\frac{1}{t+1}\sum_{i=0}^t8C_M\bar{C}_r^2\biggl(\frac{2-\gamma}{1-\gamma}\biggr)^2\rho^i\gamma^{2i},\\
    &\leq D_1\log T+D_2\sqrt{T}+D_3\log T\sqrt{T}+\sum_{t=0}^{T-1}\frac{1}{t+1}\cdot 8C_M\bar{C}_r^2\biggl(\frac{2-\gamma}{1-\gamma}\biggr)^2\frac{1}{1-\rho\gamma^2},\\
    &\leq \biggl(D_1+\frac{8C_M\bar{C}_r^2(2-\gamma)^2}{(1-\rho\gamma^2)(1-\gamma)^2}\biggr)\log T+D_2\sqrt{T}+D_3\log T\sqrt{T},
\end{align*}
where $(s)$ follows \eqref{eq: bound of sum of nabla L_i} and Proposition \ref{prop: gradient gap between distributions}. Note that to achieve this local regret rate, we actually need to take an extra expectation over the dynamics $P$ because we need to roll out $\pi_t$ to formulate $g_t$. Here, we omit the expectation over the dynamics.

\subsection{Proof of Theorem \ref{thm: regret under linear reward}}\label{subsec: proof of sublinear regret}
Suppose the expert reward function $r_E$ and the parameterized reward function $r_{\theta}$ are both linear, i.e., $r_E=\theta_E^{\top}\phi$ and $r_{\theta}=\theta^{\top}\phi$, where $\phi:\mathcal{S}\times\mathcal{A}\rightarrow \mathbb{R}_+^n$ is an $n$-dimensional feature vector such that $||\phi(s,a)||\leq\bar{C}_r$ for any $(s,a)\in\mathcal{S}\times\mathcal{A}$. The proof follows the similar idea with that of Theorem \ref{thm: local regret under correlated distribution} where follow two-step process: step (\romannumeral 1) quantifies the regret under the stationary distribution $\mu^{\pi_E}(\cdot,\cdot)$ and step (\romannumeral 2) quantifies the difference between the correlated distribution $\mathbb{P}_t^{\pi_E}$ and the stationary distribution $\mu^{\pi_E}$. We start our proof with the following claim:
\begin{claim}
    If the parameterized reward function $r_{\theta}$ is linear, the function $\bar{L}(\theta)$ is $\lambda$-strongly convex for any $\theta$.
\end{claim}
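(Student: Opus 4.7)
The plan is to rewrite $\bar{L}(\theta)$ in a form that makes the convexity transparent, exploiting the soft Bellman equation together with a flow identity for the discounted visitation measure $\mu^{\pi_E}$. Starting from $-\log\pi_\theta(a|s)=V_\theta^{\text{soft}}(s)-Q_\theta^{\text{soft}}(s,a)$ and plugging in $Q_\theta^{\text{soft}}(s,a)=r_\theta(s,a)+\gamma\int P(s'|s,a)V_\theta^{\text{soft}}(s')\,ds'$, I would use the identity
\begin{equation*}
\gamma\,E_{(S,A)\sim\mu^{\pi_E}}\bigl[P(s'\mid S,A)\bigr]=\mu^{\pi_E}(s')-(1-\gamma)P_0(s'),
\end{equation*}
which follows by unwinding $\mu^{\pi_E}(s)=(1-\gamma)\sum_{t\geq 0}\gamma^t\mathbb{P}_t^{\pi_E}(s)$ and shifting the summation index. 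Substituting the linear reward $r_\theta=\theta^\top\phi$ and this identity, the two copies of $V_\theta^{\text{soft}}$ cancel on $\mu^{\pi_E}$, leaving
\begin{equation*}
\bar{L}(\theta)=-\theta^\top E_{(S,A)\sim\mu^{\pi_E}}[\phi(S,A)]+(1-\gamma)\,E_{S\sim P_0}\bigl[V_\theta^{\text{soft}}(S)\bigr]+\tfrac{\lambda}{2}\|\theta-\bar{\theta}\|^2.
\end{equation*}

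Next I would show that $V_\theta^{\text{soft}}(s)$ is convex in $\theta$ for every fixed $s$. By Claim \ref{claim: contraction map}, $V_\theta^{\text{soft}}$ is the unique fixed point of the soft Bellman operator $\mathcal{T}_\theta^{\text{soft}}$, so it can be obtained as the pointwise limit of the iterates $V^{(k+1)}=\mathcal{T}_\theta^{\text{soft}}V^{(k)}$ starting from $V^{(0)}\equiv 0$. I would induct on $k$: trivially $V^{(0)}$ is convex in $\theta$, and for the inductive step, write
\begin{equation*}
V^{(k+1)}(s)=\log\int\exp\Bigl(\theta^\top\phi(s,a)+\gamma\!\int\! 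P(s'\mid s,a)V^{(k)}(s')\,ds'\Bigr)\,da,
\end{equation*}
whose argument inside $\exp$ is convex in $\theta$ (linear plus a nonnegative mixture of convex functions by the inductive hypothesis). Convexity is preserved by the log-integral-exp operator: for any family $f(\theta,a)$ convex in $\theta$, one computes $\nabla_\theta^2\log\!\int\exp f(\theta,a)\,da=E_p[\nabla_\theta^2 f]+\mathrm{Cov}_p(\nabla_\theta f)\succeq 0$ with $p(a)\propto\exp f(\theta,a)$. Hence $V^{(k+1)}$ is convex in $\theta$; passing $k\to\infty$ and using that pointwise limits of convex functions are convex yields convexity of $V_\theta^{\text{soft}}$.

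Combining the two pieces finishes the proof. In the rewritten $\bar{L}$, the first term is linear in $\theta$, the second is a nonnegative scalar times a convex function and hence convex, and the meta-regularizer $\tfrac{\lambda}{2}\|\theta-\bar{\theta}\|^2$ is $\lambda$-strongly convex. Summing, $\nabla^2\bar{L}(\theta)\succeq\lambda I$, which is the claim.

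The main obstacle is the rewriting step. Attacking the Hessian of $-\log\pi_\theta(a|s)$ pointwise does not work, because that pointwise Hessian is genuinely indefinite: it contains a term of the form $E_{a'\sim\pi_\theta(\cdot|s)}[\nabla^2 Q_\theta^{\text{soft}}(s,a')]-\nabla^2 Q_\theta^{\text{soft}}(s,a)$ whose sign depends on the individual $(s,a)$. Only after averaging against the stationary measure $\mu^{\pi_E}$ does the flow identity above let the $V-Q$ combination collapse to a single linear term plus a convex ``potential'' $(1-\gamma)E_{S\sim P_0}[V_\theta^{\text{soft}}(S)]$, and only at that point does strong convexity become manifest.
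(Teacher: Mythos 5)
Your proposal is correct, and it takes a genuinely different route from the paper. The paper differentiates $\nabla\bar{L}$ once more, expands $\nabla_{\theta}E_{S,A}^{\pi_{\theta}}[\sum_i\gamma^i\phi(S_i,A_i)\,|\,\cdot\,]$ term by term, and identifies the expected Hessian of the log-likelihood part with the covariance matrix of the discounted feature count $X=E_{S,A}^{\pi_{\theta}}[\sum_i\gamma^i\phi(S_i,A_i)|S_0,A_0]$, so that $\nabla^2\bar{L}=\mathrm{Cov}(X)+\lambda I\succeq\lambda I$; the argument hinges on a cancellation of the $i\geq 1$ terms after averaging and on interchanging differentiation with the infinite-horizon conditional expectations. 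You instead invoke the occupancy-measure flow identity to collapse $E_{\mu^{\pi_E}}[-\log\pi_{\theta}]$ into the standard max-ent dual form $-\theta^{\top}E_{\mu^{\pi_E}}[\phi]+(1-\gamma)E_{P_0}[V_{\theta}^{\text{soft}}]$, and then prove convexity of $V_{\theta}^{\text{soft}}$ in $\theta$ by induction on soft value iteration using closure of convexity under the log-integral-exp operator and under pointwise limits (which the contraction property, Claim \ref{claim: contraction map}, justifies). Your route is more elementary and arguably more robust: it never differentiates the infinite sums twice, and it makes explicit your correct observation that the pointwise Hessian of $-\log\pi_{\theta}(a|s)=V_{\theta}^{\text{soft}}(s)-Q_{\theta}^{\text{soft}}(s,a)$ is a difference of two convex functions and hence indefinite, so the averaging against $\mu^{\pi_E}$ is essential. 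What the paper's computation buys in exchange is the explicit covariance form of the Hessian, from which it asserts strict positive definiteness of the non-regularized part (because $\pi_{\theta}$ is stochastic) --- a stronger conclusion than the claim requires; your argument delivers exactly $\nabla^2\bar{L}\succeq\lambda I$, which is all that is stated.
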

\begin{proof}
    Recall that $\bar{L}(\theta)=E_{(\bar{S},\bar{A})\sim\mu^{\pi_E}(\cdot,\cdot)}[-\log\pi_{\theta}(\bar{A}|\bar{S})]+\frac{\lambda}{2}||\theta-\bar{\theta}||^2$. Define $\bar{L}(\theta;(S,A))\triangleq -\log\pi_{\theta}(\bar{A}|\bar{S})+\frac{\lambda}{2}||\theta-\bar{\theta}||^2$, from Lemma \ref{lemma: gradient of log pi}, we can see that
    \begin{equation*}
        \nabla \bar{L}(\theta;(\bar{S},\bar{A}))= E_{S,A}^{\pi_{\theta}}\biggl[\sum_{i=0}^{\infty}\gamma^i\phi(S_i,A_i)\biggl|S_0=\bar{S}\bigg]-E_{S,A}^{\pi_{\theta}}\biggl[\sum_{i=0}^{\infty}\gamma^i\phi(S_i,A_i)\biggl|S_0=\bar{S},A_0=\bar{A}\bigg]+\lambda(\theta-\bar{\theta}).
    \end{equation*}
    Therefore, we have that
    \begin{align*}
        &\nabla_{\theta\theta}^2 \bar{L}(\theta;(\bar{S},\bar{A})),\\
        &=\nabla_{\theta}E_{S,A}^{\pi_{\theta}}\biggl[\sum_{i=0}^{\infty}\gamma^i\phi(S_i,A_i)\biggl|S_0=\bar{S}\bigg]-\nabla_{\theta}E_{S,A}^{\pi_{\theta}}\biggl[\sum_{i=0}^{\infty}\gamma^i\phi(S_i,A_i)\biggl|S_0=\bar{S},A_0=\bar{A}\bigg]+\lambda.
    \end{align*}
    Now, we take a look at
    \begin{align*}
    &\nabla_{\theta}E_{S,A}^{\pi_{\theta}}\biggl[\sum_{i=0}^{\infty}\gamma^i\phi(S_i,A_i)\biggl|S_0=\bar{S},A_0=\bar{A}\bigg]= \nabla_{\theta}\biggl\{\phi(\bar{S},\bar{A})\\
    &+\int_{s_1\in\mathcal{S}}P(s_1|\bar{S},\bar{A})\int_{a_{t+1}\in\mathcal{A}}\pi_{\theta}(a_1|s_1)E_{S,A}^{\pi_{\theta}}\biggl[\sum_{i=1}^{\infty}\gamma^i\phi(S_i,A_i)\biggl|S_1=s_1,A_1=a_1\bigg]da_1ds_1\biggr\},\\
    & = \int_{s_1\in\mathcal{S}}P(s_1|\bar{S},\bar{A})\int_{a_1\in\mathcal{A}}\biggl\{\nabla_{\theta}\pi_{\theta}(a_1|s_1)\cdot E_{S,A}^{\pi_{\theta}}\biggl[\sum_{i=1}^{\infty}\gamma^i\phi(S_i,A_i)\biggl|S_1=s_1,A_1=a_1\bigg]\\
    &+\pi_{\theta}(a_1|s_1)\cdot \nabla_{\theta}E_{S,A}^{\pi_{\theta}}\biggl[\sum_{i=1}^{\infty}\gamma^i\phi(S_i,A_i)\biggl|S_1=s_1,A_1=a_1\bigg]\biggr\}da_1ds_1.
    \end{align*}
    Keep the expansion, we can see that
    \begin{align}
        &\nabla_{\theta}E_{S,A}^{\pi_{\theta}}\biggl[\sum_{i=0}^{\infty}\gamma^i\phi(S_i,A_i)\biggl|S_0=\bar{S},A_0=\bar{A}\bigg],\notag\\
        & = E_{S,A}^{\pi_{\theta}}\biggl\{\sum_{i=1}^{\infty}\nabla_{\theta}\pi_{\theta}(A_i'|S_i')\cdot E_{S,A}^{\pi_{\theta}}\biggl[\sum_{i=1}^{\infty}\gamma^i\phi(S_i,A_i)\biggl|S_{1}=S_{1}',A_{1}=A_{1}'\biggr]\biggl|S_0'=\bar{S},A_0'=\bar{A}\biggr\},\notag\\
        &\nabla_{\theta}E_{S,A}^{\pi_{\theta}}\biggl[\sum_{i=0}^{\infty}\gamma^i\phi(S_i,A_i)\biggl|S_0=\bar{S}\bigg],\notag\\
        & = E_{S,A}^{\pi_{\theta}}\biggl\{\sum_{i=0}^{\infty}\nabla_{\theta}\pi_{\theta}(A_i'|S_i')\cdot E_{S,A}^{\pi_{\theta}}\biggl[\sum_{i=0}^{\infty}\gamma^i\phi(S_i,A_i)\biggl|S_0=S_0',A_0=A_0'\biggr]\biggl|S_0'=\bar{S}\biggr\},\notag
    \end{align}
    Thus we have that
    \begin{align}
        & E_{(\bar{S},\bar{A})\sim \mu^{\pi_E}(\cdot,\cdot)}\biggl\{\nabla_{\theta}E_{S,A}^{\pi_{\theta}}\biggl[\sum_{i=0}^{\infty}\gamma^i\phi(S_i,A_i)\biggl|S_0=\bar{S}\bigg]-\nabla_{\theta}E_{S,A}^{\pi_{\theta}}\biggl[\sum_{i=0}^{\infty}\gamma^i\phi(S_i,A_i)\biggl|S_0=\bar{S},A_0=\bar{A}\bigg]\biggr\},\notag\\
        &= E_{(\bar{S},\bar{A})\sim \mu^{\pi_E}(\cdot,\cdot)}E_{S',A'}^{\pi_{\theta}}\biggl\{\nabla_{\theta}\pi_{\theta}(A_0'|S_0')\cdot E_{S,A}^{\pi_{\theta}}\biggl[\sum_{i=0}^{\infty}\gamma^i\phi(S_i,A_i)\biggl|S_0=S_0',A_0=A_0'\biggr]\biggl|S_0'=\bar{S}_0\biggr\}\notag\\
        & +E_{(\bar{S},\bar{A})\sim \mu^{\pi_E}(\cdot,\cdot)}\biggl\{E_{S',A'}^{\pi_{\theta}}\biggl\{\sum_{i=1}^{\infty}\nabla_{\theta}\pi_{\theta}(A_i'|S_i')\cdot E_{S,A}^{\pi_{\theta}}\biggl[\sum_{i=1}^{\infty}\gamma^i\phi(S_i,A_i)\biggl|S_{1}=S_{1}',A_{1}=A_{1}'\biggr]\biggl|S_0'=\bar{S}\biggr\}\notag\\
        &-E_{S,A}^{\pi_{\theta}}\biggl\{\sum_{i=1}^{\infty}\nabla_{\theta}\pi_{\theta}(A_i'|S_i')\cdot E_{S,A}^{\pi_{\theta}}\biggl[\sum_{i=1}^{\infty}\gamma^i\phi(S_i,A_i)\biggl|S_{1}=S_{1}',A_{1}=A_{1}'\biggr]\biggl|S_0=\bar{S},A_0=\bar{A}\biggr\}\biggr\},\notag\\
        &= E_{(\bar{S},\bar{A})\sim \mu^{\pi_E}(\cdot,\cdot)}E_{S',A'}^{\pi_{\theta}}\biggl\{\nabla_{\theta}\pi_{\theta}(A_0'|S_0')\cdot E_{S,A}^{\pi_{\theta}}\biggl[\sum_{i=0}^{\infty}\gamma^i\phi(S_i,A_i)\biggl|S_0=S_0',A_0=A_0'\biggr]\biggl|S_0'=\bar{S}_0\biggr\}\notag\\
        &\overset{(a)}{=}  E_{(\bar{S},\bar{A})\sim \mu^{\pi_E}(\cdot,\cdot)}E_{S',A'}^{\pi_{\theta}}\biggl\{\pi_{\theta}(A_0'|S_0^E)[X-EX]X|S_0=\bar{S}\biggr\},\notag\\
        & = \text{Cov}(X)\succ 0, \label{eq: hessian is p.s.d.}
    \end{align}
    where $(a)$ follows Lemma \ref{lemma: gradient of log pi}, $X\triangleq E_{S,A}^{\pi_{\theta}}[\sum_{i=0}^{\infty}\gamma^i\phi(S_i,A_i)|S_0=\bar{S},A_0={A}]$ is a random vector, $EX\triangleq E_{S,A}^{\pi_{\theta}}[\sum_{i=0}^{\infty}\gamma^i\phi(S_i,A_i)|S_0=\bar{S}]$ is the expectation of $X$ over the action distribution, and Cov$(X)$ is the covariance matrix of the random vector $X$ with itself, which is always positive definite because the policy $\pi_{\theta}$ is always stochastic.

    Therefore, we can see that
    \begin{align*}
        ||\nabla_{\theta\theta}^2\bar{L}(\theta)||=||\text{Cov}(X)+\lambda||\overset{(b)}{\geq}\lambda,
    \end{align*}
    where $(b)$ follows \eqref{eq: hessian is p.s.d.}.
\end{proof}
\textbf{Step (\romannumeral 1)}. We first quantify the regret under the stationary distribution $\mu^{\pi_E}(\cdot,\cdot)$. Suppose $\theta^{\ast}\in \mathop{\arg\min}\bar{L}(\theta)=\mathop{\arg\min}\frac{1-\gamma^T}{1-\gamma}\bar{L}(\theta)=\mathop{\arg\min}E_{(S_t^E,A_t^E)\sim \mu^{\pi_E}}[\sum_{t=0}^{T-1}L_t(\theta;(S_t^E,A_t^E))]$.
\begin{align}
    &||\theta_{t+1}-\theta^{\ast}||^2=||\theta_t-\alpha_tg_t-\theta^{\ast}||^2=||\theta_t-\theta^{\ast}||^2+\alpha_t^2||g_t||^2-2\alpha_t\langle g_t,\theta_t-\theta^{\ast}\rangle,\notag\\
    &\Rightarrow E[||\theta_{t+1}-\theta^{\ast}||^2],\notag\\
    &\overset{(c)}{\leq} E[||\theta_t-\theta^{\ast}||^2]+\frac{16\alpha_t^2\bar{C}_r^2}{(1-\gamma)^2}-2\alpha_t\langle\frac{1-\gamma^{t+1}}{1-\gamma}\nabla\bar{L}(\theta_t),\theta_t-\theta^{\ast}\rangle-2\alpha_t\langle g_t-\frac{1-\gamma^{t+1}}{1-\gamma}\nabla\bar{L}(\theta_t),\theta_t-\theta^{\ast}\rangle, \label{eq: norm of theta-thetastar}
\end{align}
where $(c)$ follows \eqref{eq: bounded gradient}

Since $\bar{L}(\theta)$ is $\lambda$-strongly convex, we have that
\begin{align}
    &\bar{L}(\theta^{\ast})\geq \bar{L}(\theta_t)+\langle\nabla\bar{L}(\theta_t),\theta^{\ast}-\theta_t\rangle+\frac{\lambda}{2}||\theta_t-\theta^{\ast}||^2, \notag\\
    & \Rightarrow \bar{L}(\theta_t)-\bar{L}(\theta^{\ast})\leq \langle \nabla\bar{L}(\theta_t),\theta_t-\theta^{\ast}\rangle-\frac{\lambda}{2}||\theta_t-\theta^{\ast}||^2,\notag\\
    & \overset{(d)}{\leq} \frac{1-\gamma}{2\alpha_t(1-\gamma^{t+1})}\biggl[||\theta_t-\theta^{\ast}||^2-||\theta_{t+1}-\theta^{\ast}||^2\biggr]-\frac{1-\gamma}{1-\gamma^{t+1}}\langle g_t-\frac{1-\gamma^{t+1}}{1-\gamma}\nabla\bar{L}(\theta_t),\theta_t-\theta^{\ast}\rangle\notag\\
    &+\frac{8\alpha_t\bar{C}_r^2}{(1-\gamma)(1-\gamma^{t+1})}-\frac{\lambda}{2}||\theta_t-\theta^{\ast}||^2\notag,\\
    &\Rightarrow  E_{(S_t^E,A_t^E)\sim\mu^{\pi_E}(\cdot,\cdot)}[\bar{L}(\theta_t)-\bar{L}(\theta^{\ast})],\notag\\
    &\leq \frac{1-\gamma-\lambda\alpha_t(1-\gamma^{t+1})}{2\alpha_t(1-\gamma^{t+1})}E_{(S_t^E,A_t^E)\sim\mu^{\pi_E}(\cdot,\cdot)}[||\theta_t-\theta^{\ast}||^2]-\frac{1-\gamma}{2\alpha_t(1-\gamma^{t+1})}E_{(S_t^E,A_t^E)\sim\mu^{\pi_E}(\cdot,\cdot)}[||\theta_{t+1}-\theta^{\ast}||^2]\notag\\
    &+\frac{8\alpha_t\bar{C}_r^2}{(1-\gamma)(1-\gamma^{t+1})}+\frac{1-\gamma}{1-\gamma^{t+1}}E_{(S_t^E,A_t^E)\sim\mu^{\pi_E}(\cdot,\cdot)}\biggl[||g_t-\frac{1-\gamma^{t+1}}{1-\gamma}\nabla\bar{L}(\theta_t)||\cdot||\theta_t-\theta^{\ast}||\biggr], \notag\\
    &\overset{(e)}{\leq} \frac{1-\gamma-\lambda\alpha_t(1-\gamma^{t+1})}{2\alpha_t(1-\gamma^{t+1})}E_{(S_t^E,A_t^E)\sim\mu^{\pi_E}(\cdot,\cdot)}[||\theta_t-\theta^{\ast}||^2]-\frac{1-\gamma}{2\alpha_t(1-\gamma^{t+1})}E_{(S_t^E,A_t^E)\sim\mu^{\pi_E}(\cdot,\cdot)}[||\theta_{t+1}-\theta^{\ast}||^2]\notag\\
    &+\frac{8\alpha_t\bar{C}_r^2}{(1-\gamma)(1-\gamma^{t+1})}+\frac{\hat{C}(1-\gamma)}{1-\gamma^{t+1}}E_{(S_t^E,A_t^E)\sim\mu^{\pi_E}(\cdot,\cdot)}\biggl[||g_t-\frac{1-\gamma^{t+1}}{1-\gamma}\nabla\bar{L}(\theta_t)||\biggr], \label{eq: inequality to be telescoped}
\end{align}
where $(d)$ follows \eqref{eq: norm of theta-thetastar}, and $(e)$ follows Claim \ref{claim: bounded trajectory of theta_t} which shows that the trajectory of $\theta_t$ is bounded and thus there is a positive constant $\hat{C}$ such that $||\theta_t-\theta^{\ast}||\leq\hat{C}$.

Telescoping \eqref{eq: inequality to be telescoped} from $t=0$ to $t=T-1$, we can see that
\begin{align}
    &E_{(S_t^E,A_t^E)\sim\mu^{\pi_E}(\cdot,\cdot)}\biggl[\sum_{t=0}^{T-1}L_t(\theta_t;(S_t^E,A_t^E))-\sum_{t=0}^{T-1}L_t(\theta^{\ast};(S_t^E,A_t^E))\biggr],\notag\\
    &=E_{(S_t^E,A_t^E)\sim\mu^{\pi_E}(\cdot,\cdot)}\biggl[\sum_{t=0}^{T-1}\gamma^t\bar{L}(\theta_t)-\sum_{t=0}^{T-1}\gamma^t\bar{L}(\theta^{\ast})\biggr],\notag\\
    &\leq E_{(S_t^E,A_t^E)\sim\mu^{\pi_E}(\cdot,\cdot)}\biggl[\sum_{t=0}^{T-1}\bar{L}(\theta_t)-\sum_{t=0}^{T-1}\bar{L}(\theta^{\ast})\biggr],\notag\\
    & \overset{(f)}{\leq} \frac{1-\gamma-\lambda\alpha_0}{2\alpha_0}E_{(S_t^E,A_t^E)\sim\mu^{\pi_E}(\cdot,\cdot)}[||\theta_t-\theta^{\ast}||^2]-\frac{1-\gamma}{2\alpha_{T-1}(1-\gamma^T)}E_{(S_t^E,A_t^E)\sim\mu^{\pi_E}(\cdot,\cdot)}[||\theta_T-\theta^{\ast}||^2]\notag\\
    &+\frac{4\bar{C}_rC_d\hat{C}}{(1-\gamma)^2}\biggl[|Q_{\theta_0,\pi_0}^{\text{soft}}(S,A)-Q_{\theta_0}^{\text{soft}}(S,A)|-|Q_{\theta_T,\pi_T}^{\text{soft}}(S,A)-Q_{\theta_T}^{\text{soft}}(S,A)|+\sum_{i=0}^{t-1}\frac{8\alpha_iC_Q\bar{C}_r}{1-\gamma}\biggr],\notag\\
    &\leq \bar{D}_4+\frac{32C_QC_d\hat{C}\bar{C}_r^2}{\lambda(1-\gamma)^3}(\log T+1), \label{eq: regret under the stationry distribution}
\end{align}
where $(f)$ follows \eqref{eq: bound of telescoping g-nabla L}, and $\bar{D}_4=\frac{1-\gamma-\lambda\alpha_0}{2\alpha_0}E_{(S_t^E,A_t^E)\sim\mu^{\pi_E}(\cdot,\cdot)}[||\theta_t-\theta^{\ast}||^2]-\frac{1-\gamma}{2\alpha_{T-1}(1-\gamma^T)}E_{(S_t^E,A_t^E)\sim\mu^{\pi_E}(\cdot,\cdot)}[||\theta_T-\theta^{\ast}||^2]+\frac{4\bar{C}_rC_d\hat{C}}{(1-\gamma)^2}[|Q_{\theta_0,\pi_0}^{\text{soft}}(S,A)-Q_{\theta_0}^{\text{soft}}(S,A)|-|Q_{\theta_T,\pi_T}^{\text{soft}}(S,A)-Q_{\theta_T}^{\text{soft}}(S,A)|]$.

\textbf{Step (\romannumeral 2)}. We now quantify the difference between the stationary distribution $\mu^{\pi_E}(\cdot,\cdot)$ and the correlated distribution $\mathbb{P}_t^{\pi_E}(\cdot,\cdot)$. Note that $||\theta_t||$ is bounded (proved in Claim \ref{claim: bounded trajectory of theta_t}), the soft Bellman policy $\pi_{\theta}$ is continuous in $\theta$ and $(s,a)$, and we assume that the state-action space is bounded, there is a positive constant $C_{\pi}$ such that $||\log\pi_{\theta_t}(a|s)||\leq C_{\pi}$ for any $(s,a)\in\mathcal{S}\times\mathcal{A}$. We start with the following relation:
\begin{align}
    &\biggl|\biggl|E_{(S_t^E,A_t^E)\sim\mu^{\pi_E}(\cdot,\cdot)}\biggl[L_t(\theta_t;(S_t^E,A_t^E))\biggr]-E_{(S_t^E,A_t^E)\sim\mathbb{P}_t^{\pi_E}(\cdot,\cdot)}\biggl[L_t(\theta_t;(S_t^E,A_t^E))\biggr]\biggr|\biggr|,\notag\\
    &= \gamma^t\biggl|\biggl|\int_{s\in\mathcal{S}}\int_{a\in\mathcal{A}}|\mathbb{P}_t^{\pi_E}(s,a)-\mu^{\pi_E}(s,a)|\cdot||\log\pi_{\theta_t}(a|s)||dads\biggr|\biggr|,\notag\\
    &\leq C_{\pi}C_M\gamma^t\rho^t,\notag\\
    &\Rightarrow \biggl|\biggl|E_{(S_t^E,A_t^E)\sim\mu^{\pi_E}(\cdot,\cdot)}\biggl[\sum_{t=0}^{T-1}L_t(\theta_t;(S_t^E,A_t^E))\biggr]-E_{(S_t^E,A_t^E)\sim\mathbb{P}_t^{\pi_E}(\cdot,\cdot)}\biggl[\sum_{t=0}^{T-1}L_t(\theta_t;(S_t^E,A_t^E))\biggr]\biggr|\biggr|,\notag\\
    & \leq \frac{C_{\pi}C_M}{1-\gamma\rho}. \label{eq: loss value difference between the correlated distribution and stationry distribution}
\end{align}
Therefore, we have that 
\begin{align*}
    &E_{(S_t^E,A_t^E)\sim\mathbb{P}_t^{\pi_E}}\biggl[\sum_{t=0}^{T-1}L_t(\theta_t;(S_t^E,A_t^E))\biggr]-\min_{\theta} E_{(S_t^E,A_t^E)\sim\mathbb{P}_t^{\pi_E}}\biggl[\sum_{t=0}^{T-1}L_t(\theta;(S_t^E,A_t^E))\biggr],\\
    &\leq E_{(S_t^E,A_t^E)\sim\mathbb{P}_t^{\pi_E}}\biggl[\sum_{t=0}^{T-1}L_t(\theta_t;(S_t^E,A_t^E))\biggr]-E_{(S_t^E,A_t^E)\sim\mu^{\pi_E}}\biggl[\sum_{t=0}^{T-1}L_t(\theta_t;(S_t^E,A_t^E))\biggr]\\
    &+E_{(S_t^E,A_t^E)\sim\mu^{\pi_E}}\biggl[\sum_{t=0}^{T-1}L_t(\theta_t;(S_t^E,A_t^E))-\sum_{t=0}^{T-1}L_t(\theta^{\ast};(S_t^E,A_t^E))\biggr]\\
    &+E_{(S_t^E,A_t^E)\sim\mu^{\pi_E}}\biggl[\sum_{t=0}^{T-1}L_t(\theta^{\ast};(S_t^E,A_t^E))\biggr]-\min_{\theta} E_{(S_t^E,A_t^E)\sim\mathbb{P}_t^{\pi_E}}\biggl[\sum_{t=0}^{T-1}L_t(\theta;(S_t^E,A_t^E))\biggr],\\
    &\leq  E_{(S_t^E,A_t^E)\sim\mathbb{P}_t^{\pi_E}}\biggl[\sum_{t=0}^{T-1}L_t(\theta_t;(S_t^E,A_t^E))\biggr]-E_{(S_t^E,A_t^E)\sim\mu^{\pi_E}}\biggl[\sum_{t=0}^{T-1}L_t(\theta_t;(S_t^E,A_t^E))\biggr]\\
    &+E_{(S_t^E,A_t^E)\sim\mu^{\pi_E}}\biggl[\sum_{t=0}^{T-1}L_t(\theta_t;(S_t^E,A_t^E))-\sum_{t=0}^{T-1}L_t(\theta^{\ast};(S_t^E,A_t^E))\biggr]\\
    &+E_{(S_t^E,A_t^E)\sim\mu^{\pi_E}}\biggl[\sum_{t=0}^{T-1}L_t(\theta^{\ast};(S_t^E,A_t^E))\biggr]- E_{(S_t^E,A_t^E)\sim\mathbb{P}_t^{\pi_E}}\biggl[\sum_{t=0}^{T-1}L_t(\theta^{\ast};(S_t^E,A_t^E))\biggr],\\
    &\overset{(g)}{\leq} \frac{C_{\pi}C_M}{1-\gamma\rho}+\bar{D}_4+\frac{32C_QC_d\hat{C}\bar{C}_r^2}{\lambda(1-\gamma)^3}(\log T+1)+\frac{C_{\pi}C_M}{1-\gamma\rho},\notag\\
    &= D_4+D_5(\log T+1),
\end{align*}
where $(f)$ follows \eqref{eq: regret under the stationry distribution} and \eqref{eq: loss value difference between the correlated distribution and stationry distribution}, $D_4=\frac{2C_{\pi}C_M}{1-\gamma\rho}+\bar{D}_4$, and $D_5=\frac{32C_QC_d\hat{C}\bar{C}_r^2}{\lambda(1-\gamma)^3}$.

\section{Meta-Regularization Algorithm and Convergence Guarantee}\label{sec: meta-regularization convergence guarantee}
To solve problem \eqref{eq: meta bi-level formulation}, we use a double-loop method, i.e., we first solve the lower-level problem for $K$ iterations to get an approximate $\phi_{j,K}$ of $\phi_j$ and then use the approximate $\phi_{j,K}$ to solve the upper-level problem. We do not use a single-loop method to solve \eqref{eq: meta bi-level formulation} because we need to get the task-specific adaptation $\phi_j$ for each task $\mathcal{T}_j$. The single-loop method only partially solves the lower-level problem by one-step gradient descent and thus the obtained parameter can be far away from $\phi_j$.
\begin{lemma}\label{lemma: gradient of meta lower problem}
    The gradient of the lower-level problem in \eqref{eq: meta bi-level formulation} is $E_{S,A}^{\pi_{\phi}}\bigl[\sum_{t=0}^{\infty}\gamma^t\nabla_{\phi}r_{\phi}(S_t,A_t)\bigl|S_0=s_0^{\rm{tr}}\bigr]-\sum_{t=0}^{\infty}\gamma^t\nabla_{\phi}r_{\phi}(s_t^{\rm{tr}},a_t^{\rm{tr}})+\frac{\lambda}{1-\gamma}(\phi-\bar{\theta})$ where $(s_t^{\rm{tr}},a_t^{\rm{tr}})\in\mathcal{D}_j^{\rm{tr}}$.
\end{lemma}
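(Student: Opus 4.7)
The plan is to directly compute the gradient of the offline lower-level objective by following the same telescoping argument used in the proof of Lemma~\ref{lemma: in-trajectory gradient} in Appendix~\ref{sec: proof of gradient of L}, with the simplification that the trajectory now runs to infinity (no truncation at a finite time $t$) and the regularizer has coefficient $\frac{\lambda}{2(1-\gamma)}$ rather than $\frac{\lambda \gamma^t}{2}$.

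First I would separate the objective into its likelihood and regularization parts. Since $\mathcal{D}_j^{\rm{tr}}$ contains a single trajectory $\{(s_t^{\rm{tr}},a_t^{\rm{tr}})\}_{t\geq 0}$, the definition of $L$ gives $L(\phi,\mathcal{D}_j^{\rm{tr}}) = -\sum_{t=0}^{\infty}\gamma^t \log\pi_{\phi}(a_t^{\rm{tr}}|s_t^{\rm{tr}})$, and the gradient of $\frac{\lambda}{2(1-\gamma)}\|\phi-\bar{\theta}\|^2$ is immediately $\frac{\lambda}{1-\gamma}(\phi-\bar{\theta})$, which yields the last term of the claimed expression.

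Next I would handle the likelihood term. Using the soft-Bellman identity $\log\pi_{\phi}(a|s) = Q_{\phi}^{\rm{soft}}(s,a) - V_{\phi}^{\rm{soft}}(s)$ together with $Q_{\phi}^{\rm{soft}}(s,a) = r_{\phi}(s,a) + \gamma\int P(s'|s,a) V_{\phi}^{\rm{soft}}(s')\,ds'$, I would write
\begin{align*}
-\sum_{t=0}^{\infty}\gamma^t \nabla_{\phi}\log\pi_{\phi}(a_t^{\rm{tr}}|s_t^{\rm{tr}})
&= -\sum_{t=0}^{\infty}\gamma^t\bigl[\nabla_{\phi} r_{\phi}(s_t^{\rm{tr}},a_t^{\rm{tr}}) + \gamma \nabla_{\phi}V_{\phi}^{\rm{soft}}(s_{t+1}^{\rm{tr}}) - \nabla_{\phi}V_{\phi}^{\rm{soft}}(s_t^{\rm{tr}})\bigr].
\end{align*}
The sum of the $V$-terms telescopes; the boundary term at infinity vanishes because $\|\nabla_{\phi}V_{\phi}^{\rm{soft}}\|\leq \frac{\bar{C}_r}{1-\gamma}$ (shown in the proof of Lemma~\ref{lemma: gradient of log pi}) while $\gamma^t\to 0$, leaving only $\nabla_{\phi}V_{\phi}^{\rm{soft}}(s_0^{\rm{tr}})$ and the discounted reward-gradient sum along the expert trajectory. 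For the stochastic-dynamics case, the argument goes through in expectation exactly as in Appendix~\ref{sec: proof of gradient of L}, since $\mathbb{E}[\nabla_{\phi}V_{\phi}^{\rm{soft}}(s_{t+1}^{\rm{tr}})\mid s_t^{\rm{tr}},a_t^{\rm{tr}}] = \int P(s'|s_t^{\rm{tr}},a_t^{\rm{tr}})\nabla_{\phi}V_{\phi}^{\rm{soft}}(s')\,ds'$ provides the needed telescoping.

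Finally, I would invoke Lemma~\ref{lemma: gradient of log pi} to identify $\nabla_{\phi}V_{\phi}^{\rm{soft}}(s_0^{\rm{tr}}) = E_{S,A}^{\pi_{\phi}}\bigl[\sum_{t=0}^{\infty}\gamma^t\nabla_{\phi}r_{\phi}(S_t,A_t)\mid S_0 = s_0^{\rm{tr}}\bigr]$, and combine the three pieces to recover the stated formula. The only subtle step is justifying the vanishing of the boundary term at $t\to\infty$; this is essentially routine given the uniform bound on $\nabla_{\phi}V_{\phi}^{\rm{soft}}$ together with $\gamma<1$, and is the main technical point that differs (superficially) from the finite-horizon derivation in Appendix~\ref{sec: proof of gradient of L}.
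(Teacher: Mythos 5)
Your proposal is correct and follows essentially the same route as the paper's own proof: expand $-\gamma^t\nabla_\phi\log\pi_\phi = -\gamma^t[\nabla_\phi Q_\phi^{\text{soft}}-\nabla_\phi V_\phi^{\text{soft}}]$, telescope the value-gradient terms to leave $\nabla_\phi V_\phi^{\text{soft}}(s_0^{\rm tr})$, and identify that with the expectation via Lemma \ref{lemma: gradient of log pi}. Your explicit justification that the boundary term vanishes (using the uniform bound $\|\nabla_\phi V_\phi^{\text{soft}}\|\le \bar C_r/(1-\gamma)$ and $\gamma<1$) is a small but welcome addition that the paper leaves implicit.
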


We roll out the policy $\pi_{\phi}$ from $s_t^{\text{tr}}$ to get a trajectory $s_0^{\phi},a_0^{\phi},\cdots$ where $s_0^{\phi}=s_t^{\text{tr}}$ and approximate the lower-level gradient by $g_{\phi}=\sum_{t=0}^{\infty}\gamma^t\nabla_{\phi}r_{\phi}(s_t^{\phi},a_t^{\phi})-\sum_{t=0}^{\infty}\gamma^t\nabla_{\phi}r_{\phi}(s_t^{\rm{tr}},a_t^{\rm{tr}})+\frac{\lambda}{1-\gamma}(\phi-\bar{\theta})$. We update $\phi_{j,k+1}=\phi_{j,k}-\beta_k g_{\phi_{j,k}}$ for $K$ times to get $\phi_{j,K}$ where $\beta_k$ is the step size and $\phi_{j,k}$ is the parameter at time $k$, and use $\phi_{j,K}$ to calculate the approximate of the hyper-gradient $\frac{d}{d\bar{\theta}}L(\phi_j,\mathcal{D}_j^{\text{eval}})$.
\begin{lemma}\label{lemma: gradient of meta upper problem}
    The hyper-gradient (i.e., gradient of the upper-level problem in \eqref{eq: meta bi-level formulation}) is $\frac{d}{d\bar{\theta}}L(\phi_j,\mathcal{D}_j^{\rm{eval}})=\bigl[I+\frac{1-\gamma}{\lambda}\nabla_{\phi\phi}^2L(\phi_j,\mathcal{D}_j^{\rm{tr}})\bigr]^{-1}\nabla_{\phi}L(\phi_j,\mathcal{D}_j^{\rm{eval}})$, where
    \begin{align*}        \nabla_{\phi}L(\phi_j,\mathcal{D}_j^{\rm{eval}})&=|\mathcal{D}_j^{\rm{eval}}|\cdot E_{S,A}^{\pi_{\phi_j}}\biggl[\sum_{t=0}^{\infty}\gamma^t\nabla_{\phi}r_{\phi_j}(S_t,A_t)\biggl|S_0\sim P_0\biggr]-\sum_{v=1}^{|\mathcal{D}_j^{\rm{eval}}|}\sum_{t=0}^{\infty}\gamma^t\nabla_{\phi}r_{\phi_j}(s_t^v,a_t^v),\\
        \nabla_{\phi\phi}^2L(\phi_j,\mathcal{D}_j^{\rm{tr}})&=E_{S,A}^{\pi_{\phi_j}}\biggl[\sum_{t=0}^{\infty}\gamma^t\nabla_{\phi\phi}^2r_{\phi_j}(S_t,A_t)\biggl|S_0=s_0^{\rm{tr}}\biggr]-\sum_{t=0}^{\infty}\gamma^t\nabla_{\phi\phi}^2r_{\phi_j}(s_t^{\rm{tr}},a_t^{\rm{tr}})+e.
    \end{align*}
    The $|\mathcal{D}_j^{\rm{eval}}|$ is the number of trajectories in $\mathcal{D}_j^{\rm{eval}}$, $(s_t^v,a_t^v)\in\mathcal{D}_j^{\rm{eval}}$, and $e$ is an extra term whose expression can be found in Appendix \ref{subsec: proof of gradient of meta upper problem}.
\end{lemma}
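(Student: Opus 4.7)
}
The plan is to compute the hyper-gradient by combining the implicit function theorem (to obtain $d\phi_j/d\bar{\theta}$) with the chain rule, and then to derive the first- and second-order derivatives of $L$ with respect to $\phi$ using the same log-policy expansion trick we used in Lemma~\ref{lemma: in-trajectory gradient} and Lemma~\ref{lemma: smoothness of L}. First I would note that $\phi_j$ is an implicit function of $\bar{\theta}$ through the optimality condition of the lower-level problem,
\begin{equation*}
\nabla_{\phi} L(\phi_j,\mathcal{D}_j^{\rm tr}) + \frac{\lambda}{1-\gamma}(\phi_j-\bar{\theta}) = 0.
\end{equation*}
Differentiating both sides with respect to $\bar{\theta}$ and solving yields
\begin{equation*}
\frac{d\phi_j}{d\bar{\theta}} = \left[I + \frac{1-\gamma}{\lambda}\nabla_{\phi\phi}^2 L(\phi_j,\mathcal{D}_j^{\rm tr})\right]^{-1},
\end{equation*}
after which the chain rule $\frac{d}{d\bar{\theta}}L(\phi_j,\mathcal{D}_j^{\rm eval}) = (d\phi_j/d\bar{\theta})^{\top}\nabla_\phi L(\phi_j,\mathcal{D}_j^{\rm eval})$ together with the symmetry of the Hessian gives the claimed form.

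Next I would derive the explicit expression for $\nabla_\phi L(\phi_j,\mathcal{D}_j^{\rm eval})$. Since $L(\phi,\mathcal{D})=-\sum_{v}\sum_t \gamma^t \log\pi_\phi(a_t^v|s_t^v)$, I would expand $\log\pi_\phi = Q_\phi^{\rm soft} - V_\phi^{\rm soft}$, invoke Lemma~\ref{lemma: gradient of log pi} for $\nabla_\phi V_\phi^{\rm soft}$ and $\nabla_\phi Q_\phi^{\rm soft}$, and then perform the same telescoping calculation that appears in the proof of Lemma~\ref{lemma: in-trajectory gradient}. After the telescoping, the $V_\phi^{\rm soft}$ contributions cancel except for the term evaluated at the initial state, yielding the stated difference between the on-policy expectation starting from $S_0\sim P_0$ and the empirical sum along the evaluation trajectories.

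The main obstacle will be computing $\nabla_{\phi\phi}^2 L(\phi_j,\mathcal{D}_j^{\rm tr})$, because one has to differentiate an expectation under $\pi_\phi$ whose distribution itself depends on $\phi$. Concretely, starting from
\begin{equation*}
\nabla_\phi L(\phi,\mathcal{D}_j^{\rm tr}) = E_{S,A}^{\pi_\phi}\!\left[\sum_{t=0}^\infty \gamma^t \nabla_\phi r_\phi(S_t,A_t)\,\bigl|\,S_0=s_0^{\rm tr}\right] - \sum_{t=0}^\infty \gamma^t \nabla_\phi r_\phi(s_t^{\rm tr},a_t^{\rm tr}),
\end{equation*}
the outer differentiation of the expectation with respect to $\phi$ splits, by the product rule, into a ``pathwise'' piece that simply differentiates $\nabla_\phi r_\phi$ inside the expectation, and a ``score-function'' piece coming from $\nabla_\phi \pi_\phi = \pi_\phi\,\nabla_\phi\log\pi_\phi$, which in turn equals $\pi_\phi[\nabla_\phi Q_\phi^{\rm soft}-\nabla_\phi V_\phi^{\rm soft}]$ by Lemma~\ref{lemma: gradient of log pi}. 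The first piece produces the Hessian term $E_{S,A}^{\pi_\phi}[\sum_t\gamma^t\nabla_{\phi\phi}^2 r_\phi(S_t,A_t)|S_0=s_0^{\rm tr}]$; the second piece is collected into the correction term $e$, whose closed form will be spelled out in Appendix~\ref{subsec: proof of gradient of meta upper problem}. Combining the Hessian of the quadratic meta-regularizer $\tfrac{\lambda}{2(1-\gamma)}\|\phi-\bar{\theta}\|^2$ (which contributes $\tfrac{\lambda}{1-\gamma}I$, already absorbed into the implicit-function derivation) completes the identification with the stated expression. The key technical challenge is handling the infinite-horizon score-function term carefully so that the resulting series is well-defined; Assumption~\ref{ass: parametric model constants} together with the bounded-gradient estimate $\|\nabla_\phi V_\phi^{\rm soft}\|\le \bar{C}_r/(1-\gamma)$ (Claim~\ref{claim: lipschitz of V}) will ensure absolute convergence and validate the interchange of differentiation and summation.
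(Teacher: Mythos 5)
Your proposal follows essentially the same route as the paper's proof: implicit differentiation of the lower-level optimality condition plus the chain rule for the hyper-gradient, the telescoping of $\nabla_\phi\log\pi_\phi=\nabla_\phi Q_\phi^{\rm soft}-\nabla_\phi V_\phi^{\rm soft}$ for the first-order term, and a product-rule split of the on-policy expectation into a pathwise Hessian piece and a score-function piece that becomes the correction $e$ (which the paper writes as a discounted sum of covariance matrices). The only detail worth flagging is that the telescoping actually produces $\sum_v\nabla_\phi V_\phi^{\rm soft}(s_0^v)$ conditioned on each evaluation trajectory's own initial state, and the stated form with $S_0\sim P_0$ is an approximation the paper justifies by the abundance of evaluation data — your write-up presents it as exact.
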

To approximate the hyper-gradient, we first roll out $\pi_{\phi_{j,K}}$ to get two trajectories $s_0^{\phi_{j,K}},a_0^{\phi_{j,K}},\cdots$ and $\bar{s}_0^{\phi_{j,K}},\bar{a}_0^{\phi_{j,K}},\cdots$ where $s_0^{\phi_{j,K}}$ is drawn from $P_0$ and $\bar{s}_0^{\phi_{j,K}}=s_0^{\text{tr}}$. We estimate $\nabla_{\phi}L(\phi_j,\mathcal{D}_j^{\rm{eval}})$ via $\bar{\nabla}_{\phi}L(\phi_{j,K},\mathcal{D}_j^{\rm{eval}})=|\mathcal{D}_j^{\text{eval}}|\cdot\sum_{t=0}^{\infty}\gamma^t\nabla_{\phi}r_{\phi_{j,K}}(s_t^{\phi_{j,K}},a_t^{\phi_{j,K}})-\sum_{v=1}^{|\mathcal{D}_j^{\rm{eval}}|}\sum_{t=0}^{\infty}\gamma^t\nabla_{\phi}r_{\phi_{j,K}}(s_t^v,a_t^v)$ and estimate the term $\nabla_{\phi\phi}^2L(\phi_j,\mathcal{D}_j^{\rm{tr}})$ via $\bar{\nabla}_{\phi\phi}^2L(\phi_{j,K},\mathcal{D}_j^{\rm{tr}})=\sum_{t=0}^{\infty}\gamma^t\nabla_{\phi\phi}^2r_{\phi_{j,K}}(\bar{s}_t^{\phi_{j,K}},\bar{a}_t^{\phi_{j,K}})-\sum_{t=0}^{\infty}\gamma^t\nabla_{\phi\phi}^2r_{\phi_{j,K}}(s_t^{\rm{tr}},a_t^{\rm{tr}})$. Therefore, we can approximate the hyper-gradient term $\frac{d}{d\bar{\theta}}L(\phi_j,\mathcal{D}_j^{\rm{eval}})$ via $h_j=[I+\frac{1-\gamma}{\lambda}\bar{\nabla}_{\phi\phi}^2L(\phi_{j,K},\mathcal{D}_j^{\text{tr}})]^{-1}\bar{\nabla}_{\phi}L(\phi_{j,K},\mathcal{D}_j^{\text{eval}})$. We omit the extra term $e$ in the approximate $h_j$ and its impact on the convergence can be bounded (proved in Appendix \ref{subsec: proof of convergence of meta upper level}). %Note that $h_j$ is a biased estimate of $\frac{d}{d\bar{\theta}}L(\phi_j,\mathcal{D}_j^{\rm{eval}})$ because the Hessian estimate $\bar{\nabla}_{\phi\phi}^2L(\phi_{j,K},\mathcal{D}_j^{\text{tr}})$ does not include the extra term $e$ in $\nabla_{\phi\phi}^2L(\phi_j,\mathcal{D}_j^{\rm{tr}})$. This makes it difficult to quantify the convergence rate.

To solve the upper-level problem in \eqref{eq: meta bi-level formulation}, at each iteration $n$, we sample a batch of $B$ tasks, and compute the task-specific adaptation $\phi_{j,K}$ and the hyper-gradient $h_j$ for each task. The update law to solve the upper-level problem is: $\bar{\theta}_{n+1}=\bar{\theta}_n-\frac{\tau_n}{B}\sum_{j=1}^Bh_j$ where $\tau_n$ is the step size. Note that the time index $k$ is for the lower-level problem and $n$ is for the upper-level problem.
\begin{algorithm}[H]
\caption{Meta-regularization}\label{alg: meta-regularization}
\textbf{Input}: Initialized meta-prior $\bar{\theta}_0$ and task-specific adaptation $\phi_{j,0}$\\
\textbf{Output}: Learned prior $\bar{\theta}_N$
\begin{algorithmic}[1]
\FOR{$n=0,1,\cdots,N-1$}
  \STATE Sample a batch of $B$ tasks $\{\mathcal{T}_j\}_{j=1}^B\sim P_{\mathcal{T}}$
  \FOR{each task $\mathcal{T}_j$}
    \FOR{$k=0,1,\cdots,K-1$}
      \STATE Compute the soft Bellman policy $\pi_{\phi_{j,k}}$ via soft Q-learning or soft actor-critic
      \STATE Roll out the policy $\pi_{\phi_{j,k}}$ to get a trajectory $s_0^{\phi_{j,k}},a_0^{\phi_{j,k}},\cdots$
      \STATE Compute the gradient $g_{\phi_{j,k}}=\sum_{t=0}^{\infty}\gamma^t\nabla_{\phi}r_{\phi_{j,k}}(s_t^{\phi_{j,k}},a_t^{\phi_{j,k}})-\sum_{t=0}^{\infty}\gamma^t\nabla_{\phi}r_{\phi_{j,k}}(s_t^{\text{tr}},a_t^{\text{tr}})$ $+\frac{\lambda}{1-\gamma}(\phi_{j,k}-\bar{\theta}_n)$
      \STATE Update $\phi_{j,k+1}=\phi_{j,k}-\beta_k g_{\phi_{j,k}}$
    \ENDFOR
    \STATE Compute the soft Bellman policy $\pi_{\phi_{j,K}}$ via soft Q-learning or soft actor-critic
    \STATE Roll out the policy $\pi_{\phi_{j,K}}$ to get two trajectories $s_0^{\phi_{j,K}},a_0^{\phi_{j,K}},\cdots$ and $\bar{s}_0^{\phi_{j,K}},\bar{a}_0^{\phi_{j,K}},\cdots$
    \STATE Compute the hyper-gradient $h_j=[I+\frac{1-\gamma}{\lambda}\bar{\nabla}_{\phi\phi}^2L(\phi_{j,K},\mathcal{D}_j^{\text{tr}})]^{-1}\bar{\nabla}_{\phi}L(\phi_{j,K},\mathcal{D}_j^{\text{eval}})$
  \ENDFOR
  \STATE Update $\bar{\theta}_{n+1}=\bar{\theta}_n-\frac{\tau_n}{B}\sum_{j=1}^Bh_j$
\ENDFOR
\end{algorithmic}
\end{algorithm}
\begin{lemma}[Convergence of the lower-level problem]\label{lemma: convergence of meta lower-level}
    Suppose Assumptions 1-2 hold and $\lambda\geq\frac{C_L}{2}+\eta$ where $\eta\in (0,\frac{C_L}{2})$. Let $\beta_k=\frac{1-\gamma}{\eta(k+1)}$, then we have
    \begin{equation*}
        E[||\phi_{j,K}-\phi_j||^2]\leq O(\frac{1}{K}).
    \end{equation*}
\end{lemma}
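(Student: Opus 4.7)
The plan is to interpret the inner loop of Algorithm~\ref{alg: meta-regularization} as stochastic gradient descent on the strongly convex objective
\[
F_j(\phi)\ :=\ L(\phi,\mathcal{D}_j^{\text{tr}})\ +\ \frac{\lambda}{2(1-\gamma)}\|\phi-\bar\theta_n\|^2,
\]
and then to invoke the standard $O(1/K)$ rate for SGD on strongly convex losses with step size $\beta_k=\Theta(1/(k+1))$.

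First I would establish that $F_j$ is strongly convex with the right modulus. Repeating the Hessian computation in the proof of Lemma~\ref{lemma: smoothness of L} applied to the \emph{unregularized} term $-\log\pi_\phi(a|s)=V^{\text{soft}}_\phi(s)-Q^{\text{soft}}_\phi(s,a)$ yields $\|\nabla^2_\phi[-\log\pi_\phi(a|s)]\|\leq C_L-\lambda$ for every $(s,a)$; summing over the discounted training trajectory then gives $\|\nabla^2 L(\phi,\mathcal{D}_j^{\text{tr}})\|\leq (C_L-\lambda)/(1-\gamma)$. Adding the Hessian $\tfrac{\lambda}{1-\gamma}I$ of the regularizer and using $\lambda\geq C_L/2+\eta$ gives
\[
\nabla^2 F_j(\phi)\ \succeq\ \frac{2\lambda-C_L}{1-\gamma}\,I\ \succeq\ \mu\,I,\qquad \mu\ :=\ \frac{2\eta}{1-\gamma},
\]
so $F_j$ is $\mu$-strongly convex (and also smooth, with constant $C_L/(1-\gamma)$).

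Next I would verify that $g_{\phi_{j,k}}$ (line 7) is unbiased with bounded variance. Lemma~\ref{lemma: gradient of meta lower problem} shows that the only expectation appearing in $\nabla F_j(\phi)$ is $E^{\pi_\phi}[\sum_t\gamma^t\nabla_\phi r_\phi(S_t,A_t)\mid S_0=s_0^{\text{tr}}]$, and $g_{\phi_{j,k}}$ simply replaces this by a single rollout of $\pi_{\phi_{j,k}}$ from $s_0^{\text{tr}}$, which is unbiased. Assumption~\ref{ass: parametric model constants} bounds $\|\nabla_\phi r_\phi\|\leq\bar C_r$, so the rollout estimate and its true expectation both lie in a ball of radius $\bar C_r/(1-\gamma)$; hence $E\|g_{\phi_{j,k}}-\nabla F_j(\phi_{j,k})\|^2\leq\sigma^2$ for an absolute constant $\sigma$. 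Plugging into the standard SGD recursion and using $\langle\nabla F_j(\phi_{j,k}),\phi_{j,k}-\phi_j\rangle\geq\mu\|\phi_{j,k}-\phi_j\|^2$ together with unbiasedness yields
\[
E\|\phi_{j,k+1}-\phi_j\|^2\ \leq\ (1-2\mu\beta_k)\,E\|\phi_{j,k}-\phi_j\|^2+\beta_k^2\sigma^2.
\]
With $\beta_k=\tfrac{1-\gamma}{\eta(k+1)}=\tfrac{2}{\mu(k+1)}$, the contraction factor becomes $1-\tfrac{4}{k+1}$, and a textbook induction of the form $E\|\phi_{j,k}-\phi_j\|^2\leq A/(k+1)$ (with $A$ chosen large enough that the recursion self-propagates) delivers the desired $O(1/K)$ bound.

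The hard part is the first paragraph: one must carefully repeat the Hessian bookkeeping from Lemma~\ref{lemma: smoothness of L} on the lower-level objective (whose regularizer coefficient $\tfrac{\lambda}{1-\gamma}$ differs from that of the in-trajectory loss $L_t$) in order to extract the strong-convexity modulus $\mu=\tfrac{2\eta}{1-\gamma}$ that the hypothesis $\lambda\geq C_L/2+\eta$ precisely buys. After that, the $O(1/K)$ rate is the standard analysis of SGD on a strongly convex function with step size $\beta_k\propto 1/(k+1)$.
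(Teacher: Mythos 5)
Your proposal is correct and takes essentially the same route as the paper: the paper likewise bounds the Hessian of the unregularized loss by $(C_L-\lambda)/(1-\gamma)$ via the computation of Lemma~\ref{lemma: smoothness of L}, uses $\lambda\geq C_L/2+\eta$ to conclude the regularized lower-level objective is $\tfrac{2\eta}{1-\gamma}$-strongly convex (and $\tfrac{C_L}{1-\gamma}$-smooth), and then appeals to the standard rate for stochastic gradient descent on strongly convex objectives. You simply fill in the unbiasedness/bounded-variance check and the $O(1/K)$ recursion that the paper leaves as "the standard result."
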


\begin{assumption}\label{ass: bounded third order gradient}
The parameterized reward function $r_{\theta}$ has bounded third-order gradient, i.e., $||\nabla_{\theta\theta\theta}^3r_{\theta}(s,a)||\leq \hat{C}_r$ for any $(s,a)$ where $\hat{C}_r$ is a positive constant.
\end{assumption}

\begin{theorem}[Convergence of the upper-level problem]\label{thm: convergence of meta upper-level}
Suppose Assumption \ref{ass: bounded third order gradient} and the condition in Lemma \ref{lemma: convergence of meta lower-level} hold. Let $\tau_n=(n+1)^{-1/2}$ and define $F(\bar{\theta})$ as $E_{j\sim P_{\mathcal{T}}}[L(\phi_j,\mathcal{D}_j^{\text{eval}})]$ under the meta-prior $\bar{\theta}$. Then we have the following convergence:
\begin{equation*}
    \frac{1}{N}\sum_{n=0}^{N-1}E[||\nabla F(\bar{\theta}_n)||^2]\leq O(\frac{1}{\sqrt{N}}+\frac{\log N}{\sqrt{N}}+\frac{1}{K})+C_1,
\end{equation*}
and the expression of $C_1$ can be found in \eqref{eq: difference between h bar and hyper-gradient}.
\end{theorem}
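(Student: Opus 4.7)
\textbf{Proof plan for Theorem \ref{thm: convergence of meta upper-level}.} The plan is to run a standard non-convex stochastic gradient descent analysis on the upper-level objective $F(\bar{\theta})=E_{j\sim P_{\mathcal{T}}}[L(\phi_j,\mathcal{D}_j^{\text{eval}})]$, and then carefully control the bias between the step direction $\frac{1}{B}\sum_{j=1}^B h_j$ and the true hyper-gradient $\nabla F(\bar{\theta}_n)$. First I would establish that $F$ is $L_F$-smooth for some finite $L_F$ depending on $\bar{C}_r$, $\tilde{C}_r$, $\hat{C}_r$, $\gamma$, and $\lambda$: this uses Lemma \ref{lemma: gradient of meta upper problem} to write $\nabla F$ in closed form as an expectation of $[I+\tfrac{1-\gamma}{\lambda}\nabla_{\phi\phi}^2L(\phi_j,\mathcal{D}_j^{\text{tr}})]^{-1}\nabla_{\phi}L(\phi_j,\mathcal{D}_j^{\text{eval}})$, and Assumption \ref{ass: bounded third order gradient} together with Assumption \ref{ass: parametric model constants} gives Lipschitz continuity in $\bar{\theta}$ (via the $\lambda$-strong convexity of the lower-level problem that yields Lipschitz dependence $\phi_j(\bar{\theta})$, analogous to the chain-rule computations already performed in Lemma \ref{lemma: smoothness of L}).

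Next I would apply the descent lemma
\begin{equation*}
F(\bar{\theta}_{n+1})\leq F(\bar{\theta}_n)-\tau_n\langle\nabla F(\bar{\theta}_n),\bar{h}_n\rangle+\tfrac{L_F\tau_n^2}{2}\|\bar{h}_n\|^2,
\end{equation*}
where $\bar{h}_n=\tfrac{1}{B}\sum_{j=1}^B h_j$, and decompose $\langle\nabla F(\bar{\theta}_n),\bar{h}_n\rangle = \|\nabla F(\bar{\theta}_n)\|^2 - \langle\nabla F(\bar{\theta}_n),\nabla F(\bar{\theta}_n)-\bar{h}_n\rangle$. The bias $\nabla F(\bar{\theta}_n)-E[\bar{h}_n]$ splits into three parts: (i) the gap between $\phi_{j,K}$ and the true minimizer $\phi_j$, which by Lemma \ref{lemma: convergence of meta lower-level} and the Lipschitz dependence of $\nabla_\phi L(\cdot,\mathcal{D}_j^{\text{eval}})$ and $\nabla^2_{\phi\phi}L(\cdot,\mathcal{D}_j^{\text{tr}})$ on $\phi$ contributes $O(1/\sqrt{K})$; (ii) the Monte Carlo rollout error in $\bar{\nabla}_\phi L$ and $\bar{\nabla}^2_{\phi\phi} L$, which has zero mean after taking expectation over the dynamics (so it feeds only into the variance $E\|\bar{h}_n\|^2$, uniformly bounded using Assumptions \ref{ass: parametric model constants} and \ref{ass: bounded third order gradient}); and (iii) the residual $C_1$ coming from dropping the extra term $e$ in the Hessian estimate, which I would define precisely as $C_1=\sup_{\bar{\theta}}\|E_{j\sim P_{\mathcal{T}}}[\mathcal{E}_j]\|^2$ with $\mathcal{E}_j$ the resulting perturbation of the hyper-gradient, and bound through Assumption \ref{ass: bounded third order gradient} and the operator-norm bound on $[I+\tfrac{1-\gamma}{\lambda}\nabla^2_{\phi\phi}L]^{-1}$.

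Having these three bias terms in hand, I would square the inner product via Young's inequality $\langle\nabla F(\bar{\theta}_n),\nabla F(\bar{\theta}_n)-\bar{h}_n\rangle\leq\tfrac{1}{2}\|\nabla F(\bar{\theta}_n)\|^2+\tfrac{1}{2}\|\nabla F(\bar{\theta}_n)-\bar{h}_n\|^2$, absorb the $\tfrac{1}{2}\|\nabla F\|^2$ on the left, rearrange, and telescope from $n=0$ to $N-1$. Using $\tau_n=(n+1)^{-1/2}$ so that $\sum_{n=0}^{N-1}\tau_n = \Theta(\sqrt{N})$ and $\sum_{n=0}^{N-1}\tau_n^2=\Theta(\log N)$, dividing by $\sum_n \tau_n$ yields
\begin{equation*}
\tfrac{1}{N}\sum_{n=0}^{N-1}E\|\nabla F(\bar{\theta}_n)\|^2\leq O\!\bigl(\tfrac{1}{\sqrt{N}}+\tfrac{\log N}{\sqrt{N}}+\tfrac{1}{K}\bigr)+C_1,
\end{equation*}
after converting between the $\tau_n$-weighted average and the uniform average (the weights $\tau_n$ are within a constant factor of $1/\sqrt{N}$ for the bulk of $n$, giving the stated rate up to constants).

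The main obstacle will be the bias analysis in step two, specifically controlling how the matrix inverse $[I+\tfrac{1-\gamma}{\lambda}\bar{\nabla}_{\phi\phi}^2L(\phi_{j,K},\mathcal{D}_j^{\text{tr}})]^{-1}$ propagates errors from $\phi_{j,K}\to\phi_j$, the Monte Carlo noise, and the dropped term $e$ into the hyper-gradient. I would handle this by using the identity $A^{-1}-B^{-1}=A^{-1}(B-A)B^{-1}$ together with a uniform lower bound on the minimum eigenvalue of $I+\tfrac{1-\gamma}{\lambda}\nabla^2_{\phi\phi}L$ (guaranteed because $\lambda\geq C_L/2+\eta$ makes the lower level strongly convex), so that both inverses have bounded operator norm and each source of error can be bounded linearly in its own magnitude. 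Once these perturbation bounds are in place, the telescoping is routine.
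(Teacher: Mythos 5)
Your plan follows essentially the same route as the paper's proof: a biased-SGD descent-lemma analysis of $F$, with the hyper-gradient bias split into the lower-level approximation error (controlled via Lemma \ref{lemma: convergence of meta lower-level} and the resolvent perturbation identity $A^{-1}-B^{-1}=A^{-1}(B-A)B^{-1}$ together with uniform operator-norm bounds on the inverses coming from $\lambda\geq C_L/2+\eta$) plus the irreducible constant $C_1$ from dropping the term $e$, followed by telescoping with $\tau_n=(n+1)^{-1/2}$ and $\sum_n\tau_n^2=O(\log N)$. The only notable discrepancy is that your accounting of the $\phi_{j,K}\to\phi_j$ error yields $O(1/\sqrt{K})$ (since Lemma \ref{lemma: convergence of meta lower-level} bounds the \emph{squared} distance by $O(1/K)$, so Jensen gives only $O(1/\sqrt{K})$ for the first moment), whereas the paper's proof asserts $O(1/K)$ at this step without taking the square root; your version is the more defensible one.
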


\subsection{Proof of Lemma \ref{lemma: gradient of meta lower problem}}
The proof is similar to that of Lemma \ref{lemma: in-trajectory gradient}. We first prove the case of deterministic dynamics and the corresponding result can be an unbiased estimate in cases of stochastic dynamics (proved in Subsection \ref{sec: proof of gradient of L}).
\begin{align*}
    &\nabla_{\phi}L(\phi,\mathcal{D}_j^{\text{tr}})=-\sum_{t=0}^{\infty}\gamma^t\nabla_{\phi}\log\pi_{\phi}(a_t^{\text{tr}}|s_t^{\text{tr}}),\\
    & = -\sum_{t=0}^{\infty}\gamma^t\biggl[\nabla_{\phi}Q_{\phi}^{\text{soft}}(s_t^{\text{tr}},a_t^{\text{tr}})-\nabla_{\phi}V_{\phi}^{\text{soft}}(s_t^{\text{tr}})\biggr],\\
    & = -\sum_{t=0}^{\infty}\gamma^t\biggl[\nabla_{\phi}r_{\phi}(s_t^{\text{tr}},a_t^{\text{tr}})+\gamma\nabla_{\phi}V_{\phi}^{\text{soft}}(s_{t+1}^{\text{tr}})-\nabla_{\phi}V_{\phi}^{\text{soft}}(s_t^{\text{tr}})\biggr],\\
    & = \nabla_{\phi}V_{\phi}^{\text{soft}}(s_0^{\text{tr}})-\sum_{t=0}^{\infty}\gamma^t\nabla_{\phi}r_{\phi}(s_t^{\text{tr}},a_t^{\text{tr}}),\\
    & \overset{(a)}{=} E_{S,A}^{\pi_{\phi}}\biggl[\sum_{t=0}^{\infty}\gamma^t\nabla_{\phi}r_{\phi}(S_t,A_t)\biggl|S_0=s_0^{\text{tr}}\biggr]-\sum_{t=0}^{\infty}\gamma^t\nabla_{\phi}r_{\phi}(s_t^{\text{tr}},a_t^{\text{tr}}),
\end{align*}
where $(a)$ follows the proof of Lemma \ref{lemma: gradient of log pi}.

\subsection{Proof of Lemma \ref{lemma: gradient of meta upper problem}} \label{subsec: proof of gradient of meta upper problem}
Since the lower-level problem of \eqref{eq: meta bi-level formulation} is unconstrained and $\phi_i$ is the optimal solution, we know that 
\begin{equation}\label{eq: implicit function theorem}
    \nabla_{\phi}L(\phi_j,\mathcal{D}_j^{\text{tr}})+\frac{\lambda}{1-\gamma}(\phi_j-\bar{\theta})=0
\end{equation}
 We further take derivative of both sides in \eqref{eq: implicit function theorem} with respect to $\bar{\theta}$ and we get that
 \begin{align*}
     \biggl[\nabla_{\phi\phi}^2L(\phi_j,\mathcal{D}_j^{\text{tr}})+\frac{\lambda}{1-\gamma}\biggr]\nabla_{\bar{\theta}}\phi_j-\frac{\lambda}{1-\gamma}=0\Rightarrow \nabla_{\bar{\theta}}\phi_j =\biggl[I+\frac{1-\gamma}{\lambda}\nabla_{\phi\phi}^2L(\phi_j,\mathcal{D}_j^{\text{tr}})\biggr]^{-1}.
 \end{align*}
 Therefore, we have that 
 \begin{align*}
     \frac{d}{d\bar{\theta}}L(\phi_j,\mathcal{D}_j^{\text{eval}})=(\nabla_{\bar{\theta}}\phi_j)^{\top}\nabla_{\phi}L(\phi_j,\mathcal{D}_j^{\text{eval}})=\biggl[I+\frac{1-\gamma}{\lambda}\nabla_{\phi\phi}^2L(\phi_j,\mathcal{D}_j^{\text{tr}})\biggr]^{-1}\nabla_{\phi}L(\phi_j,\mathcal{D}_j^{\text{eval}}).
 \end{align*}

 Similar to Lemma \ref{lemma: gradient of meta lower problem}, we can know that
 \begin{align}    \nabla_{\phi}L(\phi_j,\mathcal{D}_j^{\text{eval}})=\sum_{v=1}^{|\mathcal{D}_j^{\text{eval}}|}E_{S,A}^{\pi_{\phi_j}}\biggl[\sum_{t=0}^{\infty}\gamma^t\nabla_{\phi}r_{\phi_j}(S_t,A_t)\biggl|S_0=s_0^v\biggr]-\sum_{v=1}^{|\mathcal{D}_j^{\text{eval}}|}\sum_{t=0}^{\infty}\gamma^t\nabla_{\phi}r_{\phi_j}(s_t^v,a_t^v),\label{eq: the partial gradient in meta-regularization using training set}
 \end{align}
where $(s_t^v,a_t^v)\in\zeta^v$ and $\zeta^v\in\mathcal{D}_j^{\text{eval}}$.

Since there is usually abundant data in $\mathcal{D}_j^{\text{eval}}$ and $S_0\sim P_0$, we can reformulate \eqref{eq: the partial gradient in meta-regularization using training set} as the following:

 \begin{align*}    \nabla_{\phi}L(\phi_j,\mathcal{D}_j^{\text{eval}})\approx|\mathcal{D}_j^{\text{eval}}|\cdot E_{S,A}^{\pi_{\phi_j}}\biggl[\sum_{t=0}^{\infty}\gamma^t\nabla_{\phi}r_{\phi_j}(S_t,A_t)\biggl|S_0\sim P_0\biggr]-\sum_{v=1}^{|\mathcal{D}_j^{\text{eval}}|}\sum_{t=0}^{\infty}\gamma^t\nabla_{\phi}r_{\phi_j}(s_t^v,a_t^v).
 \end{align*}

\begin{claim}\label{claim: hessian of log pi}
    The second-order information $\nabla_{\phi\phi}^2Q_{\phi}^{\text{soft}}(s,a)=\Delta(s,a)+E_{S,A}^{\pi_{\phi}}[\sum_{t=0}^{\infty}\gamma^t\text{Cov}(S_t)|S_0=s,A_0=a]$ and $\nabla_{\phi\phi}^2V_{\phi}^{\text{soft}}(s)=\Delta(s)+E_{S,A}^{\pi_{\phi}}[\sum_{t=0}^{\infty}\gamma^t\text{Cov}(S_t)|S_0=s]$  where $\Delta(s,a)=E_{S,A}^{\pi_{\phi}}[\sum_{t=0}^{\infty}\gamma^t\nabla_{\phi\phi}^2r_{\phi}(S_t,A_t)|S_0=s,A_0=a]$, $\Delta(s)=E_{S,A}^{\pi_{\phi}}[\sum_{t=0}^{\infty}\gamma^t\nabla_{\phi\phi}^2r_{\phi}(S_t,A_t)|S_0=s]$, and $\text{Cov}(s)\triangleq \int_{a\in\mathcal{A}}\pi_{\phi}(a|s)[\Delta(s,a)-\Delta(s)]\Delta(s)da$ is the covariance matrix of $\Delta(s,\cdot)$ at state $s$.
\end{claim}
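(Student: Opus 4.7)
The plan is to derive both identities simultaneously from a coupled Bellman-style recursion on the Hessians, obtained by differentiating the soft Bellman equations twice, and then to unroll the recursion in the same way as in the proof of Lemma \ref{lemma: gradient of log pi}. The starting point is the pair $Q_{\phi}^{\text{soft}}(s,a) = r_{\phi}(s,a) + \gamma \int_{s'\in\mathcal{S}} P(s'|s,a) V_{\phi}^{\text{soft}}(s') ds'$ and $V_{\phi}^{\text{soft}}(s) = \log \int_{a\in\mathcal{A}} \exp(Q_{\phi}^{\text{soft}}(s,a)) da$, whose first derivatives satisfy the dual recursion $\nabla_{\phi} Q_{\phi}^{\text{soft}}(s,a) = \nabla_{\phi} r_{\phi}(s,a) + \gamma \int_{s'} P(s'|s,a) \nabla_{\phi} V_{\phi}^{\text{soft}}(s') ds'$ and $\nabla_{\phi} V_{\phi}^{\text{soft}}(s) = \int_{a} \pi_{\phi}(a|s) \nabla_{\phi} Q_{\phi}^{\text{soft}}(s,a) da$, already extracted in the proof of Lemma \ref{lemma: gradient of log pi}.

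\textbf{Second differentiation.} Differentiating the $Q$-recursion once more in $\phi$ yields immediately $\nabla_{\phi\phi}^2 Q_{\phi}^{\text{soft}}(s,a) = \nabla_{\phi\phi}^2 r_{\phi}(s,a) + \gamma \int_{s'} P(s'|s,a) \nabla_{\phi\phi}^2 V_{\phi}^{\text{soft}}(s') ds'$. Differentiating the $V$-recursion by the product rule gives $\nabla_{\phi\phi}^2 V_{\phi}^{\text{soft}}(s) = \int_{a} [\nabla_{\phi} \pi_{\phi}(a|s)] \nabla_{\phi} Q_{\phi}^{\text{soft}}(s,a)^{\top} da + \int_{a} \pi_{\phi}(a|s) \nabla_{\phi\phi}^2 Q_{\phi}^{\text{soft}}(s,a) da$. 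The key identity is $\nabla_{\phi} \pi_{\phi}(a|s) = \pi_{\phi}(a|s)[\nabla_{\phi} Q_{\phi}^{\text{soft}}(s,a) - \nabla_{\phi} V_{\phi}^{\text{soft}}(s)]$, a direct consequence of the soft Bellman policy form combined with $\nabla_{\phi} V_{\phi}^{\text{soft}}(s) = E_{A \sim \pi_{\phi}(\cdot|s)}[\nabla_{\phi} Q_{\phi}^{\text{soft}}(s,A)]$. Substituting, the first integral collapses into the covariance of the random vector $\nabla_{\phi} Q_{\phi}^{\text{soft}}(s,A)$ under $A\sim\pi_{\phi}(\cdot|s)$, which I identify with the matrix $\text{Cov}(s)$ in the claim.

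\textbf{Unrolling and main obstacle.} To finish, I would unroll the coupled pair $\nabla_{\phi\phi}^2 V_{\phi}^{\text{soft}}(s) = \text{Cov}(s) + E_{A\sim\pi_{\phi}(\cdot|s)}[\nabla_{\phi\phi}^2 Q_{\phi}^{\text{soft}}(s,A)]$ and the $Q$-recursion above: substituting one into the other produces a Bellman-type equation whose effective one-step term is $\nabla_{\phi\phi}^2 r_{\phi}$ plus the covariance, and iterating generates a geometric series under the $(P,\pi_{\phi})$-transition kernel, which is exactly the expectations $E_{S,A}^{\pi_{\phi}}[\sum_{t=0}^{\infty}\gamma^t(\cdot)]$ appearing in the claim, with the appropriate conditioning $(S_0,A_0)=(s,a)$ or $S_0=s$ depending on whether one unrolls from $Q$ or from $V$. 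The main obstacle I expect is purely bookkeeping: the covariance term enters the $V$-recursion at the ``current'' state while the $Q$-recursion advances one step through $P$, so when unrolling from $Q_{\phi}^{\text{soft}}(s,a)$ one has to track carefully whether the covariance series starts at $t=0$ or $t=1$ and match the two conditional expectations consistently. Writing the recursion as a single operator equation on matrix-valued functions and expanding the Neumann series of the operator $\gamma P\pi_{\phi}$ should make both identities fall out in parallel without index errors.
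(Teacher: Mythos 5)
Your proposal is correct and takes essentially the same route as the paper, whose proof of this claim is just a pointer to the Hessian computation in Lemma \ref{lemma: smoothness of L}: differentiate the soft Bellman recursion twice, use $\nabla_{\phi}\pi_{\phi}(a|s)=\pi_{\phi}(a|s)\bigl[\nabla_{\phi}Q_{\phi}^{\text{soft}}(s,a)-\nabla_{\phi}V_{\phi}^{\text{soft}}(s)\bigr]$ to collapse the cross term into a covariance, and unroll the resulting Neumann series under the $(P,\pi_{\phi})$ kernel. Your identification of $\text{Cov}(s)$ as the covariance of the first-order gradient $\nabla_{\phi}Q_{\phi}^{\text{soft}}(s,A)$ under $A\sim\pi_{\phi}(\cdot|s)$ is what the derivation actually yields (the claim's literal definition in terms of the reward Hessians $\Delta$ is a typo), and the $t=0$ versus $t=1$ indexing issue you flag for the $Q$ identity is real (with $A_0=a$ fixed the first covariance term appears at $t=1$) but inherited from the paper and immaterial to how the claim is used.
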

The proof of Claim \ref{claim: hessian of log pi} follows the proof of Lemma \ref{lemma: smoothness of L}

Now we take a look at the term $\nabla_{\phi\phi}^2L(\phi_j,\mathcal{D}_j^{\text{tr}})$ and consider the case of deterministic dynamics:
\begin{align*}
    &\nabla_{\phi\phi}^2L(\phi_j,\mathcal{D}_j^{\text{tr}})=-\sum_{t=0}^{\infty}\gamma^t\nabla_{\phi\phi}^2\log \pi_{\phi_j}(a_t^{\text{tr}}|s_t^{\text{tr}}),\\
    &=-\sum_{t=0}^{\infty}\gamma^t\biggl[\nabla_{\phi\phi}^2Q_{\phi_j}^{\text{soft}}(s_t^{\text{tr}},a_t^{\text{tr}})-\nabla_{\phi\phi}^2V_{\phi_j}^{\text{soft}}(s_t^{\text{tr}})\biggr],\\
    &=-\sum_{t=0}^{\infty}\gamma^t\biggl[\nabla_{\phi\phi}^2r_{\phi_j}(s_t^{\text{tr}},a_t^{\text{tr}})+\gamma \nabla_{\phi\phi}^2V_{\phi_j}^{\text{soft}}(s_{t+1}^{\text{tr}})-\nabla_{\phi\phi}^2V_{\phi_j}^{\text{soft}}(s_t^{\text{tr}})\biggr],\\
    & =\nabla_{\phi\phi}^2V_{\phi_j}^{\text{soft}}(s_0^{\text{tr}})-\sum_{t=0}^{\infty}\gamma^t\nabla_{\phi\phi}^2r_{\phi_j}(s_t^{\text{tr}},a_t^{\text{tr}}),\\
    &\overset{(a)}{=} E_{S,A}^{\pi_{\phi_j}}[\sum_{t=0}^{\infty}\gamma^t\nabla_{\phi\phi}^2r_{\phi_j}(S_t,A_t)|S_0=s_0^{\text{tr}}]-\sum_{t=0}^{\infty}\gamma^t\nabla_{\phi\phi}^2r_{\phi_j}(s_t^{\text{tr}},a_t^{\text{tr}})\\
    &+E_{S,A}^{\pi_{\phi_j}}[\sum_{t=0}^{\infty}\gamma^t\text{Cov}(S_t)|S_0=s_0^{\text{tr}}],
\end{align*}
where $(a)$ follows Claim \ref{claim: hessian of log pi}. As proved in Subsection \ref{sec: proof of gradient of L}, we can still use this expression in the case of stochastic dynamics. We define $e\triangleq E_{S,A}^{\pi_{\phi_j}}[\sum_{t=0}^{\infty}\gamma^t\text{Cov}(S_t)|S_0=s_0^{\text{tr}}]$. However, $e$ is intractable to compute because it requires to compute $\text{COV}(S_t)$ at every visited state. To approximate the covariance matrix $\text{COV}(S_t)$, we need to empirically roll out the policy $\pi_{\phi_j}$ from $S_t$ for enough times to get enough samples. We need to do these policy roll-outs at every state $S_t$. For example, suppose we roll out the policy $\pi_{\phi_j}$ ten times to get ten samples at each state $S_t$. Empirically, we need to do these roll-outs $10\times \bar{T}$ where $\bar{T}$ is a very large integer that we regard as infinity because the trajectory horizon is infinite. This is intractable because we need to roll out the policy for too many times. Moreover, we cannot guarantee that we can approximate $\text{COV}(S_t)$ well given that we only use ten samples to approximate it.

\subsection{Proof of Lemma \ref{lemma: convergence of meta lower-level}}
We know that $||\nabla_{\phi\phi}^2L(\phi,\mathcal{D}_j^{\text{tr}})+\frac{\lambda}{1-\gamma}||\leq||\nabla_{\phi\phi}^2L(\phi,\mathcal{D}_j^{\text{tr}})||+\frac{\lambda}{1-\gamma}\leq \sum_{t=0}^{\infty}\biggl(||\nabla^2L_t(\phi)||+\lambda\gamma^t\biggr)\overset{(a)}{\leq}\frac{1}{1-\gamma}C_L$ where $(a)$ follows \eqref{eq: derivation of CL}. Therefore, the lower-level objective function in (8) is $\frac{C_L}{1-\gamma}$-smooth. Moreover, since $\lambda\geq \frac{C_L}{2}+\eta$, then $||\nabla_{\phi\phi}^2L(\phi,\mathcal{D}_j^{\text{tr}})||\leq \frac{C_L-2\eta}{2(1-\gamma)}$. Therefore, the lower-level objective function in (8) is $\frac{2\eta}{1-\gamma}$-strongly convex. Following the standard result for strongly-convex and smooth stochastic optimization, we can reach the result in Lemma \ref{lemma: convergence of meta lower-level}.

\subsection{Proof of Theorem \ref{thm: convergence of meta upper-level}} \label{subsec: proof of convergence of meta upper level}
    In this proof, we first bound the hyper-gradient approximation error (i.e., $||\frac{d}{d\bar{\theta}}L(\phi_j,\mathcal{D}_j^{\text{eval}})-h_j||$) and then prove the convergence.
    Define $\bar{h}_j\triangleq [I+\frac{1-\gamma}{\lambda}\bar{\nabla}_{\phi\phi}^2L(\phi_j,\mathcal{D}_j^{\text{tr}})]^{-1}\nabla_{\phi}L(\phi_j,\mathcal{D}_j^{\text{eval}})$ where $\bar{\nabla}_{\phi\phi}^2 L(\phi_j,\mathcal{D}_j^{\text{tr}})\triangleq E_{S,A}^{\pi_{\phi_j}}[\sum_{t=0}^{\infty}\gamma^t\nabla_{\phi\phi}^2r_{\phi_j}(S_t,A_t)|S_0=s_0^{\text{tr}}]-\sum_{t=0}^{\infty}\gamma^t\nabla_{\phi\phi}^2r_{\phi_j}(s_t^{\text{tr}},a_t^{\text{tr}})$. Therefore, we have that
    \begin{align}
        &||\bar{h}_j-\frac{d}{d\bar{\theta}}L(\phi_j,\mathcal{D}_j^{\text{eval}})||,\notag\\
        &\leq \biggl|\biggl|[I+\frac{1-\gamma}{\lambda}\nabla_{\phi\phi}^2L(\phi_j,\mathcal{D}_j^{\text{tr}})]^{-1}-[I+\frac{1-\gamma}{\lambda}\bar{\nabla}_{\phi\phi}^2L(\phi_j,\mathcal{D}_j^{\text{tr}})]^{-1}\biggr|\biggr|\cdot||\nabla_{\phi}L(\phi_j,\mathcal{D}_j^{\text{eval}})||,\notag\\
        & \leq \biggl[\biggl|\biggl|[I+\frac{1-\gamma}{\lambda}\nabla_{\phi\phi}^2L(\phi_j,\mathcal{D}_j^{\text{tr}})]^{-1}\biggr|\biggr|+\biggl|\biggl|[I+\frac{1-\gamma}{\lambda}\bar{\nabla}_{\phi\phi}^2L(\phi_j,\mathcal{D}_j^{\text{tr}})]^{-1}\biggr|\biggr|\biggr]\cdot||\nabla_{\phi}L(\phi_j,\mathcal{D}_j^{\text{eval}})||,\notag\\
        &\overset{(a)}{\leq} \biggl(\frac{2\lambda}{2\lambda+2\eta-C_L}+\frac{\lambda}{\lambda-2\tilde{C}_r}\biggr)\cdot \frac{2|\mathcal{D}_j^{\text{eval}}|\bar{C}_r}{1-\gamma}\triangleq C_1>0, \label{eq: difference between h bar and hyper-gradient}
    \end{align}
    where $(a)$ follows the fact that $||I+\frac{1-\gamma}{\lambda}\nabla_{\phi\phi}^2L(\phi_j,\mathcal{D}_j^{\text{tr}})||\geq 1-||\frac{1-\gamma}{\lambda}\nabla_{\phi\phi}^2L(\phi_j,\mathcal{D}_j^{\text{tr}})||\geq 1-\frac{C_L-2\eta}{2\lambda}$ given that $||\nabla_{\phi\phi}^2L(\phi,\mathcal{D}_j^{\text{tr}})||\leq \frac{C_L-2\eta}{2(1-\gamma)}$ (proved in the proof of Lemma \ref{lemma: convergence of meta lower-level}). Therefore $||[I+\frac{1-\gamma}{\lambda}\nabla_{\phi\phi}^2L(\phi_j,\mathcal{D}_j^{\text{tr}})]^{-1}||\leq \frac{2\lambda}{2\lambda+2\eta-C_L}$. Similarly, we can bound $||[I+\frac{1-\gamma}{\lambda}\bar{\nabla}_{\phi\phi}^2L(\phi_j,\mathcal{D}_j^{\text{tr}})]^{-1}||\leq \frac{\lambda}{\lambda-2\tilde{C}_r}$ given that $||\bar{\nabla}_{\phi\phi}^2L(\phi_j,\mathcal{D}_j^{\text{tr}})||\leq \frac{2\tilde{C}_r}{1-\gamma}$. Note that $\lambda>\frac{C_L}{2}$ and $C_L>\tilde{C}_r$ (proved in \eqref{eq: derivation of CL}), therefore $C_1$ is a positive constant.

    Now, we bound the term $||h_j-\bar{h}_j||$. We define $\Delta_{\phi_j}=I+\frac{1-\gamma}{\lambda}\bar{\nabla}_{\phi\phi}^2L(\phi_j,\mathcal{D}_j^{\text{tr}})$ and $\Delta_{\phi_j}=I+\frac{1-\gamma}{\lambda}\bar{\nabla}_{\phi\phi}^2L(\phi_{j,K},\mathcal{D}_j^{\text{tr}})$. Thus we have $||\Delta_{\phi_j}^{-1}||\leq \frac{\lambda}{\lambda-2\tilde{C}_r}$ (follows \eqref{eq: difference between h bar and hyper-gradient}) and similarly $||\Delta_{\phi_{j,K}}^{-1}||\leq \frac{\lambda}{\lambda-2\tilde{C}_r}$.
    Therefore,
    \begin{align}
        &E[||h_j-\bar{h}_j||]=E\biggl[\biggl|\biggl|\Delta_{\phi_{j,K}}^{-1}\nabla_{\phi}L(\phi_{j,K},\mathcal{D}_j^{\text{eval}})-\Delta_{\phi_j}^{-1}\nabla_{\phi}L(\phi_j,\mathcal{D}_j^{\text{eval}})\biggr|\biggr|\biggr],\notag\\
        &\leq E\biggl[\biggl|\biggl|\Delta_{\phi_{j,K}}^{-1}\nabla_{\phi}L(\phi_{j,K},\mathcal{D}_j^{\text{eval}})-\Delta_{\phi_{j,K}}^{-1}\nabla_{\phi}L(\phi_j,\mathcal{D}_j^{\text{eval}})\biggr|\biggr|\notag\\
        &+\biggl|\biggl|\Delta_{\phi_{j,K}}^{-1}\nabla_{\phi}L(\phi_j,\mathcal{D}_j^{\text{eval}})-\Delta_{\phi_j}^{-1}\nabla_{\phi}L(\phi_j,\mathcal{D}_j^{\text{eval}})\biggr|\biggr|\biggr],\notag\\
        &\leq E\biggl[||\Delta_{\phi_{j,K}}^{-1}||\cdot ||\nabla_{\phi}L(\phi_{j,K},\mathcal{D}_j^{\text{eval}})-\nabla_{\phi}L(\phi_j,\mathcal{D}_j^{\text{eval}})||+||\Delta_{\phi_{j,K}}^{-1}-\Delta_{\phi_j}^{-1}||\cdot||\nabla_{\phi}L(\phi_j,\mathcal{D}_j^{\text{eval}})||\biggr],\notag\\
        &\overset{(b)}{\leq} \frac{\lambda}{\lambda-2\tilde{C}_r}\cdot |\mathcal{D}_j^{\text{eval}}|\cdot \frac{C_L-2\eta}{2(1-\gamma)}||\phi_{j,K}-\phi_j||+E[||\Delta_{\phi_{j,K}}^{-1}-\Delta_{\phi_j}^{-1}||]\cdot |\mathcal{D}_j^{\text{eval}}|\cdot\frac{2\bar{C}_r}{1-\gamma},\notag\\
        & \leq O(\frac{1}{K})+E[||\Delta_{\phi_{j,K}}^{-1}||\cdot||\Delta_{\phi_j}^{-1}||\cdot||\Delta_{\phi_{j,K}}^{-1}-\Delta_{\phi_j}^{-1}||]\cdot |\mathcal{D}_j^{\text{eval}}|\cdot\frac{2\bar{C}_r}{1-\gamma},\notag \\
        &\leq O(\frac{1}{K})+\bar{C}_LE[||\Delta_{\phi_{j,K}}^{-1}||\cdot||\Delta_{\phi_j}^{-1}||]\cdot |\mathcal{D}_j^{\text{eval}}|\cdot\frac{2\bar{C}_r}{1-\gamma}||\phi_{j,K}-\phi_j|| \leq O(\frac{1}{K}), \label{eq: difference between h and h bar}
    \end{align}
    where $||\nabla_{\phi\phi\phi}^3L(\phi_j,\mathcal{D}_j^{\text{tr}})||\leq \tilde{C}_L$. The expression of $\tilde{C}_L$ can be derived following the proof of Lemma \ref{lemma: smoothness of L} by notifying Assumption \ref{ass: bounded third order gradient}. The $(b)$ holds because (\romannumeral 1) $||\nabla_{\phi\phi}^2L(\phi,\mathcal{D}_j^{\text{eval}})||=\frac{|\mathcal{D}_j^{\text{eval}}|}{|\mathcal{D}_j^{\text{tr}}|}||\nabla_{\phi\phi}^2L(\phi,\mathcal{D}_j^{\text{tr}})||\leq |\mathcal{D}_j^{\text{eval}}|\cdot \frac{C_l-2\eta}{2(1-\gamma)}$ and (\romannumeral 2) $||\nabla_{\phi}L(\phi_j,\mathcal{D}_j^{\text{eval}})||\leq |\mathcal{D}_j^{\text{eval}}|\cdot\frac{2\bar{C}_r}{1-\gamma}$ (see the expression of $\nabla_{\phi}L(\phi_j,\mathcal{D}_j^{\text{eval}})$ in Lemma 5).
    \begin{align}
        E[||h_j-\frac{d}{d\bar{\theta}}L(\phi_j,\mathcal{D}_j^{\text{eval}})||]\leq E[||h_j-\bar{h}_j||+||\bar{h}_j-\frac{d}{d\bar{\theta}}L(\phi_j,\mathcal{D}_j^{\text{eval}})||]\overset{(c)}{\leq} C_1+O(\frac{1}{K}), \label{eq: difference between h and hyper-gradient}
    \end{align}
    where $(c)$ follows \eqref{eq: difference between h bar and hyper-gradient}-\eqref{eq: difference between h and h bar}.
    Define $F(\bar{\theta})$ as $E_{j\sim P_{\mathcal{T}}}[L(\phi_j,\mathcal{D}_j^{\text{eval}})]$ under the meta-prior $\bar{\theta}$. Note that $||h_j||\leq ||\Delta_{\phi_{j,K}}^{-1}||\cdot ||\nabla_{\phi}L(\phi_{j,K},\mathcal{D}_j^{\text{eval}})||\leq \frac{\lambda}{\lambda-2\tilde{C}_r}\cdot|\mathcal{D}_j^{\text{eval}}|\cdot\frac{2\bar{C}_r}{1-\gamma}$. Therefore,
    \begin{align}
        &F(\bar{\theta}_{n+1})\geq F(\bar{\theta}_n)+(\nabla F(\bar{\theta}_n))^{\top}(\bar{\theta}_{n+1}-\bar{\theta}_n)-\frac{|\mathcal{D}^{\text{eval}}|(C_L-2\eta)}{4(1-\gamma)}||\bar{\theta}_{n+1}-\bar{\theta}_n||^2,\notag\\
        & \geq F(\bar{\theta}_n)+\frac{\tau_n}{B}\sum_{j=1}^B (\nabla F(\bar{\theta}_n))^{\top}h_j-\frac{|\mathcal{D}^{\text{eval}}|(C_L-2\eta)\tau_n^2}{4(1-\gamma)}||\frac{1}{B}\sum_{j=1}^B h_j||^2, \notag \\
        & \geq F(\bar{\theta}_n)+\frac{\tau_n}{B}\sum_{j=1}^B (\nabla F(\bar{\theta}_n))^{\top}h_j-\frac{|\mathcal{D}^{\text{eval}}|(C_L-2\eta)\tau_n^2}{4(1-\gamma)}\cdot\frac{\lambda}{\lambda-2\tilde{C}_r}\cdot|\mathcal{D}_j^{\text{eval}}|\cdot\frac{2\bar{C}_r}{1-\gamma},\notag\\
        & \geq F(\bar{\theta}_n)+\frac{\tau_n}{B}\sum_{j=1}^B (\nabla F(\bar{\theta}_n))^{\top}h_j-\frac{2\lambda \bar{C}_r|\mathcal{D}^{\text{eval}}|^2(C_L-2\eta)\tau_n^2}{4(1-\gamma)^2(\lambda-2\tilde{C}_r)}, \notag \\
        & \Rightarrow E[F(\bar{\theta}_{n+1})]\geq E[F(\bar{\theta}_n)]+\tau_nE[||\nabla F(\bar{\theta}_n)||^2]+\frac{\tau_n}{B}\sum_{j=1}^BE[h_j-\frac{d}{d\bar{\theta}}L(\phi_j,\mathcal{D}_j^{\text{eval}})]\notag\\
        &+\frac{\tau_n}{B}\sum_{j=1}^BE[\frac{d}{d\bar{\theta}}L(\phi_j,\mathcal{D}_j^{\text{eval}})-\nabla F(\bar{\theta}_n)]-\frac{2\lambda \bar{C}_r|\mathcal{D}^{\text{eval}}|^2(C_L-2\eta)\tau_n^2}{4(1-\gamma)^2(\lambda-2\tilde{C}_r)},\notag\\
        & \overset{(d)}{\geq} E[F(\bar{\theta}_n)]+\tau_nE[||\nabla F(\bar{\theta}_n)||^2] +\tau_n(C_1+O(\frac{1}{K}))-\frac{2\lambda \bar{C}_r|\mathcal{D}^{\text{eval}}|^2(C_L-2\eta)\tau_n^2}{4(1-\gamma)^2(\lambda-2\tilde{C}_r)},\notag\\
        & \Rightarrow \frac{1}{N}\sum_{n=0}^{N-1}\tau_NE[||\nabla F(\bar{\theta}_n)||^2]\leq \frac{1}{N}\sum_{n=0}^{N-1}\tau_nE[||\nabla F(\bar{\theta}_n)||^2],\notag\\
        &\leq \frac{1}{N}[F(\bar{\theta}_N)-F(\bar{\theta}_0)]+\frac{1}{N}\sum_{n=0}^{N-1}\tau_n (C_1+O(\frac{1}{K}))+\frac{1}{N}\sum_{n=0}^{N-1}\frac{2\lambda \bar{C}_r|\mathcal{D}^{\text{eval}}|^2(C_L-2\eta)\tau_n^2}{4(1-\gamma)^2(\lambda-2\tilde{C}_r)},\notag\\
        &\Rightarrow \frac{1}{N}\sum_{n=0}^{N-1}E[||\nabla F(\bar{\theta}_n)||^2]\leq O(\frac{1}{\sqrt{N}}+\frac{\log N}{\sqrt{N}}+\frac{1}{K})+C_1, \notag
    \end{align}
    where $|\mathcal{D}^{\text{eval}}|\triangleq \sup_{i\sim P_{\mathcal{T}}}\{|\mathcal{D}_j^{\text{eval}}|\}$ and $(d)$ follows \eqref{eq: difference between h and hyper-gradient}.

\section{Experiment details}\label{sec: experiment details}
The code was running on a laptop whose processor is AMD Ryzen 7 4700U with Radeon Graphics, 2.00GHz, and the installed RAM is 20.0GB. The operating system is Ubuntu $18.04$. We use a neural network to parameterize the learned reward function. The neural network has two hidden layers where each hidden layer has 64 neurons. The activation functions are respectively ReLU and Tanh. %The values of $\lambda$ for HalfCheetah, Walker, Hopper, and stock market are respectively $0.5$, $0.5$, $0.5$, $1.0$. 

\subsection{Baselines}
Here we provide the update rule for each baseline. Given a learned reward function $r_{\theta}$, the policy updates of the four baselines are the same with that of MERIT-IRL, i.e., one-step policy iteration. The difference is the reward update.

\textbf{IT-IRL}: IT-IRL is MERIT-IRL without the meta-regularization term. Therefore, the reward update of IT-IRL is $\theta_{t+1}=\theta_t-\alpha_tg_t'$ where $g_t'=\sum_{i=0}^{\infty}\gamma^i\nabla_{\theta}r_{\theta_t}(s_i',a_i')-\sum_{i=0}^{\infty}\gamma^i\nabla_{\theta}r_{\theta_t}(s_i'',a_i'')$. Recall from Subsection \ref{subsec: proposed algorithm} that $\{(s_i',a_i')\}_{i\geq 0}$ is generated by the learned policy $\pi_{\theta_t}$ starting from the initial state $s_0'=s_0^E$, and $\{(s_i'',a_i'')\}_{i\geq 0}$ is generated by the learned policy $\pi_{\theta_t}$ starting from $(s_t'',a_t'')$ where $(s_i'',a_i'')=(s_i^E,a_i^E)$ for $0\leq i\leq t$.

\textbf{Naive MERIT-IRL}: This method has the meta-regularization term, however, it uses the naive way (depicted in the middle of Figure \ref{fig:algorithm comparison}) to update the reward function. In specific, it only compares the partial expert trajectory $\{s_i^E,a_i^E\}_{i=0}^t$ and partial learner trajectory $\{s_i',a_i'\}_{i=0}^t$. Therefore, the reward update of Naive MERIT-IRL is $\theta_{t+1}=\theta_t-\alpha_tg_t''$ where $g_t''=\sum_{i=0}^{t}\gamma^i\nabla_{\theta}r_{\theta_t}(s_i',a_i')-\sum_{i=0}^{t}\gamma^i\nabla_{\theta}r_{\theta_t}(s_i^E,a_i^E)+\frac{\lambda(1-\gamma^{t+1})}{1-\gamma}(\theta-\bar{\theta})$.

\textbf{Naive IT-IRL}: This method does not have the meta-regularization term and uses the naive way to update the reward function. Therefore, the reward update of Naive IT-IRL is $\theta_{t+1}=\theta_t-\alpha_tg_t'''$ where $g_t'''=\sum_{i=0}^{t}\gamma^i\nabla_{\theta}r_{\theta_t}(s_i',a_i')-\sum_{i=0}^{t}\gamma^i\nabla_{\theta}r_{\theta_t}(s_i^E,a_i^E)$.

\textbf{Hindsight}: This method is a standard IRL method with the meta-regularization term where the complete expert trajectory $\{s_i^E,a_i^E\}_{i\geq 0}$ and the complete learner trajectory $\{s_i',a_i'\}_{i\geq 0}$ are compared to update the reward function.  Therefore, the reward update of Hindsight is $\theta_{t+1}=\theta_t-\alpha_tg_t''''$ where $g_t''''=\sum_{i=0}^{\infty}\gamma^i\nabla_{\theta}r_{\theta_t}(s_i',a_i')-\sum_{i=0}^{\infty}\gamma^i\nabla_{\theta}r_{\theta_t}(s_i^E,a_i^E)+\frac{\lambda(1-\gamma^{t+1})}{1-\gamma}(\theta-\bar{\theta})$.

\subsection{MuJoCo}\label{MuJoCo}
\subsubsection{Walker}
Figure \ref{fig:walker} shows that MERIT can achieve similar performance with the expert after $t=600$ while the other three in-trajectory learning baselines fail to imitate the expert before the ongoing trajectory terminates. Note that the naive methods (i.e., Naive MERIT-IRL and Naive IT-IRL) have much smaller improvement from $t=0$ compared to MERIT-IRL and IT-IRL. The reason is that the naive reward update method is flawed. Intuitively, the reward update mechanism of these two baselines are myopic as explained in Subsection \ref{subsec: proposed algorithm}. Theoretically, the gradients $g_t''$ of Naive MERIT-IRL and $g_t'''$ of Naive IT-IRL are biased estimate of \eqref{eq: gradient} even if $\pi_t$ approaches $\pi_{\theta_t}$ since \eqref{eq: gradient} includes the trajectory suffix ($i>t$) terms while $g_t''$ and $g_t'''$ only include the trajectory prefix ($i\leq t$) terms.

MERIT-IRL performs much better than IT-IRL. The reason is that the meta-regularization term restricts the learned reward parameter within a certain neighborhood of the meta-prior $\bar{\theta}$ (proved in Appendix \ref{sec: proof of the difference between two distributions}). Given that $\bar{\theta}$ is trained over a family of relevant tasks, it is expected that the actual reward function parameter of our task shall be ``close" to $\bar{\theta}$ \cite{rajeswaran2019meta,finn2017model,hospedales2021meta}, i.e., inside this neighborhood. Therefore, MERIT-IRL can efficiently learn the expert's reward function. On the contrary, IT-IRL does not have the meta-prior $\bar{\theta}$ as a guidance and thus has to search over the whole parameter space, which is extremely difficult to learn the expert's reward function when the data is lacking. Note that MERIT-IRL and Naive MERIT-IRL have better initial performance than IT-IRL and Naive IT-IRL since MERIT-IRL and Naive MERIT-IRL starts at the meta-prior $\bar{\theta}$ while IT-IRL and Naive IT-IRL initializes randomly.
 
\subsubsection{Hopper}
Figure \ref{fig:hopper} shows that MERIT can achieve similar performance with the expert after $t=500$ while the other three in-trajectory learning baselines fail to imitate the expert before the ongoing trajectory terminates.

\subsection{Stock Market}\label{subsec: stock market}
We use the real-world data of 30 constitute stocks in Dow Jones Industrial Average from 2021-01-01 to 2022-01-01. The 30 stocks are respectively: `AXP', `AMGN', `AAPL', `BA', `CAT', `CSCO', `CVX', `GS', `HD', `HON', `IBM', `INTC', `JNJ', `KO', `JPM', `MCD', `MMM', `MRK', `MSFT', `NKE', `PG', `TRV', `UNH', `CRM', `VZ', `V', `WBA', `WMT', `DIS', `DOW'.

The state of the stock market MDP is the perception of the stock market, including the open/close price of each stock, the current asset, and some technical indices \cite{liu2021finrl}. The action has the same dimension as the number of stocks where each dimension represents the amount of buying/selling the corresponding stock. The detailed formulation of the MDP can be found in FinRL \cite{liu2021finrl,liu2022finrl}.

The turbulence index is a technical index of stock market and is included as a dimension of the state \cite{liu2021finrl,liu2022finrl}. The function $p_2$ is defined as the amount of buying the stocks whose turbulence index is larger than the turbulence threshold. Therefore, the more the target investor buys the stocks whose turbulence index is larger than the turbulence threshold, the larger $p_2$ will be and thus the smaller reward the target investor will receive.

\textbf{Discussion on the experiment results}. In Figure \ref{fig:stockmarket}, MERIT-IRL can achieve the similar cumulative reward with the expert when only the first $60\%$ of the trajectory is observed while IT-IRL can achieve performance close to the expert after $t=220$. This shows that the meta-regularization can help imitate the expert faster. In contrast, Naive MERIT-IRL and Naive IT-IRL barely improves because the naive reward update method is flawed. Intuitively, the reward update mechanism of these two baselines are myopic as explained in Subsection \ref{subsec: proposed algorithm}. Theoretically, the gradients $g_t''$ of Naive MERIT-IRL and $g_t'''$ of Naive IT-IRL are biased estimate of \eqref{eq: gradient} even if $\pi_t$ approaches $\pi_{\theta_t}$ since \eqref{eq: gradient} includes the trajectory suffix ($i>t$) terms while $g_t''$ and $g_t'''$ only include the trajectory prefix ($i\leq t$) terms.

The last row in Table \ref{tab: experiment results} shows the final results of the algorithms. We can see that MERIT-IRL achieves much better performance than the other in-trajectory learning baselines (i.e., IT-IRL, Naive MERIT-IRL, and Naive IT-IRL). MERIT-IRL achieves comparable performance with Hindsight and the expert. Note that it is not expected that MERIT-IRL outperforms Hindsight since Hindsight has the complete expert trajectory to learn.

Note that we do not need features to help learn the reward function. Even if features can be learned \cite{wu2024switchtab,chen2023recontab,Sunlearning2019}, the extra requirement of needing features can be impractical in various scenarios.

In this paper, we demonstrate that IRL can work for stock market and simulated robots (MuJoCo). As machine learning has been applied to many real-world problems, including medicare \cite{dang2024,chen2019claims,dang10653353,zhang2024optimizationapplicationcloudbaseddeep,li2024predicting}, computer vision \cite{qiaomodelling,Ma_2024,yang2024research,zhang2024seppo,lan2023communication}, aerial craft \cite{xiang2022comfort,du2024bayesian,xiang2022dynamic,sun2023hmaac,xiang2023hybrid,mo2024dral,xiang2023landing}, natural language processing \cite{10.1145/3404835.3462975,jin2023visual,jin2024apeer,zhang2024ratt,zhang2024dynamic,zhang2024tfwt,jin2024online,xubo_2024_12684615,liu2024contemporary,jin2025two,jin2024learning}, rescue \cite{10621492}, robot \cite{xiang2024imitation,li2024deep,xiang2024learning,10621513,li2024optimizing,jin2024learning,qiao2024robust,su2022mixed,lan2024asynchronous,lan2024improved}, security \cite{luo2023aq2pnn}, wireless network \cite{huang2023adversarial,huang2024adversarial}, navigation \cite{qiu2022programmatic,qiu2024instructing,cui2024reward} education \cite{wang2024machine}, graph \cite{liu2024graphsnapshot,dong2024dynamic}, finance \cite{ke2024tail,ke2025detection,yu2024identifying}, and AR \cite{huang2024ar,song2024looking,kang20216}. It is expected that IRL can also work for more real-world problems.

\section{Potential negative societal impact}\label{sec: social impact}
Since MERIT-IRL can infer the reward function of the expert, potential negative societal impact may occur when the learner is malicious. Take the stock market experiment as an example, private information like preferences or habits of the investors may be leaked by using MERIT-IRL. To avoid this situation, the investors needs to take additional strategies such as protecting its investment data from unsecure resources.

\section{Limitations} \label{sec: limitations}
From the objective \eqref{eq: upper problem}, we can see that the goal of MERIT-IRL is to align with the expert demonstration, i.e., finding a reward function such that its corresponding policy makes the expert trajectory most likely. An ideal case is that we can also directly quantify the reward learning performance and study the reward identifiability issue. Thus, a future work is to study the reward identifiability issue in the context of in-trajectory IRL.

%%%%%%%%%%%%%%%%%%%%%%%%%%%%%%%%%%%%%%%%%%%%%%%%%%%%%%%%%%%%

\newpage
\section*{NeurIPS Paper Checklist}

\begin{enumerate}

\item {\bf Claims}
    \item[] Question: Do the main claims made in the abstract and introduction accurately reflect the paper's contributions and scope?
    \item[] Answer: \answerYes{} % Replace by \answerYes{}, \answerNo{}, or \answerNA{}.
    \item[] Justification: The abstract summarizes our contributions, and the introduction has a ``contribution statement" part which elaborates our contributions and mentions the consistency with theoretical and experiment results. The assumptions and limitations are included in ``theoretical analysis" (Subsection \ref{subsec: theoretical analysis}) and Appendix \ref{sec: limitations}.
    \item[] Guidelines:
    \begin{itemize}
        \item The answer NA means that the abstract and introduction do not include the claims made in the paper.
        \item The abstract and/or introduction should clearly state the claims made, including the contributions made in the paper and important assumptions and limitations. A No or NA answer to this question will not be perceived well by the reviewers. 
        \item The claims made should match theoretical and experimental results, and reflect how much the results can be expected to generalize to other settings. 
        \item It is fine to include aspirational goals as motivation as long as it is clear that these goals are not attained by the paper. 
    \end{itemize}

\item {\bf Limitations}
    \item[] Question: Does the paper discuss the limitations of the work performed by the authors?
    \item[] Answer: \answerYes{} % Replace by \answerYes{}, \answerNo{}, or \answerNA{}.
    \item[] Justification: The limitation is mentioned in Appendix \ref{sec: limitations}. Under the assumptions, i.e., Assumptions \ref{ass: parametric model constants} and \ref{ass: ergodicity}, we have a justification of either the assumption is widely used in literature or the assumption can be satisfied by real scenarios.
    \item[] Guidelines:
    \begin{itemize}
        \item The answer NA means that the paper has no limitation while the answer No means that the paper has limitations, but those are not discussed in the paper. 
        \item The authors are encouraged to create a separate "Limitations" section in their paper.
        \item The paper should point out any strong assumptions and how robust the results are to violations of these assumptions (e.g., independence assumptions, noiseless settings, model well-specification, asymptotic approximations only holding locally). The authors should reflect on how these assumptions might be violated in practice and what the implications would be.
        \item The authors should reflect on the scope of the claims made, e.g., if the approach was only tested on a few datasets or with a few runs. In general, empirical results often depend on implicit assumptions, which should be articulated.
        \item The authors should reflect on the factors that influence the performance of the approach. For example, a facial recognition algorithm may perform poorly when image resolution is low or images are taken in low lighting. Or a speech-to-text system might not be used reliably to provide closed captions for online lectures because it fails to handle technical jargon.
        \item The authors should discuss the computational efficiency of the proposed algorithms and how they scale with dataset size.
        \item If applicable, the authors should discuss possible limitations of their approach to address problems of privacy and fairness.
        \item While the authors might fear that complete honesty about limitations might be used by reviewers as grounds for rejection, a worse outcome might be that reviewers discover limitations that aren't acknowledged in the paper. The authors should use their best judgment and recognize that individual actions in favor of transparency play an important role in developing norms that preserve the integrity of the community. Reviewers will be specifically instructed to not penalize honesty concerning limitations.
    \end{itemize}

\item {\bf Theory Assumptions and Proofs}
    \item[] Question: For each theoretical result, does the paper provide the full set of assumptions and a complete (and correct) proof?
    \item[] Answer: \answerYes{} % Replace by \answerYes{}, \answerNo{}, or \answerNA{}.
    \item[] Justification: We clearly state our assumptions in Assumptions \ref{ass: parametric model constants} and \ref{ass: ergodicity}. The theoretical statements are also clearly stated in Subsection \ref{subsec: theoretical analysis} and the correct proof is included in Appendix \ref{sec: proof}. All the assumptions and theoretical statements are numbered and cross-referenced.
    \item[] Guidelines:
    \begin{itemize}
        \item The answer NA means that the paper does not include theoretical results. 
        \item All the theorems, formulas, and proofs in the paper should be numbered and cross-referenced.
        \item All assumptions should be clearly stated or referenced in the statement of any theorems.
        \item The proofs can either appear in the main paper or the supplemental material, but if they appear in the supplemental material, the authors are encouraged to provide a short proof sketch to provide intuition. 
        \item Inversely, any informal proof provided in the core of the paper should be complemented by formal proofs provided in appendix or supplemental material.
        \item Theorems and Lemmas that the proof relies upon should be properly referenced. 
    \end{itemize}

    \item {\bf Experimental Result Reproducibility}
    \item[] Question: Does the paper fully disclose all the information needed to reproduce the main experimental results of the paper to the extent that it affects the main claims and/or conclusions of the paper (regardless of whether the code and data are provided or not)?
    \item[] Answer: \answerYes{} % Replace by \answerYes{}, \answerNo{}, or \answerNA{}.
    \item[] Justification: This paper provides pseudocode and elaborates each step of the algorithm in Subsection \ref{subsec: proposed algorithm} and Appendix \ref{sec: meta-regularization}. We also include the experiment details in Appendix \ref{sec: experiment details}.
    \item[] Guidelines:
    \begin{itemize}
        \item The answer NA means that the paper does not include experiments.
        \item If the paper includes experiments, a No answer to this question will not be perceived well by the reviewers: Making the paper reproducible is important, regardless of whether the code and data are provided or not.
        \item If the contribution is a dataset and/or model, the authors should describe the steps taken to make their results reproducible or verifiable. 
        \item Depending on the contribution, reproducibility can be accomplished in various ways. For example, if the contribution is a novel architecture, describing the architecture fully might suffice, or if the contribution is a specific model and empirical evaluation, it may be necessary to either make it possible for others to replicate the model with the same dataset, or provide access to the model. In general. releasing code and data is often one good way to accomplish this, but reproducibility can also be provided via detailed instructions for how to replicate the results, access to a hosted model (e.g., in the case of a large language model), releasing of a model checkpoint, or other means that are appropriate to the research performed.
        \item While NeurIPS does not require releasing code, the conference does require all submissions to provide some reasonable avenue for reproducibility, which may depend on the nature of the contribution. For example
        \begin{enumerate}
            \item If the contribution is primarily a new algorithm, the paper should make it clear how to reproduce that algorithm.
            \item If the contribution is primarily a new model architecture, the paper should describe the architecture clearly and fully.
            \item If the contribution is a new model (e.g., a large language model), then there should either be a way to access this model for reproducing the results or a way to reproduce the model (e.g., with an open-source dataset or instructions for how to construct the dataset).
            \item We recognize that reproducibility may be tricky in some cases, in which case authors are welcome to describe the particular way they provide for reproducibility. In the case of closed-source models, it may be that access to the model is limited in some way (e.g., to registered users), but it should be possible for other researchers to have some path to reproducing or verifying the results.
        \end{enumerate}
    \end{itemize}

\item {\bf Open access to data and code}
    \item[] Question: Does the paper provide open access to the data and code, with sufficient instructions to faithfully reproduce the main experimental results, as described in supplemental material?
    \item[] Answer: \answerYes{} % Replace by \answerYes{}, \answerNo{}, or \answerNA{}.
    \item[] Justification: The code is submitted in the supplementary materials and we include a document in the code folder to describe how to run the code.
    \item[] Guidelines:
    \begin{itemize}
        \item The answer NA means that paper does not include experiments requiring code.
        \item Please see the NeurIPS code and data submission guidelines (\url{https://nips.cc/public/guides/CodeSubmissionPolicy}) for more details.
        \item While we encourage the release of code and data, we understand that this might not be possible, so “No” is an acceptable answer. Papers cannot be rejected simply for not including code, unless this is central to the contribution (e.g., for a new open-source benchmark).
        \item The instructions should contain the exact command and environment needed to run to reproduce the results. See the NeurIPS code and data submission guidelines (\url{https://nips.cc/public/guides/CodeSubmissionPolicy}) for more details.
        \item The authors should provide instructions on data access and preparation, including how to access the raw data, preprocessed data, intermediate data, and generated data, etc.
        \item The authors should provide scripts to reproduce all experimental results for the new proposed method and baselines. If only a subset of experiments are reproducible, they should state which ones are omitted from the script and why.
        \item At submission time, to preserve anonymity, the authors should release anonymized versions (if applicable).
        \item Providing as much information as possible in supplemental material (appended to the paper) is recommended, but including URLs to data and code is permitted.
    \end{itemize}

\item {\bf Experimental Setting/Details}
    \item[] Question: Does the paper specify all the training and test details (e.g., data splits, hyperparameters, how they were chosen, type of optimizer, etc.) necessary to understand the results?
    \item[] Answer: \answerYes{} % Replace by \answerYes{}, \answerNo{}, or \answerNA{}.
    \item[] Justification: Appendix \ref{sec: experiment details} includes the experiment details, and we also submit the code.
    \item[] Guidelines:
    \begin{itemize}
        \item The answer NA means that the paper does not include experiments.
        \item The experimental setting should be presented in the core of the paper to a level of detail that is necessary to appreciate the results and make sense of them.
        \item The full details can be provided either with the code, in appendix, or as supplemental material.
    \end{itemize}

\item {\bf Experiment Statistical Significance}
    \item[] Question: Does the paper report error bars suitably and correctly defined or other appropriate information about the statistical significance of the experiments?
    \item[] Answer: \answerYes{} % Replace by \answerYes{}, \answerNo{}, or \answerNA{}.
    \item[] Justification: In the experiment results (i.e., Table \ref{tab: experiment results}, Table \ref{table: reward comparison PCC}, Table \ref{table: reward comparison SCC}), we include both the mean and standard deviation. In Figure \ref{fig: in-trajectory performance comparison}, we also plot the mean and standard deviation.
    \item[] Guidelines:
    \begin{itemize}
        \item The answer NA means that the paper does not include experiments.
        \item The authors should answer "Yes" if the results are accompanied by error bars, confidence intervals, or statistical significance tests, at least for the experiments that support the main claims of the paper.
        \item The factors of variability that the error bars are capturing should be clearly stated (for example, train/test split, initialization, random drawing of some parameter, or overall run with given experimental conditions).
        \item The method for calculating the error bars should be explained (closed form formula, call to a library function, bootstrap, etc.)
        \item The assumptions made should be given (e.g., Normally distributed errors).
        \item It should be clear whether the error bar is the standard deviation or the standard error of the mean.
        \item It is OK to report 1-sigma error bars, but one should state it. The authors should preferably report a 2-sigma error bar than state that they have a 96\% CI, if the hypothesis of Normality of errors is not verified.
        \item For asymmetric distributions, the authors should be careful not to show in tables or figures symmetric error bars that would yield results that are out of range (e.g. negative error rates).
        \item If error bars are reported in tables or plots, The authors should explain in the text how they were calculated and reference the corresponding figures or tables in the text.
    \end{itemize}

\item {\bf Experiments Compute Resources}
    \item[] Question: For each experiment, does the paper provide sufficient information on the computer resources (type of compute workers, memory, time of execution) needed to reproduce the experiments?
    \item[] Answer: \answerYes{} % Replace by \answerYes{}, \answerNo{}, or \answerNA{}.
    \item[] Justification: We include the type of operating system, CPU, and RAM used in the first paragraph in Appendix \ref{sec: experiment details}.
    \item[] Guidelines:
    \begin{itemize}
        \item The answer NA means that the paper does not include experiments.
        \item The paper should indicate the type of compute workers CPU or GPU, internal cluster, or cloud provider, including relevant memory and storage.
        \item The paper should provide the amount of compute required for each of the individual experimental runs as well as estimate the total compute. 
        \item The paper should disclose whether the full research project required more compute than the experiments reported in the paper (e.g., preliminary or failed experiments that didn't make it into the paper). 
    \end{itemize}
    
\item {\bf Code Of Ethics}
    \item[] Question: Does the research conducted in the paper conform, in every respect, with the NeurIPS Code of Ethics \url{https://neurips.cc/public/EthicsGuidelines}?
    \item[] Answer: \answerYes{} % Replace by \answerYes{}, \answerNo{}, or \answerNA{}.
    \item[] Justification: This paper follows NeurIPS Code of Ethnics.
    \item[] Guidelines:
    \begin{itemize}
        \item The answer NA means that the authors have not reviewed the NeurIPS Code of Ethics.
        \item If the authors answer No, they should explain the special circumstances that require a deviation from the Code of Ethics.
        \item The authors should make sure to preserve anonymity (e.g., if there is a special consideration due to laws or regulations in their jurisdiction).
    \end{itemize}

\item {\bf Broader Impacts}
    \item[] Question: Does the paper discuss both potential positive societal impacts and negative societal impacts of the work performed?
    \item[] Answer:\answerYes{} % Replace by \answerYes{}, \answerNo{}, or \answerNA{}.
    \item[] Justification: We discuss the potential negative societal impact in Appendix \ref{sec: social impact}.
    \item[] Guidelines:
    \begin{itemize}
        \item The answer NA means that there is no societal impact of the work performed.
        \item If the authors answer NA or No, they should explain why their work has no societal impact or why the paper does not address societal impact.
        \item Examples of negative societal impacts include potential malicious or unintended uses (e.g., disinformation, generating fake profiles, surveillance), fairness considerations (e.g., deployment of technologies that could make decisions that unfairly impact specific groups), privacy considerations, and security considerations.
        \item The conference expects that many papers will be foundational research and not tied to particular applications, let alone deployments. However, if there is a direct path to any negative applications, the authors should point it out. For example, it is legitimate to point out that an improvement in the quality of generative models could be used to generate deepfakes for disinformation. On the other hand, it is not needed to point out that a generic algorithm for optimizing neural networks could enable people to train models that generate Deepfakes faster.
        \item The authors should consider possible harms that could arise when the technology is being used as intended and functioning correctly, harms that could arise when the technology is being used as intended but gives incorrect results, and harms following from (intentional or unintentional) misuse of the technology.
        \item If there are negative societal impacts, the authors could also discuss possible mitigation strategies (e.g., gated release of models, providing defenses in addition to attacks, mechanisms for monitoring misuse, mechanisms to monitor how a system learns from feedback over time, improving the efficiency and accessibility of ML).
    \end{itemize}
    
\item {\bf Safeguards}
    \item[] Question: Does the paper describe safeguards that have been put in place for responsible release of data or models that have a high risk for misuse (e.g., pretrained language models, image generators, or scraped datasets)?
    \item[] Answer: \answerNA{} % Replace by \answerYes{}, \answerNo{}, or \answerNA{}.
    \item[] Justification: This paper poses no such risks because we do not have data nor model to release.
    \item[] Guidelines:
    \begin{itemize}
        \item The answer NA means that the paper poses no such risks.
        \item Released models that have a high risk for misuse or dual-use should be released with necessary safeguards to allow for controlled use of the model, for example by requiring that users adhere to usage guidelines or restrictions to access the model or implementing safety filters. 
        \item Datasets that have been scraped from the Internet could pose safety risks. The authors should describe how they avoided releasing unsafe images.
        \item We recognize that providing effective safeguards is challenging, and many papers do not require this, but we encourage authors to take this into account and make a best faith effort.
    \end{itemize}

\item {\bf Licenses for existing assets}
    \item[] Question: Are the creators or original owners of assets (e.g., code, data, models), used in the paper, properly credited and are the license and terms of use explicitly mentioned and properly respected?
    \item[] Answer: \answerYes{} % Replace by \answerYes{}, \answerNo{}, or \answerNA{}.
    \item[] Justification: We use the SAC code from a Github repository, and we clearly state it at the top of that code script.
    \item[] Guidelines:
    \begin{itemize}
        \item The answer NA means that the paper does not use existing assets.
        \item The authors should cite the original paper that produced the code package or dataset.
        \item The authors should state which version of the asset is used and, if possible, include a URL.
        \item The name of the license (e.g., CC-BY 4.0) should be included for each asset.
        \item For scraped data from a particular source (e.g., website), the copyright and terms of service of that source should be provided.
        \item If assets are released, the license, copyright information, and terms of use in the package should be provided. For popular datasets, \url{paperswithcode.com/datasets} has curated licenses for some datasets. Their licensing guide can help determine the license of a dataset.
        \item For existing datasets that are re-packaged, both the original license and the license of the derived asset (if it has changed) should be provided.
        \item If this information is not available online, the authors are encouraged to reach out to the asset's creators.
    \end{itemize}

\item {\bf New Assets}
    \item[] Question: Are new assets introduced in the paper well documented and is the documentation provided alongside the assets?
    \item[] Answer: \answerYes{} % Replace by \answerYes{}, \answerNo{}, or \answerNA{}.
    \item[] Justification: The submitted code can be considered as an asset, and we provide a file along with the code to document the code.
    \item[] Guidelines:
    \begin{itemize}
        \item The answer NA means that the paper does not release new assets.
        \item Researchers should communicate the details of the dataset/code/model as part of their submissions via structured templates. This includes details about training, license, limitations, etc. 
        \item The paper should discuss whether and how consent was obtained from people whose asset is used.
        \item At submission time, remember to anonymize your assets (if applicable). You can either create an anonymized URL or include an anonymized zip file.
    \end{itemize}

\item {\bf Crowdsourcing and Research with Human Subjects}
    \item[] Question: For crowdsourcing experiments and research with human subjects, does the paper include the full text of instructions given to participants and screenshots, if applicable, as well as details about compensation (if any)? 
    \item[] Answer:  \answerNA{} % Replace by \answerYes{}, \answerNo{}, or \answerNA{}.
    \item[] Justification: The paper does not involve crowdsourcing nor research with human subjects.
    \item[] Guidelines:
    \begin{itemize}
        \item The answer NA means that the paper does not involve crowdsourcing nor research with human subjects.
        \item Including this information in the supplemental material is fine, but if the main contribution of the paper involves human subjects, then as much detail as possible should be included in the main paper. 
        \item According to the NeurIPS Code of Ethics, workers involved in data collection, curation, or other labor should be paid at least the minimum wage in the country of the data collector. 
    \end{itemize}

\item {\bf Institutional Review Board (IRB) Approvals or Equivalent for Research with Human Subjects}
    \item[] Question: Does the paper describe potential risks incurred by study participants, whether such risks were disclosed to the subjects, and whether Institutional Review Board (IRB) approvals (or an equivalent approval/review based on the requirements of your country or institution) were obtained?
    \item[] Answer: \answerNA{} % Replace by \answerYes{}, \answerNo{}, or \answerNA{}.
    \item[] Justification: The paper does not involve crowdsourcing nor research with human subjects.
    \item[] Guidelines:
    \begin{itemize}
        \item The answer NA means that the paper does not involve crowdsourcing nor research with human subjects.
        \item Depending on the country in which research is conducted, IRB approval (or equivalent) may be required for any human subjects research. If you obtained IRB approval, you should clearly state this in the paper. 
        \item We recognize that the procedures for this may vary significantly between institutions and locations, and we expect authors to adhere to the NeurIPS Code of Ethics and the guidelines for their institution. 
        \item For initial submissions, do not include any information that would break anonymity (if applicable), such as the institution conducting the review.
    \end{itemize}

\end{enumerate}

\end{document}